\PassOptionsToPackage{shortlabels}{enumitem}
\documentclass[11pt, a4paper]{sakana_format}

\usepackage{orcidlink}

\numberwithin{equation}{section}
\newtheorem{proposition}{Proposition}
\definecolor{SakanaRed}{RGB}{225,6,0}
\definecolor{SakanaBlack}{RGB}{30,30,30}
\colorlet{black}{SakanaBlack}
\AtBeginDocument{\color{SakanaBlack}}
\def\increase#1{\textcolor{codegreen}{~(#1 $\uparrow$)}}
\def\decrease#1{\textcolor{SakanaRed}{~(-#1 $\downarrow$)}}

\DeclareRobustCommand{\Benchmark}{Percept-Lens\xspace}
\DeclareRobustCommand{\TotalImagesFull}{7.1 million\xspace}

\DeclareRobustCommand{\TotalDatasets}{39\xspace}

\DeclareRobustCommand{\TotalMixedDatasets}{24\xspace}

\DeclareRobustCommand{\ProGAN}{\texttt{ProGAN}\xspace}

\DeclareRobustCommand{\CNNSpot}{\texttt{CNNSpot}\xspace}
\DeclareRobustCommand{\GenImage}{\texttt{GenImage}\xspace}
\DeclareRobustCommand{\DRCT}{\texttt{DRCT-2M}\xspace}
\DeclareRobustCommand{\ELSA}{\texttt{ELSA-D3}\xspace}
\DeclareRobustCommand{\CF}{\texttt{CommunityForensics}\xspace}
\DeclareRobustCommand{\ImageNet}{\texttt{ImageNet-1k}\xspace}
\DeclareRobustCommand{\COCO}{\texttt{COCO}\xspace}
\DeclareRobustCommand{\LAION}{\texttt{LAION-400M}\xspace}

\DeclareRobustCommand{\UnivFD}{\texttt{UnivFD}\xspace}
\DeclareRobustCommand{\AIDE}{\texttt{AIDE}\xspace}
\DeclareRobustCommand{\Effort}{\texttt{Effort}\xspace}

\title{
    Prior-Conditioned Gaussian Discriminants for Generalizable AI-generated Image Detection
}
\correspondingauthor{Shashank Kotyan (\texttt{shashank@sakana.ai})}
\author{Shashank Kotyan}
\author{Makoto Shing}
\author{Yuki Imajuku}
\author{Rujikorn Charakorn}
\author{Tarin Clanuwat}
\affil{Sakana AI, Tokyo, Japan}
\keywords{AI-generated image detection; distribution shift; Gaussian discriminant analysis; few-shot transfer; transfer learning}

\begin{document}

\begin{abstract}
Diffusion-based generators have made synthetic images ubiquitous, but detectors often fail under simultaneous shifts in generator, prompt/style, and source-domain.
We study AI-generated image detection as a transfer system described by training prior, frozen encoder feature space, and decision rule, and ask when classifier head training adds value beyond what is already separable in modern features.
As a controlled diagnostic, we fit a prior-conditioned Gaussian discriminant ladder: closed-form heads built from first- and second-order feature statistics under nested covariance assumptions.
On \Benchmark, a unified protocol over \TotalDatasets public datasets (\TotalImagesFull images), the best rung is frequently competitive with, and sometimes exceeds, released AI-generated image detector heads when matched on both prior and encoder.
We further quantify strong sensitivity to the training prior, data-efficiency of moment-based heads, and representation dependence of Gaussian shift metrics, motivating (prior, encoder, head)-level reporting and stronger analytical baselines for AIGI transfer.

\medskip
\noindent\textit{Keywords:} AI-generated image detection; distribution shift; Gaussian discriminant analysis; few-shot transfer; transfer learning.
\end{abstract}

\maketitle
\tableofcontents
\clearpage

\section{Introduction}
\label{sec:intro}

\begin{figure*}[!t]
    \centering    \includegraphics[page=1,width=\textwidth]{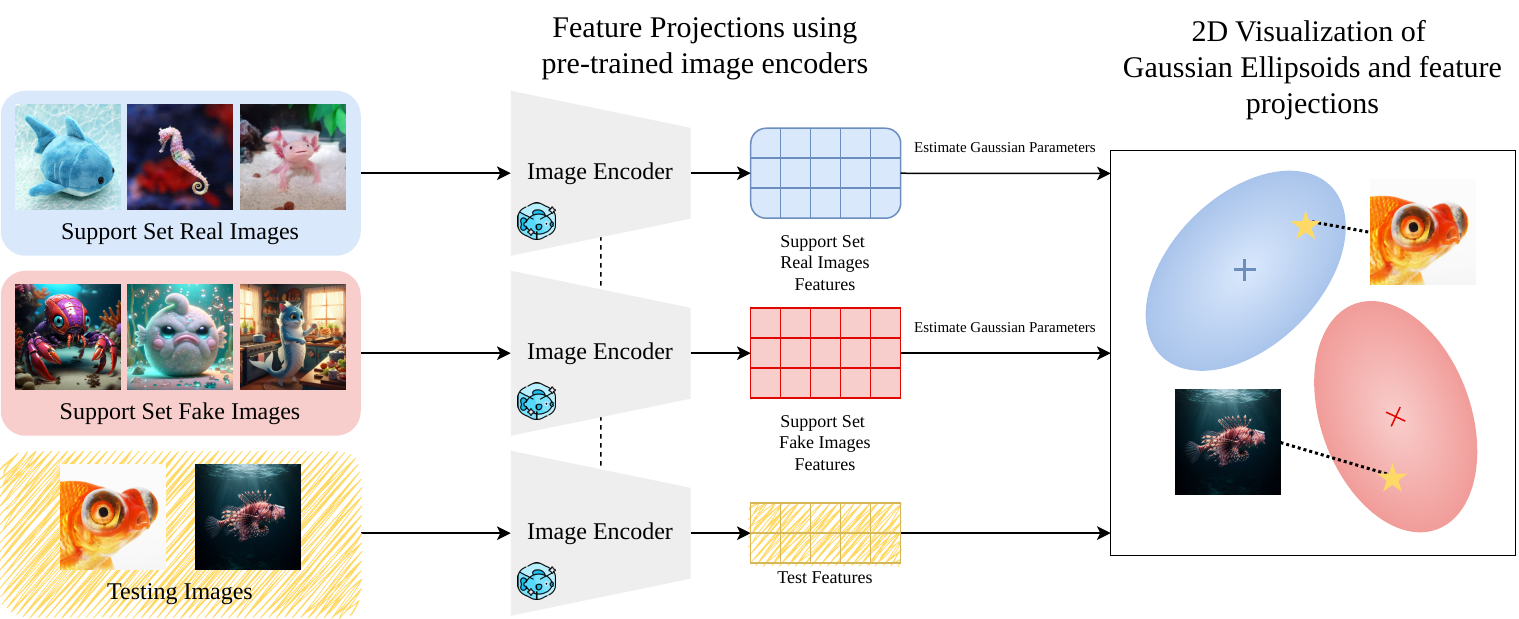}
    \caption{
    \textbf{Training-prior-conditioned Gaussian discriminants in a frozen feature space.}
    A frozen image encoder $\phi$ maps images $x$ to embeddings $z$.
    Given a (possibly few-shot) support set $\mathcal{S}$ drawn from a \emph{training prior}, we estimate class-conditional moments $(\mu_c,\Sigma_c)$ and instantiate a ladder of \emph{classical} closed-form heads under nested covariance assumptions.
    By keeping the encoder and prior fixed, the ladder isolates what the decision rule contributes.
    }
    \label{fig:analytic_pipeline}
\end{figure*}

Diffusion-based generative models \citep{ramesh2021zeroshot,rombach2022high,midjourney2022midjourney,saharia2022photorealistic} have made high-quality AI-generated imagery widely accessible.
These models have practical benefits, but they also complicate trust in visual media and enable misuse of synthetic content \citep{daniel2024how,explosionnews,pentagonnews}.
As a result, AI-generated image (AIGI) detection has become a core recognition problem with direct implications for safety and privacy.

Modern detectors must transfer across shifts in \emph{both} generator family and image-source domain.
In practice these shifts co-occur (new generators, new prompting styles, new platforms, and post-processing), and performance can collapse even when in-distribution metrics look strong.
Moreover, collecting large-scale labeled data from newly emerging generators is often expensive or infeasible.
Together, these conditions make AIGI detection a concrete \emph{transfer / low-shot recognition} problem, where a deployed detector must operate under joint shift and sometimes with only a small support set for calibration.

Existing evaluations \citep{zhu2023genimage,chen2024drct,baraldi2025contrasting} typically probe only part of this space, frequently reusing narrow prompt regimes and closely related sources.
Such evaluations can conflate genuine generalization with prior-specific shortcuts.
We take a stricter, system-level view.
A reported number is a property of a \emph{system} consisting of a training prior, a frozen image encoder, and a decision rule on top.
Here, \emph{training prior} denotes the data distribution induced by a public training dataset, such as \CF \citep{park2025community}; it is not a Bayesian prior over model parameters.
If generalization fails, it is not obvious whether the bottleneck is the representation, the head optimization, or the prior itself.

To isolate the decision-rule factor while keeping the prior and representation fixed, we fit \emph{training-prior-conditioned Gaussian discriminants}, which are classical closed-form heads obtained from first- and second-order statistics of frozen encoder features (\Cref{fig:analytic_pipeline}).
We organize these rules into a \emph{Gaussian discriminant ladder} spanning nested covariance assumptions (isotropic, diagonal, shared, class-specific).
Because each rung has an analytic solution, the ladder is a reproducible, hyperparameter-light baseline.
The identity of the best-performing rung also indicates which low-order feature statistics transfer across generator and domain shifts.
Although Gaussian discriminants are classical, we argue that they are underused as diagnostic baselines in AIGI detection, and we find that they can be competitive under matched (prior, encoder) conditions.

Empirically, across \TotalDatasets public test sets (\TotalImagesFull images), at least one rung is often competitive with trained heads under \emph{matched} priors and frozen encoders.
As a retrospective diagnostic upper bound, the best rung can even surpass released heads in several settings.
Our decomposition is not fully symmetric across the three factors.
The most direct intervention is a matched head replacement, while the encoder and prior experiments are controlled one-factor sweeps.
We use the procedure as a source-side audit.
If a trained head does not outperform the best rule in the ladder on the same final representation and support prior, classifier training is unlikely to be the source of OOD transfer gains.
If it succeeds, it signals structure beyond second moments.

\paragraph{Contributions.}

~

\noindent\textbf{Prior-conditioned Gaussian discriminant ladder (diagnostic baseline).}
We instantiate a ladder of \emph{classical} closed-form Gaussian heads on frozen encoder features, varying covariance assumptions (isotropic $\rightarrow$ diagonal $\rightarrow$ shared full $\rightarrow$ class-specific full).
The best-transfer rung provides an interpretable diagnostic of which low-order feature statistics are preserved under shift.

\noindent\textbf{Post-hoc controlled evaluation under matched (prior, encoder) conditions.}
On \Benchmark, a unified evaluation \emph{protocol} built entirely from existing public datasets, we compare released AI-generated image detector heads against the ladder using the same training prior and the same frozen encoder features.
This isolates when trained heads add value beyond low-order geometry.

\noindent\textbf{Empirical drivers of generalization under joint shift.}
We quantify sensitivity to the training prior under a fixed encoder, show that moment-based heads can adapt with few labeled samples on strong frozen encoders, and demonstrate that Gaussian shift metrics are strongly representation-conditioned.

\section{Related Works}

\subsection{AI-generated Image (AIGI) detection under Distribution Shift}

The literature on AIGI detection spans artifact-driven detectors \citep{yang2019exposing,he2021forgerynet,wang2020cnngenerated} and representation-based detectors \citep{ojha2023universal,baraldi2025contrasting,park2025community}.
Early work focused on GAN-era artifacts and domain-specific settings such as face forgeries \citep{yang2019exposing,he2021forgerynet,wang2020cnngenerated}.
Diffusion models increased photorealism and prompt diversity, and exposed a persistent failure mode: detectors trained on narrow synthetic prior exploit shortcuts that do not transfer across generators or domains \citep{zhu2023genimage,chen2024drct,baraldi2025contrasting,xiao2025are}.
\citet{chen2024drct} and \citet{park2025community} broaden training priors by scaling generator coverage.
\citet{ojha2023universal}, \citet{baraldi2025contrasting}, and \citet{zhou2025brought} instead use foundation-model features and lightweight heads.
\citet{yang_your_2026} adjust detector logits or thresholds after training under shift.
Our intervention instead replaces the classifier in feature space while matching the prior and encoder.
Thus our emphasis is not another trained detector, but what is already separable in frozen encoder space and what the trained head adds under the \emph{same} prior.
This is closely related to the observation that many OOD detectors can be expressed as generative scoring rules on features, but the implications for AIGI detection under joint shift have not been carefully isolated.

\subsection{Analytical Gaussian Discriminants and Feature-Space Scoring}
Gaussian discriminant analysis and Mahalanobis scoring are classical tools for uncertainty and OOD detection \citep{lee2018simple}.
\citet{wu2025fewshot} connect few-shot prototypical inference to Euclidean nearest-centroid rules, while \citet{yan2025orthogonal} explore orthogonal subspace decompositions that preserve pretrained structure under adaptation.

We build on these ideas, but use them as a \emph{diagnostic ladder}: isotropic, diagonal, tied, and class-specific covariance assumptions correspond to increasing geometric expressivity.
The decision rules themselves are not new.
Our contribution is to condition them on explicit training priors and use the ladder covariance model to interpret what transfers, and what does not, across generator and domain shifts.

\section{Preliminaries}
\label{sec:preliminaries}
\subsection{Problem setup}
\label{sec:problem}
We study binary detection of AI-generated images.
Given a training set $\mathcal{D}_\text{train} = \{(x_i, y_i)\}_{i=1}^{n}$, where $x_i$ are images and $y_i \in \{0,1\}$ are labels for real ($y=0$) and synthetic ($y=1$) images, the goal of AIGI detection is to learn a classifier that can accurately predict the label of a previously unseen image.

\subsection{Feature Extraction and Trained Classifiers}
\label{sec:feature}
We use a frozen image encoder $\phi: \mathcal{X} \rightarrow \mathbb{R}^D$ from a pre-trained foundation model to map each image $x$ to a $D$-dimensional feature vector $z = \phi(x)$.
A standard approach trains a discriminative head, such as a linear probe, on these features by minimizing a loss function (e.g., binary cross-entropy) via gradient descent: $\min_{w,b} \sum_{i=1}^n \text{CE}(\sigma(w^\top z_i + b), y_i)$
where $\sigma(\cdot)$ is the sigmoid activation function and $w \in \mathbb{R}^D$ and $b \in \mathbb{R}$ define a separating hyperplane.

More complex models, such as multi-layer perceptrons (MLPs), can learn non-linear decision boundaries.
This process iteratively searches for an optimal boundary based on the training data.
Although powerful, these methods can overfit to artifacts specific to the generative models in the training set, potentially limiting generalization to unseen generative processes \citep{yan2025orthogonal,park2025community}.

We compare (i) \emph{trained} discriminative detectors that optimize a supervised objective on a given prior and (ii) \emph{Gaussian discriminant rule} classifiers fitted on frozen features (\Cref{sec:analytical}).
The audit applies to released systems that expose the feature vector used by a classifier, including frequency/statistical detectors \citep{frank2020leveraging,yan2024sanity}, reconstruction-based detectors \citep{cazenavette2024fakeinversion,chen2024drct}, LoRA-adapted detectors \citep{yan2025orthogonal}, or end-to-end detectors \citep{wang2020cnngenerated,baraldi2025contrasting} after their final representation is fixed.
Methods that expose only a scalar anomaly score can still be evaluated by \Benchmark as complete detectors, but their internal head cannot be replaced by the ladder.
The split is diagnostic: if a trained head underperforms a closed-form rule under OOD shift, the trained head's decision surface and/or training prior is implicated rather than only the encoder capacity.

\section{Prior-Conditioned Gaussian Discriminants as Few-Shot Discriminant Heads}
\label{sec:analytical}

We use Gaussian class-conditional models as a \emph{controlled diagnostic} of what information is already separable in a frozen feature space.
Motivated by classical generative classification and recent analyses of contrastive representations using mixture models \citep{bansal2025understanding}, we approximate

\begin{equation}
    p(z \mid y=c) ~\approx~ \mathcal{N}(\mu_c, \Sigma_c), \quad c \in \{0,1\}.
\end{equation}

Given a support set $\mathcal{S}$ drawn from a training prior, $\mathcal{S} \subseteq \mathcal{D}_\text{train}$, we estimate $(\mu_c, \Sigma_c)$ by sample moments.
We also define the pooled covariance $\Sigma_p = \frac{(n_0-1)\Sigma_0 + (n_1-1)\Sigma_1}{n_0+n_1-2}$, used when assuming a shared covariance structure.
In high-dimensional embeddings, covariance inversion can be ill-conditioned.
We therefore use standard regularization (diagonal loading and shrinkage) when computing $\Sigma^{-1}$ and log-determinants (Appendix~\Cref{sec:extended_setup}).
The Gaussian assumption is diagnostic rather than literal.
Appendix \Cref{tab:univariate_normality_tests,tab:multivariate_normality_tests} shows mostly near-Gaussian marginal summaries on PE-Core features, but also clear failures such as \texttt{FourierSpectrumDiscrepancies} \citep{dzanic2020fourier} and \texttt{DiffusionForensics}-fake \citep{wang2023dire}.
Multimodal or heavy-tailed regimes are exactly where trained non-linear heads may add value.

This yields a ladder of \emph{classical} discriminant rules in closed form, and the best-performing covariance assumption becomes an interpretable indicator of which feature statistics transfer across domains, bypassing iterative training.
(The same estimation extends to $N$-way $K$-shot classification. We focus on binary AIGI detection.)

In this article, we consider Cosine Nearest Centroid Matching (Cos-NCM), Gaussian Naive Bayes (GNB), Mahalanobis Nearest Centroid Matching (Mah-NCM), and Quadratic Discriminant Analysis (QDA) as Gaussian discriminants.
We also include squared Euclidean nearest-centroid matching (Euc-NCM), which mirrors the inference rule used in prototypical-network detectors \citep[e.g.][]{wu2025fewshot} but is applied directly in the frozen encoder space without metric learning.
\cref{tab:compare} provides an overview of Gaussian discriminants in analytical form, while \Cref{fig:gaussian_ladder} provides a visual explanation.

\begin{figure*}[!t]
    \centering
    \includegraphics[page=3,width=\textwidth]{LatentGeometry.pdf}
    \caption{
    \textbf{Gaussian discriminant ladder as nested covariance assumptions.}
    Each rung is a classical generative classifier obtained by restricting the class-conditional covariance: isotropic (Euc-/Cos-NCM), diagonal (GNB), shared full covariance (Mah-NCM), or class-specific full covariance (QDA).
    This nested family makes it possible to diagnose how much transferable signal is captured by first- and second-order feature statistics.
    }
    \label{fig:gaussian_ladder}
\end{figure*}

\begin{table*}[!t]
\centering
\caption{
\textbf{Gaussian discriminant ladder used as analytical baselines.}
Each rung is a generative classifier obtained by restricting the class-conditional covariance (isotropic, diagonal, homoscedastic, or heteroscedastic), which determines the geometry of the induced decision boundary.
We use this ladder to isolate head effects in a fixed frozen encoder feature space rather than to propose a new detector.
}
\label{tab:compare}
\resizebox{\linewidth}{!}{%
\begin{tabular}{@{}p{4.8cm}lllp{6cm}@{}}
\toprule
\textbf{Gaussian Discriminant} & \multicolumn{2}{c}{\textbf{Covariance Structure Assumption}} &  \textbf{Decision Boundary} & \textbf{Characteristic}\\
\Midrule
Euclidean Nearest Centroid Matching (Euc-NCM) & Isotropic & $\Sigma_{c \in \{0,1\}} = \sigma^2 I$ & Linear & Assumes spherical class distributions. \\
Cosine Nearest Centroid Matching (Cos-NCM) & Isotropic & $\Sigma_{c \in \{0,1\}} = \sigma^2 I$ & Linear & Assumes spherical class distributions; decision is based on angular proximity, suitable for normalized embeddings. \\
Gaussian Naive Bayes (GNB) & Diagonal & $\Sigma_{c \in \{0,1\}} = \text{diag}(\sigma_{c,j}^2)$  & Non-Linear & Simplest model; assumes feature independence, useful for identifying decorrelated representations. \\
Mahalanobis Nearest Centroid Matching (Mah-NCM) & Homoscedastic & $\Sigma_0 = \Sigma_1 = \Sigma_p$ & Linear & Assumes classes share the same elliptical shape. It accounts for feature correlations by whitening the space, yielding a correlation-aware linear boundary. \\
Quadratic Discriminant Analysis (QDA) & Heteroscedastic & $\Sigma_0 \neq \Sigma_1$ & Quadratic & Most general model; allows each class to have a unique elliptical shape and orientation, capturing complex separations. \\
\bottomrule
\end{tabular}%
}
\end{table*}

\paragraph{Euclidean Nearest Centroid Matching (Euc-NCM).}
Assuming an isotropic shared covariance $\Sigma_c=\sigma^2 I$, the Gaussian log-likelihood (up to additive constants) is proportional to the squared Euclidean distance to the class mean.
Prediction reduces to nearest-centroid classification in the frozen feature space, matching prototypical inference without metric learning.

\paragraph{Cosine Nearest Centroid Matching (Cos-NCM).}
When features are $\ell_2$-normalized, isotropic Gaussian scoring is monotone in cosine similarity.
Cos-NCM predicts the class whose centroid has the highest cosine similarity to the query.

\paragraph{Gaussian Naive Bayes (GNB).}
GNB assumes diagonal covariance $\Sigma_c = \mathrm{diag}(\sigma_{c,1}^2, \dots, \sigma_{c,D}^2)$, which implies conditionally independent features.
The assumption simplifies classification but ignores feature correlations.

\paragraph{Mahalanobis Nearest Centroid Matching (Mah-NCM).}
Mah-NCM assumes homoscedasticity, i.e., a shared covariance matrix for both classes $\Sigma_0 = \Sigma_1 = \Sigma_p$.
The classification decision is based on the Mahalanobis distance $(z-\mu_c)^\top\Sigma_p^{-1}(z-\mu_c)$, which accounts for the correlations of the features by transforming the feature space into one in which the pooled covariance is the identity.

\paragraph{Quadratic Discriminant Analysis (QDA).}
QDA is the most general of the four classifiers, allowing each class to have its own full-rank covariance matrix ($\Sigma_0 \neq \Sigma_1$), thus assuming heteroskedasticity.
The resulting quadratic boundary can adapt to classes with different shapes and orientations.

\subsection{Connection to classical generative classification}
\label{sec:framework}

The efficacy of these Gaussian discriminants in analytical form can be understood through the lens of Bayesian decision theory, which provides a formal basis for our geometric approach.
Under the Gaussian assumption, they approximate or realize the Bayes-optimal classifier, which minimizes the probability of error.
Full proofs are provided in the Appendix \cref{sec:proof}.

\begin{proposition}[Bayes-Optimal Classifier under Gaussian Assumption]
\label{prop:bayes_optimal}
If the class-conditional densities are Gaussian, $p(z \mid y=c) ~\approx~ \mathcal{N}(\mu_c, \Sigma_c)$, and the class priors are equal, $P(y=0)=P(y=1)=\frac{1}{2}$, the Bayes-optimal decision rule is given by Quadratic Discriminant Analysis (QDA).
\end{proposition}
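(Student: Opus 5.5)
The argument has two steps. First we show that under 0--1 loss the Bayes-optimal rule is the maximum-a-posteriori rule. Then we show that, with Gaussian class-conditionals and equal priors, the MAP rule is exactly the QDA discriminant. We read the ``$\approx$'' in the statement as equality: the proposition is about the idealised model, and the plug-in version with estimated $(\mu_c,\Sigma_c)$ only approximates it.

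\textbf{Step 1: Bayes-optimal rule is MAP.} Take any decision rule $h:\mathbb{R}^D\to\{0,1\}$. Its error probability can be written as
\[
P(h(z)\neq y)=\int \bigl(1-P(y=h(z)\mid z)\bigr)\,p(z)\,dz .
\]
This integral is minimised pointwise by choosing $h^*(z)=\arg\max_c P(y=c\mid z)$. So no rule can have smaller error than $h^*$. Ties form a set on which either choice gives the same error, so they do not affect optimality.

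\textbf{Step 2: reduce MAP to QDA.} Apply Bayes' rule: $P(y=c\mid z)\propto p(z\mid y=c)P(y=c)$. Since $P(y=0)=P(y=1)=\tfrac12$, the prior factor cancels, and $h^*$ picks the class with the larger likelihood $p(z\mid y=c)$. Because $\log$ is strictly increasing, we may compare log-likelihoods instead. Substituting the Gaussian density and dropping the class-independent constant $-\tfrac{D}{2}\log 2\pi$ leaves the discriminant
\[
g_c(z)=-\tfrac12\log\det\Sigma_c-\tfrac12(z-\mu_c)^\top\Sigma_c^{-1}(z-\mu_c).
\]
Predicting $\arg\max_c g_c(z)$ is exactly the QDA rule of Table~\ref{tab:compare}. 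Expanding $g_1(z)-g_0(z)$ gives
\[
-\tfrac12 z^\top(\Sigma_1^{-1}-\Sigma_0^{-1})z+(\Sigma_1^{-1}\mu_1-\Sigma_0^{-1}\mu_0)^\top z+\text{const}.
\]
So the decision boundary is quadratic, matching the table.

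\textbf{Corollary remarks and main obstacle.} Nothing here is technically hard; the main care is in the assumptions. We need each $\Sigma_c$ to be positive definite so that the density, the inverse and the log-determinant exist. In practice this is enforced by the regularised estimates described earlier. As a consistency check, specialise to $\Sigma_0=\Sigma_1$: the log-determinant and quadratic terms cancel, and $h^*$ becomes Mah-NCM. If additionally $\Sigma_c=\sigma^2 I$, it becomes Euc-NCM. This shows that each lower rung of the ladder is Bayes-optimal under its own restricted covariance assumption.
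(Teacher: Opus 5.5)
Your proof is correct and follows essentially the paper's route: the MAP rule, cancellation of the equal priors and the evidence, then comparison of Gaussian log-likelihoods to obtain the QDA discriminant. You also justify MAP optimality by pointwise minimisation of the conditional error, which the paper only asserts, and ``each lower rung'' in your closing remark is slightly too broad, since Cos-NCM matches the isotropic-Gaussian Bayes rule only when $\|\mu_0\|_2=\|\mu_1\|_2$, whereas Euc-NCM, GNB and Mah-NCM are indeed Bayes-optimal under their own covariance assumptions.
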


\begin{proposition}[Optimality under Homoscedasticity]
\label{prop:homoscedastic}
If the classes are additionally assumed to be homoscedastic (i.e., they share a common covariance matrix $\Sigma_0 = \Sigma_1 = \Sigma_p$), the Bayes-optimal decision rule simplifies to Mahalanobis Nearest Centroid Matching (Mah-NCM).
\end{proposition}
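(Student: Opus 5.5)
We start from Proposition~\ref{prop:bayes_optimal}: under Gaussian class-conditionals with equal priors, the Bayes-optimal rule (the MAP rule) predicts the class with the larger discriminant
\begin{equation}
g_c(z) = -\tfrac{1}{2}(z-\mu_c)^\top \Sigma_c^{-1}(z-\mu_c) - \tfrac{1}{2}\log\det \Sigma_c + \log P(y=c),
\end{equation}
after dropping the constant $-\tfrac{D}{2}\log(2\pi)$, which is common to both classes. This is the QDA rule. We then impose the homoscedasticity hypothesis $\Sigma_0=\Sigma_1=\Sigma_p$ and track which terms cancel when we compare $g_0$ and $g_1$.

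First, with a shared covariance the log-determinant term $-\tfrac12\log\det\Sigma_p$ is identical for both classes and drops out of the comparison. Second, because the priors are equal, the $\log P(y=c)=\log\tfrac12$ terms also cancel. The rule $\arg\max_c g_c(z)$ therefore reduces to $\arg\min_c (z-\mu_c)^\top\Sigma_p^{-1}(z-\mu_c)$. This is exactly the Mah-NCM rule from Section~\ref{sec:analytical}: assign $z$ to the centroid at smaller Mahalanobis distance under the pooled covariance.

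For completeness, we also expand $g_1(z)-g_0(z)$. The quadratic term $z^\top\Sigma_p^{-1}z$ cancels, leaving
\begin{equation}
(\mu_1-\mu_0)^\top\Sigma_p^{-1}z - \tfrac12\bigl(\mu_1^\top\Sigma_p^{-1}\mu_1-\mu_0^\top\Sigma_p^{-1}\mu_0\bigr).
\end{equation}
This makes the boundary linear, as stated in Table~\ref{tab:compare}, and identifies Mah-NCM with LDA. Ties occur on a hyperplane, which has measure zero under Gaussian densities, so how they are broken does not affect optimality.

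The argument is essentially mechanical, and there is no substantial obstacle. The two points needing care are that $\Sigma_p$ must be positive definite for its inverse and the density to be defined, and that in this population statement $\Sigma_p$ means the true common covariance rather than the pooled sample estimator. We will say explicitly that the optimality claim concerns the population rule, not its plug-in or regularized estimate.
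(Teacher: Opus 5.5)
Your proposal is correct and follows essentially the same route as the paper: start from the QDA log-likelihood-ratio discriminant, cancel the log-determinant (and equal-prior) terms under $\Sigma_0=\Sigma_1=\Sigma_p$, and expand the quadratic forms so that $z^\top\Sigma_p^{-1}z$ cancels, leaving the linear rule with $w=\Sigma_p^{-1}(\mu_1-\mu_0)$. Your explicit reduction to $\arg\min_c (z-\mu_c)^\top\Sigma_p^{-1}(z-\mu_c)$ matches the main-text definition of Mah-NCM a bit more directly than the paper's proof, which stops at the linear form $w^\top z+b$.
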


\begin{proposition}[Performance Stability under Bounded Drift]
\label{prop:predictive_sufficiency}
Assume a homoscedastic classifier with shared covariance $\Sigma$ and equal priors.
If the statistical drift between the training and testing distributions is bounded by $\|\mu_c^{\text{test}} - \mu_c^{\text{train}}\|_2 \le \epsilon_\mu$ and $\|\Sigma^{\text{test}} - \Sigma^{\text{train}}\|_F \le \epsilon_\Sigma$, then $|\mathrm{AUC}_{\text{test}} - \mathrm{AUC}_{\text{train}}| \le L(\epsilon_\mu + \epsilon_\Sigma)$
where $L$ is the Lipschitz constant. % depends on the spectral norm of the inverse covariance, $\|\Sigma^{-1}\|_2$.
\end{proposition}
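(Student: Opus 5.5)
We prove the bound by writing the AUC of a homoscedastic Gaussian model in closed form and then applying the mean value theorem. This needs one interpretive choice, which we state up front: \emph{AUC} means the AUC of the fixed Mah-NCM score fitted on training statistics, evaluated under the train or test Gaussian model. We also assume the quantities stay in a compact set on which $\Sigma$ is uniformly well conditioned, say $\lambda_{\min}(\Sigma)\ge\lambda_0>0$. Without such an assumption no finite $L$ can exist, so $L$ will depend on $\lambda_0$ and on bounds for $\|\mu_c\|_2$.

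\textbf{Step 1 (closed form).} By \Cref{prop:homoscedastic}, the classifier scores with the linear statistic
\[
s(z)=w^\top z,\qquad w=\Sigma_{\text{train}}^{-1}\bigl(\mu_1^{\text{train}}-\mu_0^{\text{train}}\bigr).
\]
Under a Gaussian model with means $\mu_c$ and shared covariance $\Sigma$, the score $s$ is $\mathcal{N}(w^\top\mu_c,\,w^\top\Sigma w)$ in class $c$. The AUC is the probability that a synthetic score exceeds a real one, so
\[
\mathrm{AUC}=\Phi\!\left(\frac{w^\top(\mu_1-\mu_0)}{\sqrt{2\,w^\top\Sigma w}}\right)=:F(\mu_0,\mu_1,\Sigma).
\]
On the training distribution this equals $\Phi(\Delta/\sqrt2)$, where $\Delta$ is the Mahalanobis distance between the class means. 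The comparison therefore reduces to $|F(\theta_{\text{test}})-F(\theta_{\text{train}})|$ with $w$ held fixed and $\theta=(\mu_0,\mu_1,\Sigma)$.

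\textbf{Step 2 (Lipschitz bound).} $F$ is the composition of $\Phi$, which is $1/\sqrt{2\pi}$-Lipschitz, with the ratio $g(\theta)=w^\top(\mu_1-\mu_0)/\sqrt{2w^\top\Sigma w}$. The partial derivatives of $g$ are
\[
\partial_{\mu_c}g=\pm\frac{w}{\sqrt{2w^\top\Sigma w}},\qquad
\partial_\Sigma g=-\frac{w^\top(\mu_1-\mu_0)}{2\sqrt2\,(w^\top\Sigma w)^{3/2}}\,ww^\top .
\]
We bound their norms using $w^\top\Sigma w\ge\lambda_0\|w\|^2$ and $\|ww^\top\|_F=\|w\|^2$. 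This gives
\[
\|\partial_{\mu_c}g\|_2\le\frac{1}{\sqrt{2\lambda_0}},\qquad
\|\partial_\Sigma g\|_F\le\frac{\|\mu_1-\mu_0\|_2}{2\sqrt2\,\lambda_0^{3/2}}.
\]
Both bounds are independent of $w$, since the ratio $g$ is scale invariant. Applying the mean value theorem along the segment from $\theta_{\text{train}}$ to $\theta_{\text{test}}$ gives
\[
|\Delta\mathrm{AUC}|\le\frac{1}{\sqrt{2\pi}}\Bigl(2\sup\|\partial_\mu g\|\,\epsilon_\mu+\sup\|\partial_\Sigma g\|_F\,\epsilon_\Sigma\Bigr)\le L(\epsilon_\mu+\epsilon_\Sigma).
\]
The factor $2$ arises because both class means may drift. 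We take $L$ to be the larger of the two coefficients, which involves $\|\Sigma^{-1}\|_2^{1/2}$ and $\|\Sigma^{-1}\|_2^{3/2}$.

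\textbf{Main obstacle.} The delicate point is guaranteeing that every point on the segment still satisfies $\lambda_{\min}\ge\lambda_0$ and has bounded mean gap. First, $\lambda_{\min}$ is concave in $\Sigma$, so it is bounded below on the segment by its smaller endpoint value. Second, by Weyl's inequality $\lambda_{\min}(\Sigma_{\text{test}})\ge\lambda_{\min}(\Sigma_{\text{train}})-\epsilon_\Sigma$, so it suffices to require $\epsilon_\Sigma\le\lambda_{\min}(\Sigma_{\text{train}})/2$. Third, the mean gap along the segment is at most $\|\mu_1^{\text{train}}-\mu_0^{\text{train}}\|_2+2\epsilon_\mu$. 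With these in place $L$ is an explicit finite constant, and we make these conditions explicit as the regime in which the proposition holds.
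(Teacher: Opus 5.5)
Your proof is correct, but it proves a different reading of the proposition than the paper's sketch, by a different mechanism. The paper writes $\mathrm{AUC}=H(\mu,\Sigma)=\Phi\bigl(\sqrt{\mathcal{D}_\mu/2}\bigr)$ on \emph{each} domain. Its $\mathrm{AUC}_{\text{test}}$ is therefore the AUC of Mah-NCM refit to the test moments, the oracle (Bayes-optimal) AUC. It then gets Lipschitz continuity only qualitatively, by composing the smooth map $(\mu,\Sigma)\mapsto\mathcal{D}_\mu$ with the square root (locally Lipschitz on $(0,\infty)$) and the globally Lipschitz $\Phi$, so both $L$ and the admissible perturbation size stay implicit. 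You instead freeze the train-fitted $w$, which is the quantity that matters for a deployed head. You differentiate the ratio $g$ directly and obtain explicit constants in $\lambda_0$. You also verify, via concavity of $\lambda_{\min}$ and Weyl's inequality, that the whole segment stays well conditioned, which the paper's ``for small perturbations'' glosses over. What the paper's route buys is the closed-form link to the Fisher margin, which it pairs with the Fisher-margin stability proposition. However, your constants are uniform in $w$. Writing $F_w$ for your $F$, Cauchy--Schwarz in the $\Sigma$-inner product gives $H(\theta)=\sup_{w\neq 0}F_w(\theta)$, and $|\sup_w a_w-\sup_w b_w|\le\sup_w|a_w-b_w|$, so your bound also yields the paper's oracle version. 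Two easy sharpenings are available. First, since $\partial_\Sigma g=-\frac{g}{2\,w^\top\Sigma w}\,ww^\top$ and $\sup_x|x|\Phi'(x)=(2\pi e)^{-1/2}$, the covariance term is at most $\epsilon_\Sigma/(2\sqrt{2\pi e}\,\lambda_0)$, with no dependence on the mean gap. Second, since $|\Delta\mathrm{AUC}|\le 1$, taking $L\ge 2/\lambda_{\min}(\Sigma^{\text{train}})$ makes the bound trivial once $\epsilon_\Sigma>\lambda_{\min}(\Sigma^{\text{train}})/2$. Together these remove your regime restriction and give an $L$ depending only on $\|(\Sigma^{\text{train}})^{-1}\|_2$. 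A conditioning assumption is thus needed only to make $L$ uniform over training configurations, not for a finite $L$ to exist.
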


\begin{proposition}[Stability of the Fisher Margin under Distributional Drift]
\label{prop:stability}
Assume that both the training and testing domains are characterized by homoscedastic Gaussian parameters.
If the distributional drift is bounded such that $\|\delta^{\text{test}}-\delta^{\text{train}}\|_2 \le \epsilon$ and $\|\Sigma_p^{\text{test}}-\Sigma_p^{\text{train}}\|_F \le \eta$, where $\delta = \mu_1 - \mu_0$, then the absolute change in the Fisher Margin $\mathcal{D}_\mu$ is bounded, to a first order, by $|\mathcal{D}_\mu^{\text{test}} - \mathcal{D}_\mu^{\text{train}}| \le 2 \|(\Sigma_p^{\text{train}})^{-1}\|_2 \|\delta^{\text{train}}\|_2 \epsilon + O(\eta)$.
\end{proposition}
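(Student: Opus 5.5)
The Fisher margin is presumably $\mathcal{D}_\mu = \delta^\top \Sigma_p^{-1}\delta$, the squared Mahalanobis distance between the class means. I will prove the bound by splitting the change in $\mathcal{D}_\mu$ into a mean-drift term and a covariance-drift term. Write $\delta = \delta^{\text{train}}$, $\delta' = \delta^{\text{test}}$, $A = (\Sigma_p^{\text{train}})^{-1}$ and $A' = (\Sigma_p^{\text{test}})^{-1}$. Then
\begin{equation*}
\mathcal{D}_\mu^{\text{test}} - \mathcal{D}_\mu^{\text{train}} = \bigl(\delta'^\top A \delta' - \delta^\top A\delta\bigr) + \delta'^\top (A' - A)\delta' .
\end{equation*}
I bound the two brackets separately and add them.

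\textbf{Mean-drift term.} Set $\Delta = \delta' - \delta$, so $\|\Delta\|_2 \le \epsilon$. By symmetry of $A$,
\begin{equation*}
\delta'^\top A\delta' - \delta^\top A\delta = 2\delta^\top A\Delta + \Delta^\top A\Delta .
\end{equation*}
Cauchy--Schwarz and the operator-norm bound give $|2\delta^\top A\Delta| \le 2\|A\|_2\|\delta\|_2\,\epsilon$, which is exactly the leading term in the statement. The remainder satisfies $|\Delta^\top A\Delta|\le \|A\|_2\epsilon^2$. This is second order in $\epsilon$, so it is discarded under the ``to a first order'' qualifier.

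\textbf{Covariance-drift term.} Write $E = \Sigma_p^{\text{test}} - \Sigma_p^{\text{train}}$, with $\|E\|_F\le\eta$. I use the first-order perturbation of the inverse, $A' - A = -AEA + O(\|E\|^2)$. This follows from the Neumann series $(I + AE)^{-1}$, which is valid whenever $\|A\|_2\eta<1$.

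Applying $\|E\|_2\le\|E\|_F$ gives
\begin{equation*}
|\delta'^\top(A'-A)\delta'| \le \|A\|_2^2\,\|\delta'\|_2^2\,\eta + O(\eta^2).
\end{equation*}
Since $\|\delta'\|_2\le\|\delta\|_2+\epsilon$, this term is $C\eta$ with $C$ depending on $\|A\|_2$ and $\|\delta\|_2$. Hence it is $O(\eta)$. Adding the two bounds and dropping the higher-order terms gives the claimed inequality.

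\textbf{Main obstacle.} The hard step is making the ``first order'' qualifier precise. The $O(\eta)$ term hides constants depending on $\|A\|_2$ and $\|\delta\|_2$, and the inverse expansion requires $\eta$ small relative to the smallest eigenvalue of $\Sigma_p^{\text{train}}$. I would state this smallness assumption explicitly. I would also note that the regularization of \Cref{sec:analytical} keeps $\|A\|_2$ finite. The cross terms in $\epsilon\eta$ and $\epsilon^2$ go into the neglected higher-order remainder.
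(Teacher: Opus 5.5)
Your proposal is correct and takes essentially the same route as the paper. The paper linearizes $\mathcal{D}_\mu=\delta^\top\Sigma_p^{-1}\delta$ at the training parameters using the gradients $2\Sigma_p^{-1}\delta$ and $-\Sigma_p^{-1}\delta\delta^\top\Sigma_p^{-1}$, then bounds the mean term by Cauchy--Schwarz and the covariance term by the Frobenius Cauchy--Schwarz inequality; your exact splitting with the explicit $\Delta^\top A\Delta$ and Neumann-series remainders gives the same leading terms and makes the ``to first order'' qualifier precise, including the smallness condition $\|(\Sigma_p^{\text{train}})^{-1}\|_2\,\eta<1$ that the paper's formal Taylor step leaves implicit.
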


These propositions should be read as \emph{model-conditional}: they characterize the optimal decision rules \emph{if} frozen features are well-approximated by Gaussian class-conditionals.
They provide the theoretical foundation for our geometric analysis of generalization.
Under this model, Quadratic Discriminant Analysis is Bayes-optimal in the heteroscedastic case (\Cref{prop:bayes_optimal}) and simplifies to a linear rule under a shared covariance (Mahalanobis-NCM; \Cref{prop:homoscedastic}).
\Cref{prop:predictive_sufficiency} shows that, under the homoscedastic Gaussian model, the AUC of Mah-NCM is a monotone function of the Fisher margin $\mathcal{D}_\mu$.
\Cref{prop:stability} bounds how $\mathcal{D}_\mu$ changes under drift in low-order statistics.
Together, these results imply smooth degradation when class means and covariances drift.
In this model, reliable generalization requires geometric alignment between training and testing data.

The linear decision boundary of Mah-NCM has the same functional form as a linear probe trained with cross-entropy.
Both induce a single separating hyperplane in $z$-space.
The key difference is optimization rather than representation.
A neural network iteratively searches for the boundary, whereas Gaussian discriminants compute it analytically from the data's first- and second-order statistics.
Thus, when a trained linear head does not outperform Gaussian discriminants under matched priors and encoders, the result is consistent with the hypothesis that second-order feature geometry is already sufficient for separation.
Consistent gaps instead suggest non-Gaussian or higher-order structure that the ladder cannot capture.

\paragraph{Limitations.}
\label{sec:limitations}
By design, the Gaussian ladder is restricted to a Gaussian class-conditional approximation based on first- and second-order statistics.
The result is a diagnostic baseline rather than a universal detector, one that leaves out higher-order, multimodal, or heavy-tailed cues from strong post-processing, unusual content, or generator-specific artifacts.
In those regimes, trained non-linear heads or end-to-end adaptation can legitimately outperform the ladder.

\section{Investigating Generalization with \Benchmark suite}
\label{sec:results}
We use the Gaussian ladder to investigate which components of a detection system drive OOD performance under realistic distribution shift.
We focus on the training prior, the frozen encoder, and the decision rule on top.
To assess the reliability of current AI-generated image detectors under realistic generative variability, we begin by evaluating pre-trained models on the \Benchmark test suite.
These results help to localize whether failures are attributable to the training prior, the encoder feature space, or the trained decision surface, motivating more principled system-level investigations.

\subsection{Evaluation Protocol and Data}
\label{sec:benchmark}

\begin{figure*}[!t]
    \centering
    \includegraphics[page=2,width=\textwidth]{LatentGeometry.pdf}
    \caption{
    \textbf{\Benchmark stress-tests detectors under joint distribution shift.}
    \Benchmark is an evaluation \emph{protocol} built entirely from existing public datasets.
    Many AIGI evaluations vary either the generator family \emph{or} the image source in isolation; we instead evaluate under \emph{joint} shifts in generator family, prompt/style, and source domain, closer to deployment where these factors drift together.
    }
    \label{fig:benchmark_axes}
\end{figure*}

\Benchmark denotes a unified evaluation \emph{protocol} constructed entirely from existing public datasets to stress-test AIGI detectors under joint distribution shift.
Unlike settings where only the generator family changes while the underlying image source and prompt distribution remain fixed, \Benchmark explicitly couples shifts in generator family, prompt/style, and acquisition pipeline (\Cref{fig:benchmark_axes}).
To avoid conflating in-distribution performance with transfer, the large training priors used for conditioning (e.g., \CF, \GenImage, \DRCT, \ELSA) are treated as support-only and are not part of the evaluation suite.

\paragraph{Evaluation Datasets.}
The \Benchmark suite aggregates \TotalDatasets existing public datasets totaling \TotalImagesFull images.
It spans real-only, synthetic-only, and mixed regimes, with joint shifts in (i) generator family, (ii) prompt/style, and (iii) image-source domain and post-processing.
Concretely, it combines in-the-wild real sources, synthetic-only prompt/style collections, and mixed forensic benchmarks spanning diverse generators and post-processing.

\Benchmark standardizes evaluation over existing public datasets to ensure head--prior--encoder comparisons are tested under a realistic mixture of shifts.
The complete manifest of the dataset and the composition of the class are provided in Appendix~\Cref{tab:summaryval}.

\begin{table*}[!t]
\centering
\caption{
\textbf{Closed-form Gaussian discriminants versus released AI-generated image detector heads under matched (prior, encoder) conditions.}
Mean class accuracy $\mathrm{CA}$ (dataset-wise balanced accuracy where defined; \Cref{sec:benchmark}), macro-averaged across \Benchmark evaluation datasets.
For each released detector checkpoint, we report (i) the released decision head (\emph{Out-of-the-Shelf}) and (ii) closed-form baselines fitted on the \emph{same frozen encoder features} using the corresponding public training prior: Euc-NCM \citep{wu2025fewshot} and the best-performing rung of the Gaussian ladder.
The full ladder is reported in \Cref{tab:comparative}.
(Parentheses indicate the training subset reported by the original work when applicable.)
}
\label{tab:overall}
\resizebox{0.99\textwidth}{!}{%
\begin{tabular}{@{}lrrr@{}}
\toprule
\textbf{Detection Model}
& \textbf{Out-of-the-Shelf}
& \textbf{Euc-NCM \citep{wu2025fewshot}}
& \textbf{Gaussian ladder (best)}
\\
\Midrule
\rowcolor{gray!20}
\multicolumn{4}{c}{\shortstack{Trained with \CNNSpot \citep{wang2020cnngenerated} (\ProGAN Images based on \texttt{LSUN})}}  \\
\UnivFD \citep{ojha2023universal} & 54.63\% & 50.98\% & \textbf{57.62\%} (Mah-NCM) \increase{2.99\%}\\
\AIDE \citep{yan2024sanity}       & 56.41\% & 54.25\% & \textbf{62.76\%} (Mah-NCM) \increase{6.35\%}\\
\midrule
\rowcolor{gray!20}
\multicolumn{4}{c}{\shortstack{Trained with \GenImage \citep{zhu2023genimage} (Diffusion Model Images based on \ImageNet)}} \\
\AIDE (\texttt{GenImage-SDv1}) \citep{yan2024sanity}               & 48.99\% & 54.53\% & \textbf{57.35\%} (Mah-NCM) \increase{8.36\%}\\
\Effort (\texttt{GenImage-SDv1}) \citep{yan2025orthogonal}         & 72.58\% & 75.06\% & \textbf{76.82\%} (Mah-NCM) \increase{4.24\%}\\
\texttt{DRCT-UnivFD} (Full \texttt{GenImage}) \citep{chen2024drct} & 65.77\% & 66.08\% & \textbf{73.01\%} (Mah-NCM) \increase{7.24\%} \\
\AIDE (Full \texttt{GenImage}) \citep{yan2024sanity}               & 54.98\% & 52.24\% & \textbf{64.64\%} (Mah-NCM) \increase{9.66\%} \\
\midrule
\rowcolor{gray!20}
\multicolumn{4}{c}{\shortstack{Trained with \DRCT \citep{chen2024drct} (Stable Diffusion Model Images based on \COCO)}} \\
\texttt{DRCT-UnivFD} (\texttt{DRCT-SDv1}) \citep{chen2024drct} & 63.36\% & 65.36\% & \textbf{70.20\%} (QDA) \increase{6.84\%} \\
\texttt{DRCT-UnivFD} (\texttt{DRCT-SDv2}) \citep{chen2024drct} & 62.37\% & 66.31\% & \textbf{70.31\%} (QDA) \increase{7.94\%} \\
\midrule
\rowcolor{gray!20}
\multicolumn{4}{c}{\shortstack{Trained with \ELSA \citep{baraldi2025contrasting} or \CF \citep{park2025community} (Diffusion Model Images based on \LAION Dataset)}} \\
\texttt{CoDE-kNN} (\ELSA) \citep{baraldi2025contrasting} & 64.01\% & 63.63\% & \textbf{66.15\%} (Mah-NCM) \increase{2.14\%}\\
\texttt{CF-224} (\CF) \citep{park2025community}          & 81.98\% & 78.53\% & \textbf{82.61\%} (Mah-NCM) \increase{0.63\%} \\
\texttt{CF-384} (\CF) \citep{park2025community}          & \textbf{87.54\%} & 82.65\% & 84.55\% (Mah-NCM) \decrease{-2.99\%} \\
\midrule
\rowcolor{gray!20}
\multicolumn{4}{c}{\shortstack{Frozen image encoder (no detection fine-tuning)}} \\
\texttt{PE-Core-bigG-14-448} \citep{pe-core-bigG} & --- & 86.38\% & \textbf{94.46\%} (Mah-NCM) \\
\bottomrule
\end{tabular}
}
\end{table*}

\paragraph{Evaluation Protocol.}
For a training prior, detectors are evaluated on every dataset in the \Benchmark test suite.
We report a dataset-wise class accuracy that equals balanced accuracy when both classes are present and reduces to class-conditional accuracy on one-class sets.
Concretely, for an evaluation dataset $t$ with observed classes $\mathcal{C}_t \subseteq \{0,1\}$, we define Class Accuracy $\mathrm{CA}$ as

\begin{equation}
\mathrm{CA}^{(\phi)}(t) \;=\; \frac{1}{|\mathcal{C}_t|}\sum_{c\in \mathcal{C}_t} \Pr(\hat{y}^{\phi}=c \mid y=c),
\end{equation}

and report the macro-average $\frac{1}{|\mathcal{T}|}\sum_{t\in\mathcal{T}} \mathrm{CA}(t)$ across evaluation datasets $\mathcal{T}$ so that no single large dataset dominates.
For all detectors we use the default $\arg\max$ decision on predicted class probabilities and do not tune thresholds per dataset.
On real-only sets, $\mathrm{CA}$ corresponds to true-negative accuracy (1--FPR); on synthetic-only sets it corresponds to true-positive accuracy (1--FNR).
We additionally compute AUC on the \TotalMixedDatasets mixed datasets (those containing both real and synthetic images) and report in Appendix \Cref{sec:results_auc}.

\subsection{Closed-form Gaussian Discriminants versus Trained Heads}

\begin{table*}[!t]
\centering
\caption{
\textbf{Comparative Performance of Different Gaussian Discriminants.}
Mean class accuracy $\mathrm{CA}$ (macro-averaged across \Benchmark evaluation datasets) for each rung of the Gaussian discriminant ladder, conditioned on the same public training prior and fitted on the same frozen features as in \Cref{tab:overall}.
(Parentheses indicate the training subset reported by the original work; underlining denotes improvement over the corresponding released head as reported in \Cref{tab:overall}.)
}
\label{tab:comparative}
\resizebox{0.99\textwidth}{!}{%
\begin{tabular}{@{}lrrrr@{}}
\toprule
\textbf{Detection Model}
& \textbf{Cos-NCM} & \textbf{GNB} & \textbf{Mah-NCM} & \textbf{QDA}
\\
\Midrule
\rowcolor{gray!20}
\multicolumn{5}{c}{\shortstack{Trained with \CNNSpot \citep{wang2020cnngenerated} (\ProGAN Images based on \texttt{LSUN} Dataset)}}  \\
\UnivFD \citep{ojha2023universal} & 51.01\% & 48.44\% & \textbf{\underline{57.62\%}} & 48.23\% \\
\AIDE \citep{yan2024sanity}       & 54.36\% & \underline{56.69\%} & \textbf{\underline{62.76\%}} & \underline{61.28\%} \\
\midrule
\rowcolor{gray!20}
\multicolumn{5}{c}{\shortstack{Trained with \GenImage \citep{zhu2023genimage} (Diffusion Model Images based on \ImageNet Dataset)}} \\
\AIDE (\texttt{GenImage-SDv1}) \citep{yan2024sanity}               & \underline{54.23\%} & \underline{54.39\%} & \textbf{\underline{57.35\%}} & \underline{57.15\%} \\
\Effort (\texttt{GenImage-SDv1}) \citep{yan2025orthogonal}         & \underline{74.44\%} & \underline{74.92\%} & \textbf{\underline{76.82\%}} & \underline{73.21\%} \\
\texttt{DRCT-UnivFD} (Full \texttt{GenImage}) \citep{chen2024drct} & \underline{67.97\%} & \underline{68.28\%} & \textbf{\underline{73.01\%}} & \underline{69.74\%} \\
\AIDE (Full \texttt{GenImage}) \citep{yan2024sanity}               & 52.66\% & 53.92\% & \underline{\textbf{64.64\%}} & \underline{56.20\%} \\
\midrule
\rowcolor{gray!20}
\multicolumn{5}{c}{\shortstack{Trained with \DRCT \citep{chen2024drct} (Stable Diffusion Model Images based on \COCO Dataset)}} \\
\texttt{DRCT-UnivFD} (\texttt{DRCT-SDv1}) \citep{chen2024drct} & \underline{65.16\%} & \underline{66.37\%} & \underline{65.21\%} & \textbf{\underline{70.20\%}} \\
\texttt{DRCT-UnivFD} (\texttt{DRCT-SDv2}) \citep{chen2024drct} & \underline{65.53\%} & \underline{66.69\%} & \underline{67.91\%} & \textbf{\underline{70.31\%}} \\
\midrule
\rowcolor{gray!20}
\multicolumn{5}{c}{\shortstack{Trained with \ELSA \citep{baraldi2025contrasting} or \CF \citep{park2025community} \\ (Diffusion Model Images based on \LAION Dataset)}} \\
\texttt{CoDE-kNN} (\ELSA) \citep{baraldi2025contrasting} & 62.70\% & \underline{65.98\%} & \textbf{\underline{66.15\%}} & \underline{64.00\%} \\
\texttt{CF-224} (\CF) \citep{park2025community} & 78.98\% & 79.45\% & \textbf{\underline{82.61\%}} & 80.70\%\\
\texttt{CF-384} (\CF) \citep{park2025community} & 84.46\% & 80.01\% & 84.55\% & 79.52\%\\
\midrule
\rowcolor{gray!20}
\multicolumn{5}{c}{\shortstack{Frozen image encoder (no detection fine-tuning)}} \\
\texttt{PE-Core-bigG-14-448} \citep{pe-core-bigG} & 86.06\% & 87.26\% & \textbf{94.46\%} & 86.15\% \\
\bottomrule
\end{tabular}
}
\end{table*}

\Cref{tab:overall,tab:comparative} compares released AI-generated image detector heads with Gaussian discriminant rules fitted on the same frozen features and conditioned on the same training prior.
Several rows in \Cref{tab:overall,tab:comparative} share the same underlying backbone.
\UnivFD and \texttt{DRCT-UnivFD} both use \texttt{ViT-L-14-quickgelu-openai} \citep{vitl14} as the image encoder; differences arise from the public training prior and the trained head.
\texttt{DRCT} exposes classifier features after reconstruction training.
\Effort applies LoRA adaptation before its head.
\AIDE uses frequency/statistical experts plus semantic features, and \texttt{CoDE}/\texttt{CF} train backbones and heads end-to-end.
All are auditable once the encoder's final feature space is exposed.

Across several priors, the best closed-form Gaussian discriminant improves class accuracy, with the largest gains occurring when separation is well-captured by first- and second-order moments (often Mah-NCM).
These patterns are consistent with \Cref{prop:bayes_optimal,prop:homoscedastic}: when separation is well-captured by first- and second-order moments in a frozen encoder feature space, the corresponding Gaussian discriminants recover most of the available discriminative geometry.
A main exception is \texttt{CF-384} \citep{park2025community}, where the released detector remains ahead of the ladder (87.54\% vs.\ 84.55\% CA), consistent with a regime in which the trained head's decision surface exploits structure not explained by second-order moments alone.
Per-dataset results in \Cref{tab:individual_datasets,tab:auc_individual_datasets} also show large variability.
For example, Mah-NCM reaches near-ceiling performance on many synthetic-only sets but is much weaker on \texttt{DeepFakeBench} \citep{yan2023deepfakebench}, while QDA performs best on \texttt{CelebA-Spoof} \citep{zhang2020celebaspoof}.

\subsection{Encoder Sensitivity of Closed-form Gaussian Discriminants}

\begin{table*}[!t]
\centering
\caption{
\textbf{Encoder scaling under a fixed prior: representation choice strongly affects transfer.}
Mean class accuracy $\mathrm{CA}$ (macro-averaged across \Benchmark evaluation datasets) for Gaussian discriminant heads conditioned on the same \CF training prior \citep{park2025community}, using frozen CLIP encoders of increasing capacity pretrained by OpenAI.
Only the head is changed. No detector fine-tuning is performed.
}
\label{tab:architecture}
\resizebox{0.99\textwidth}{!}{%
\begin{tabular}{@{}lrrrrr@{}}
\toprule
\textbf{Frozen Encoders} &
\textbf{Euc-NCM \citep{wu2025fewshot}} & \textbf{Cos-NCM} & \textbf{GNB} & \textbf{Mah-NCM} & \textbf{QDA}
\\
\Midrule
\texttt{ResNet-50} \citep{resnet50}    & 65.27\% & 64.36\% & 65.72\% & \textbf{72.00\%} & 63.15\% \\
\texttt{ResNet-101} \citep{resnet101}  & 65.20\% & 64.70\% & 64.46\% & \textbf{70.27\%} & 62.29\% \\
\midrule
\texttt{ResNet-50x4} \citep{resnet50x4}   & 66.24\% & 65.91\% & 65.12\% & \textbf{72.34\%} & 64.28\% \\
\texttt{ResNet-50x16} \citep{resnet50x16} & 67.35\% & 66.77\% & 66.21\% & \textbf{74.98\%} & 65.71\% \\
\texttt{ResNet-50x64} \citep{resnet50x64} & 68.42\% & 68.19\% & 65.02\% & \textbf{74.57\%} & 68.38\% \\
\midrule
\texttt{ViT-B/16} \citep{vitb16} & 64.66\% & 64.65\% & 66.37\% & \textbf{74.18\%} & 65.70\% \\
\texttt{ViT-B/32} \citep{vitb32} & 64.53\% & 64.25\% & 64.83\% & \textbf{70.61\%} & 62.81\% \\
\texttt{ViT-L/14} \citep{vitl14} & 66.18\% & 68.44\% & 69.79\% & \textbf{76.96\%} & 68.87\% \\
\bottomrule
\end{tabular}
}
\end{table*}

\begin{table*}[!t]
\centering
\caption{
\textbf{Pretraining objective affects transfer even without detector training.}
Mean class accuracy $\mathrm{CA}$ (macro-averaged across \Benchmark evaluation datasets) for Gaussian discriminant heads conditioned on the same \CF training prior \citep{park2025community}, using frozen encoders trained with different objectives.
}
\label{tab:model}
\resizebox{0.99\textwidth}{!}{%
\begin{tabular}{@{}lrrrrr@{}}
\toprule
\textbf{Frozen Encoders} &
\textbf{Euc-NCM \citep{wu2025fewshot}} & \textbf{Cos-NCM} & \textbf{GNB} & \textbf{Mah-NCM} & \textbf{QDA}
\\
\Midrule
\texttt{MAE-Huge} \citep{mae-huge}                   & 55.18\% & 55.22\% & 58.33\% & \textbf{69.37\%} & 64.01\% \\
\texttt{BEiT-Large} \citep{beit-large}               & 58.53\% & 58.49\% & 58.80\% & \textbf{61.78\%} & 53.63\% \\
\texttt{CLIP-XLM-RoBERTa-Large} \citep{xlm-roberta-large} & 70.72\% & 69.70\% & 70.61\% & \textbf{77.67\%} & 65.79\% \\
\texttt{SigLIP-Large} \citep{siglip-large}           & 58.25\% & 58.25\% & 58.07\% & \textbf{64.14\%} & 60.85\% \\
\texttt{BLIP-Large} \citep{blip-large}               & 55.75\% & 56.89\% & 53.87\% & \textbf{70.17\%} & 55.42\% \\
\texttt{BLIP2} \citep{blip2}                         & 68.61\% & 68.46\% & 65.41\% & \textbf{77.73\%} & 62.21\% \\
\texttt{DINOv2-giant} \citep{dinov2-giant}           & 63.15\% & 63.59\% & 61.95\% & \textbf{73.30\%} & 58.52\% \\
\texttt{DINOv3-ViT-7b} \citep{dinov3-7b}             & 83.85\% & 85.82\% & 83.18\% & \textbf{88.91\%} & 78.88\% \\
\bottomrule
\end{tabular}
}
\end{table*}

We investigate the model-agnostic behavior of the ladder by evaluating a range of frozen architectures and pretraining objectives (\Cref{tab:architecture,tab:model}).
Mah-NCM is consistently the top performer across these encoders, indicating that a shared full-covariance geometry captures a stable component of real/fake separability in modern representation spaces.
This motivates a practical reporting guideline.
Trained heads on frozen encoders should be compared against the best Gaussian rung on the \emph{same} frozen features.
Additional models are presented in Appendix \Cref{sec:extended_results}.

\subsection{Training Prior Sensitivity of Closed-form Gaussian Discriminants}

\begin{table*}[!t]
\centering
\caption{
\textbf{Training prior dominates transfer even under a fixed encoder.}
Mean class accuracy $\mathrm{CA}$ (macro-averaged across \Benchmark evaluation datasets) for Gaussian discriminant heads when varying only the \emph{support prior} used to estimate moments, under the fixed frozen \texttt{PE-Core-bigG-14-448} encoder \citep{pe-core-bigG}. Underlining denotes improvement over the best released head of \texttt{CF-384} \citep{park2025community} as reported in \Cref{tab:overall}.
}
\label{tab:comparative_datasets}
\resizebox{0.99\textwidth}{!}{%
\begin{tabular}{@{}lrrrrr@{}}
\toprule
\textbf{Training Dataset} &
\textbf{Euc-NCM \citep{wu2025fewshot}} & \textbf{Cos-NCM} & \textbf{GNB} & \textbf{Mah-NCM} & \textbf{QDA}
\\
\Midrule
\CNNSpot \citep{wang2020cnngenerated}         & 74.95\% & 78.57\% & 59.21\% & \textbf{77.93\%} & 49.24\% \\
\GenImage \citep{zhu2023genimage}             & 86.68\% & \underline{88.75\%} & 87.50\% & \underline{\textbf{92.43\%}} & 81.83\% \\
\quad \texttt{GenImage-SDv1}                 & \underline{88.74\%} & \underline{89.80\%} & 86.34\% & \underline{\textbf{92.57\%}} & 65.06\% \\
\DRCT \citep{chen2024drct}                    & 81.90\% & 81.76\% & 86.21\% & 83.25\% & \underline{\textbf{91.09\%}} \\
\quad \texttt{DRCT-SDv1} \citep{chen2024drct} & 82.29\% & 83.81\% & \underline{88.70\%} & \underline{88.31\%} & \underline{\textbf{89.92\%}} \\
\quad \texttt{DRCT-SDv2} \citep{chen2024drct} & 83.50\% & 84.38\% & \underline{89.56\%} & \underline{90.08\%} & \underline{\textbf{91.44\%}} \\
\ELSA \citep{baraldi2025contrasting}          & \underline{88.87\%} & \underline{89.03\%} & 86.98\% & \underline{\textbf{94.45\%}} & 86.40\% \\
\CF \citep{park2025community}                 & 86.38\% & 86.06\% & 87.26\% & \underline{\textbf{94.46\%}} & 86.15\% \\
\bottomrule
\end{tabular}
}
\end{table*}

We isolate training-prior effects by fixing \texttt{PE-Core-bigG-14-448} \citep{pe-core-bigG} as the encoder and varying only the support prior used to estimate Gaussian parameters (\Cref{tab:comparative_datasets}).
The spread is large: the best CA ranges from 77.92\% (\CNNSpot) to 94.46\% (\CF), despite using the \emph{same} representation and evaluation suite.

Moreover, the prevailing covariance assumption shifts with the prior: \DRCT favors QDA (heteroscedastic) while all others favor Mah-NCM (homoscedastic).
Thus, if two papers use the same backbone but different priors, their OOD behavior can differ more than what is attributable to the classifier family within the Gaussian ladder.

\subsection{Data Efficiency of Closed-form Gaussian Discriminants}

\begin{table}[!t]
\centering
\caption{
\textbf{Data-efficiency of moment-based heads.}
Mean class accuracy $\mathrm{CA}$ (macro-averaged across \Benchmark evaluation datasets) for Gaussian discriminant heads as the support set used to estimate moments is subsampled from \CF \citep{park2025community}, under the fixed frozen \texttt{PE-Core-bigG-14-448} encoder \citep{pe-core-bigG}.
(Superscript/subscript denote max/min deviations (in percentage points) from the reported mean).
}
\label{tab:efficiency}
\resizebox{0.99\linewidth}{!}{%
\begin{tabular}{@{}lccccc@{}}
\toprule
\textbf{Amount} &
\textbf{Euc-NCM \citep{wu2025fewshot}} & \textbf{Cos-NCM} & \textbf{GNB} & \textbf{Mah-NCM} & \textbf{QDA} \\
\Midrule
0.001\% & 83.97\%$^{+1.93}_{-2.74}$ & 83.46\%$^{+2.25}_{-2.47}$ & \textbf{87.14\%}$^{+1.21}_{-1.68}$ & 85.56\%$^{+1.84}_{-1.75}$ & 85.13\%$^{+1.07}_{-0.66}$ \\[0.5em]
0.005\% & 86.04\%$^{+0.28}_{-0.40}$ & 85.78\%$^{+0.27}_{-0.53}$ & 87.23\%$^{+0.76}_{-0.77}$ & \textbf{90.89\%}$^{+0.73}_{-1.34}$ & 85.53\%$^{+1.51}_{-1.71}$ \\[0.5em]
0.01\%  & 86.32\%$^{+0.67}_{-0.35}$ & 86.02\%$^{+0.77}_{-0.42}$ & 87.42\%$^{+1.19}_{-0.90}$ & \textbf{91.55\%}$^{+0.67}_{-0.69}$ & 86.48\%$^{+1.16}_{-0.82}$ \\[0.5em]
0.1\%   & 86.26\%$^{+0.71}_{-0.44}$ & 85.92\%$^{+0.74}_{-0.47}$ & 87.33\%$^{+0.22}_{-0.19}$ & \textbf{93.18\%}$^{+0.42}_{-0.54}$ & 84.18\%$^{+0.33}_{-0.71}$ \\[0.5em]
1\%     & 86.39\%$^{+0.11}_{-0.12}$ & 86.05\%$^{+0.11}_{-0.10}$ & 87.28\%$^{+0.04}_{-0.05}$ & \textbf{93.52\%}$^{+0.28}_{-0.17}$ & 85.16\%$^{+0.20}_{-0.34}$ \\[0.5em]
\midrule
100\%   & 86.38\% & 86.06\% & 87.26\% & \textbf{94.46\%} & 86.15\% \\
\bottomrule
\end{tabular}%
}
\end{table}

We investigate data efficiency by subsampling the support set used to estimate Gaussian parameters (\Cref{tab:efficiency}).
Performance saturates quickly.
With only 1\% of the prior (around 43k samples), Mah-NCM nearly matches the full-support estimate, while at extremely small support sizes (0.001\% of the prior, or 44 samples) the diagonal GNB assumption can be more stable.
The pattern reinforces the central observation from the ladder.
For this support prior and encoder, low-order feature statistics capture a large share of the signal used by the best head, without iterative optimization.

With the frozen \texttt{PE-Core-bigG-14-448} encoder \citep{pe-core-bigG}, Mah-NCM fitted from only 0.005\% of the \CF prior (219 labeled samples) reaches 90.89\% CA.
Although this is not an apples-to-apples comparison to \texttt{CF-384} (which uses a different backbone), it indicates that moment-based adaptation can be data-efficient once the representation is strong.

\subsection{Representation Sensitivity of Wasserstein-2 Shift Estimates}
\label{sec:w2_sensitivity}

\begin{figure*}[t]
    \centering
    \includegraphics[width=0.49\linewidth]{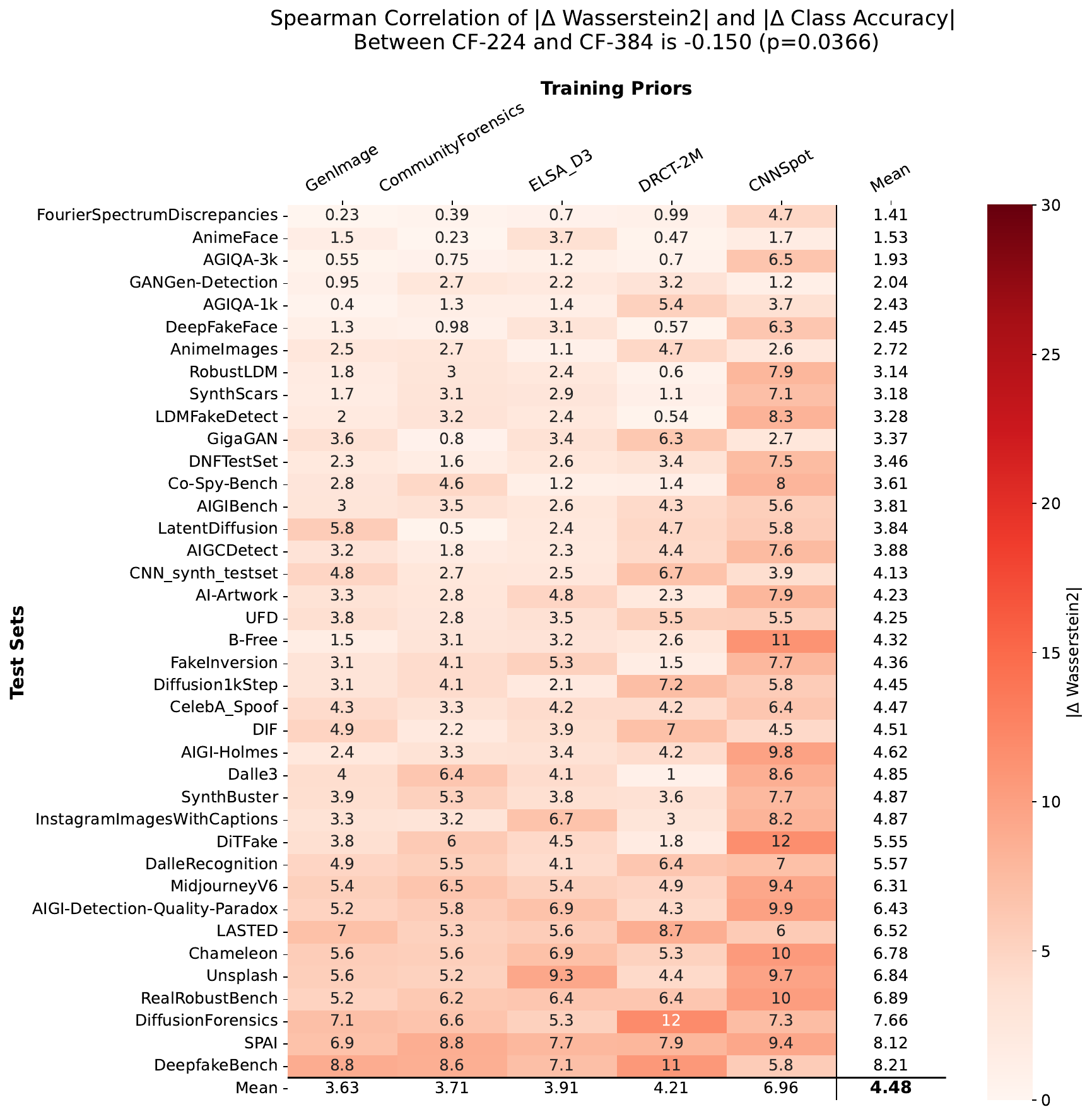}\hfill
    \includegraphics[width=0.49\linewidth]{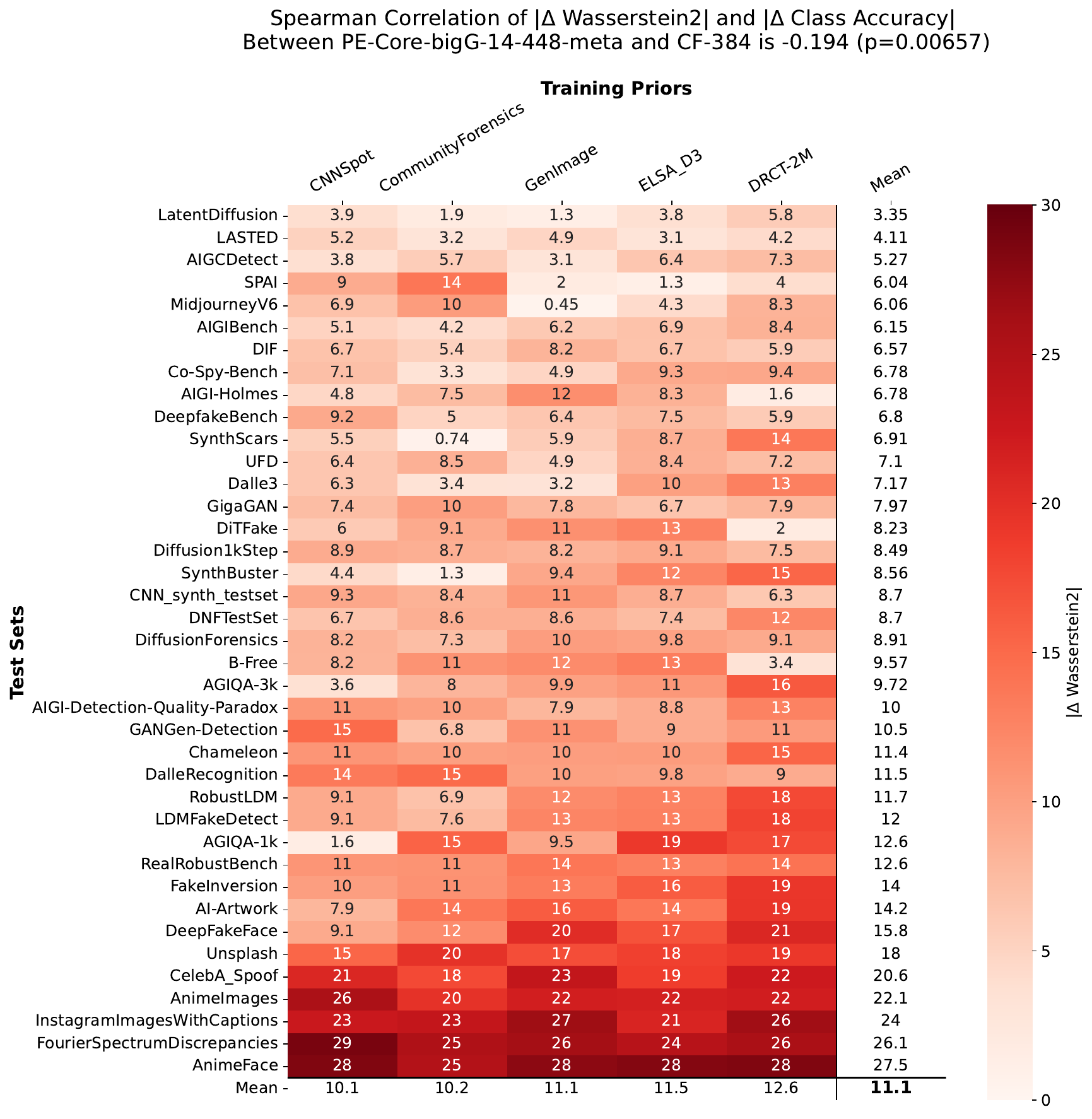}
    \caption{
    \textbf{Gaussian Wasserstein-2 distances are representation-conditioned.}
    Each cell reports the absolute difference in class-averaged Gaussian $\mathcal{W}_2$ for the same train--test pair under two feature spaces, $\Delta\mathcal{W}_2(\mathcal{D}_\text{train},\mathcal{D}_\text{test};\phi_a,\phi_b)$ (row/column means on the margins).
    The sub-figure title reports the Spearman correlation between the flattened cell-wise $\Delta\mathcal{W}_2$ values and the corresponding $\Delta\mathrm{CA}$ values.
    Large changes in $\mathcal{W}_2$ do not necessarily coincide with large changes in Mah-NCM CA, so feature-space shift magnitudes should not be interpreted as detector-agnostic indicators without checking stability across representations.
    }
    \label{fig:w2_sensitivity}
\end{figure*}

All geometric shift metrics in \Benchmark are computed in the encoder's feature space, hence their numerical values are not purely properties of the data distributions, but of the \emph{encoder} used to embed them.
This raises a basic reliability question about whether conclusions drawn from a Gaussian shift metric are stable under reasonable encoder changes, or are artifacts of a particular backbone.

\paragraph{Setup.}
For each training prior $\mathcal{D}_\text{train}$ and test set $\mathcal{D}_\text{test}$, class-conditional Gaussians are fitted to the representation induced by an encoder $\phi$, then a class-averaged Wasserstein-2 shift is computed between the training and test distributions:

\begin{equation}
\mathcal{W}_2^{(\phi)}(\mathcal{D}_\text{train},\mathcal{D}_\text{test})
~=~ \frac{1}{2}\sum_{c\in\{0,1\}}
\mathrm{W}_2\!\left(
\mathcal{N}(\mu_{c,\mathcal{D}_\text{train}}^{(\phi)},\Sigma_{c,\mathcal{D}_\text{train}}^{(\phi)}),
\mathcal{N}(\mu_{c,\mathcal{D}_\text{test}}^{(\phi)},\Sigma_{c,\mathcal{D}_\text{test}}^{(\phi)})
\right).
\end{equation}

To isolate representation effects while keeping the data fixed, \Cref{fig:w2_sensitivity} uses heatmap cells for the encoder-induced change in Gaussian $\mathcal{W}_2$ and panel titles for its Spearman correlation with the corresponding change in Mah-NCM accuracy.
Thus each panel asks whether the train--test pairs whose estimated shift changes most after an encoder swap are also the pairs whose detector accuracy changes most.
High correlation means the shift estimate is decision-aligned for that encoder pair.
Weak correlation means $\mathcal{W}_2$ is responding to feature-space changes that the detector does not use:

\begin{equation}
\Delta\mathcal{W}_2(\mathcal{D}_\text{train},\mathcal{D}_\text{test};\phi_a,\phi_b)
= \left|\mathcal{W}_2^{(\phi_a)}(\mathcal{D}_\text{train},\mathcal{D}_\text{test}) - \mathcal{W}_2^{(\phi_b)}(\mathcal{D}_\text{train},\mathcal{D}_\text{test})\right|,
\end{equation}

\begin{equation}
\Delta\mathrm{CA}(\mathcal{D}_\text{test};\phi_a,\phi_b)
~=~
\left|
\mathrm{CA}^{(\phi_a)}(\mathcal{D}_\text{test})
-
\mathrm{CA}^{(\phi_b)}(\mathcal{D}_\text{test})
\right|.
\end{equation}

\Cref{fig:w2_sensitivity} shows strong representation dependence.
Within a closely related encoder family, \texttt{CF-224} and \texttt{CF-384} \citep{park2025community} have a modest $\Delta\mathcal{W}_2$ on average.
Across larger backbone changes, such as \texttt{PE-Core-bigG-14-448} \citep{pe-core-bigG} versus \texttt{CF-384} \citep{park2025community}, $\Delta\mathcal{W}_2$ increases by an order of magnitude, especially for real-only datasets.
The sensitivity is also uneven.
Particular prior--test pairs exhibit consistently larger $\Delta\mathcal{W}_2$, indicating that some distribution shifts are more representation-fragile. % than others.

A weak association between $\Delta\mathcal{W}_2$ and $\Delta \mathrm{CA}$ implies that global distributional distances can change under encoder re-parameterizations that do not materially affect the \emph{discriminative} geometry used by Mah-NCM.
For example, anisotropic rescaling can inflate Wasserstein distances while leaving the effective decision geometry largely intact.
\Benchmark therefore measures generalization at the level of complete detection systems (training prior $\times$ encoder $\times$ classifier), and the representation axis is not interchangeable.
Any claim that uses $\mathcal{W}_2$ to argue for or against a training prior must therefore be stated as \emph{representation-conditioned}.
Robust conclusions should rely on patterns that persist across encoders (e.g., rank-consistent prior ordering or decision-aligned shift measures), rather than absolute $\mathcal{W}_2$ values in a single feature space.

\paragraph{Practical reporting checklist.}
These experiments suggest a minimal audit for future AIGI detector papers.
Authors should report the released head and the best Gaussian rung on the same frozen features and support prior, include per-dataset CA/AUC rather than only macro averages, state the encoder used for any geometric shift metric, and flag non-Gaussian feature regimes using diagnostics such as those in Appendix~\Cref{tab:univariate_normality_tests,tab:multivariate_normality_tests}.
This makes head-level OOD claims auditable without treating the Gaussian ladder as a replacement for specialized detectors.

\section{Conclusion}
\label{sec:conclusion}

This paper studies prior-conditioned Gaussian discriminants as practical baselines and diagnostics for AI-generated image detection under joint distribution shift.
Using a unified public-data protocol, the ladder is often competitive with trained detector heads under matched priors and encoders, and sometimes exceeds them.
We support a matched head audit plus controlled prior/encoder sweeps, not a fully symmetric decomposition of every detector component.
The source-side recommendation is to compare the trained head with the best Gaussian rung on the same representation and support prior before claiming head-level OOD gains.
Across priors, the winning covariance model indicates which low-order statistics transfer, and the data-efficiency results show that useful moment estimates can be obtained from small support sets.
Future work should characterize failures in multimodal or heavy-tailed feature regimes and design adaptation procedures that improve OOD performance while preserving transferable geometry.

\bibliographystyle{plainnat}
\bibliography{references}
\clearpage
\appendix

\section{On the Computational Paradigm and Practical Utility of Gaussian Discriminants}

Closed-form Gaussian heads differ from gradient-trained heads not only in statistical assumptions but also in optimization burden.
Fitting a rung of the ladder reduces to estimating sample means and (possibly regularized) covariances on a support set, followed by a single solve.
This procedure is deterministic and largely hyperparameter-light.
Aside from standard covariance regularization (e.g., shrinkage or diagonal loading), there is no learning-rate schedule, early stopping, or multi-run tuning.

The computational cost depends on the covariance structure.
Isotropic and diagonal heads require only per-dimension statistics ($O(ND)$).
Full-covariance heads (Mah-NCM/QDA) additionally estimate a $D\times D$ covariance ($O(ND^2)$) and invert or solve linear systems ($O(D^3)$), which can be non-trivial for very large $D$ or very large $N$.
However, this cost is paid once per (prior, encoder) pair and does not scale with the number of training epochs.

In contrast, gradient-based heads typically require many passes over the data and introduce additional variance from stochastic optimization and hyperparameter choices.
For our purposes, this simplicity is a feature.
The Gaussian ladder provides a controlled baseline and diagnostic, while its one-shot fitting reduces optimization confounds and makes head comparisons easier to reproduce.

\section{Proofs of Theoretical Propositions}
\label{sec:proof}
We provide formal derivations for the propositions presented in the theoretical framework \cref{sec:framework}.
We adopt the notation defined in the main text.

\subsection{Proof of \cref{prop:bayes_optimal} (Bayes-Optimal Classifier under Gaussian Assumption)}
\label{sec:proof_lda}

\begin{proposition}
If the class-conditional densities are Gaussian, $p(z \mid y=c) \sim \mathcal{N}(\mu_c, \Sigma_c)$, and the class priors are equal, $P(y=0)=P(y=1)=\frac{1}{2}$, the decision rule that minimizes the probability of error (the Bayes-optimal rule) is given by Quadratic Discriminant Analysis (QDA).
\end{proposition}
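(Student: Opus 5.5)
The plan is to start from the general characterization of the Bayes-optimal rule under 0--1 loss and then specialize it to Gaussian class-conditionals. First I would show that any decision rule $g:\mathbb{R}^D\to\{0,1\}$ has error probability
\[
P(g(z)\neq y)=\int \bigl[P(y=0)p(z\mid y=0)\mathbf{1}\{g(z)=1\}+P(y=1)p(z\mid y=1)\mathbf{1}\{g(z)=0\}\bigr]\,dz .
\]
This integral is minimized pointwise by choosing, for each $z$, the class with the larger value of $P(y=c)\,p(z\mid y=c)$. Equivalently, the optimal rule picks the maximum-a-posteriori class. Ties form a set on which either choice gives the same error, so they can be broken arbitrarily.

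Next, I would use the equal priors $P(y=0)=P(y=1)=\tfrac12$. These cancel, and the rule reduces to comparing the likelihoods $p(z\mid y=c)$. Since $\log$ is strictly increasing, this is the same as comparing log-densities. Substituting the Gaussian density gives the discriminant
\[
\delta_c(z)=-\tfrac12 (z-\mu_c)^\top\Sigma_c^{-1}(z-\mu_c)-\tfrac12\log\det\Sigma_c ,
\]
after dropping the common constant $-\tfrac D2\log(2\pi)$. The rule $\hat y=\arg\max_c\delta_c(z)$ is precisely the QDA rule of \Cref{tab:compare}. The decision boundary $\{\delta_1(z)=\delta_0(z)\}$ is a quadric: its quadratic part is $z^\top(\Sigma_0^{-1}-\Sigma_1^{-1})z$, and that term vanishes exactly when $\Sigma_0=\Sigma_1$, which also prepares \Cref{prop:homoscedastic}.

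The main obstacle is one of rigor rather than computation, and I would address it explicitly. The Gaussian densities must be well defined, so I would assume each $\Sigma_c$ is positive definite; otherwise $\Sigma_c^{-1}$ and $\log\det\Sigma_c$ do not exist, and the paper's regularized estimates are what make this hold in practice. I would also clarify that the statement's ``$\approx$'' should be read as exact equality for the optimality claim.

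A final caveat is that optimality holds for the population parameters $(\mu_c,\Sigma_c)$. The plug-in sample-moment QDA used in the experiments is only asymptotically optimal, via consistency of the sample moments, and I would mention this as a remark rather than prove it.
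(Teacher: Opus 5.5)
Your proposal is correct and takes the same route as the paper: reduce to the MAP rule, cancel the equal priors, take logarithms of the Gaussian densities, and identify the resulting discriminant $-\tfrac12 (z-\mu_c)^\top\Sigma_c^{-1}(z-\mu_c)-\tfrac12\log\det\Sigma_c$ with QDA. Two of your steps go beyond the paper. You derive MAP-optimality by pointwise minimization of the error integral, which the paper simply asserts, and you state explicitly the positive-definiteness assumption that the paper leaves implicit.
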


\begin{proof}
The Bayes-optimal decision rule minimizes the probability of misclassification by assigning a feature vector $z$ to the class $c$ with the maximum a posteriori (MAP) probability, $P(y=c|z)$.
For a binary classification task, this means that we assign $z$ to class 1 if $P(y=1|z) > P(y=0|z)$ and to class 0 otherwise.

\paragraph{Formulating the Decision Rule.}
Using Bayes' theorem, the posterior probability is $P(y=c|z) = \frac{p(z|y=c)P(y=c)}{p(z)}$.
The MAP decision rule is therefore:
\begin{equation}
\frac{p(z|y=1)P(y=1)}{p(z)} > \frac{p(z|y=0)P(y=0)}{p(z)}
\end{equation}

Since the evidence $p(z)$ is a positive common denominator and we assumed that the class priors are equal ($P(y=1) = P(y=0) = \frac{1}{2}$).
The decision rule simplifies to a comparison of the class-conditional likelihoods:
\begin{equation}
p(z|y=1) > p(z|y=0)
\end{equation}

We define the decision function $g^*(z)$ as the log-likelihood ratio: %, As the logarithm is a strictly monotonic function, the inequality is preserved if we take the logarithm of both sides:
\begin{equation}
g^*(z) = \log p(z|y=1) - \log p(z|y=0)
\label{eq:decision}
\end{equation}
The decision rule using the log-likelihood ratio is to classify as class 1 if $g^*(z) > 0$ and class 0 otherwise.

\paragraph{Introducing Gaussian PDF assumption to decision rule.}
The probability density function (PDF) for a multivariate Gaussian distribution is:
\begin{equation}
p(z|y=c) = \mathcal{N}(z; \mu_c, \Sigma_c) = \frac{1}{(2\pi)^{D/2}|\Sigma_c|^{1/2}} \exp\left(-\frac{1}{2}(z-\mu_c)^\top \Sigma_c^{-1}(z-\mu_c)\right)
\end{equation}
The corresponding log-likelihood for class $c$ is:
\begin{equation}
\log p(z|y=c) = -\frac{D}{2}\log(2\pi) - \frac{1}{2}\log|\Sigma_c| - \frac{1}{2}(z-\mu_c)^\top \Sigma_c^{-1}(z-\mu_c)
\end{equation}

Using this expression in the decision function $g^*(z)$ \cref{eq:decision}, we convert our decision rule into an approximation using Gaussian parameters after simplification.
\begin{equation}
g^*(z) \propto (z-\mu_0)^\top \Sigma_0^{-1}(z-\mu_0) - (z-\mu_1)^\top \Sigma_1^{-1}(z-\mu_1) + \log\frac{|\Sigma_0|}{|\Sigma_1|}
\label{eq:qda}
\end{equation}

This is the discriminant function for Quadratic Discriminant Analysis (QDA), which is therefore Bayes-optimal under the stated Gaussian assumption.

\end{proof}

\subsection{Proof of \cref{prop:homoscedastic} (Optimality under Homoscedasticity)}
\label{sec:proof_mah}

\begin{proposition}
If the classes are additionally assumed to be homoscedastic (i.e., they share a common covariance matrix $\Sigma_0 = \Sigma_1 = \Sigma_p$), the decision rule that minimizes the probability of error (the Bayes-optimal rule) is given by Mahalanobis Nearest Centroid Matching (Mah-NCM).
\end{proposition}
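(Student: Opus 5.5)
The plan is to specialize the QDA derivation from the proof of \cref{prop:bayes_optimal}; no new optimality argument is needed. That proposition already shows that, under Gaussian class-conditionals with equal priors, the Bayes-optimal rule assigns $z$ to class~1 exactly when the log-likelihood ratio $g^*(z)=\log p(z\mid y=1)-\log p(z\mid y=0)$ is positive. It also shows that $g^*$ is, up to a positive factor $\tfrac12$, the expression in \cref{eq:qda}. Since the homoscedastic model is a special case of that setting, the Bayes-optimal rule is still the sign of \cref{eq:qda}. It remains to evaluate \cref{eq:qda} at $\Sigma_0=\Sigma_1=\Sigma_p$.

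\textbf{Step 1.} Substitute the shared covariance into \cref{eq:qda}. The log-determinant term $\log(|\Sigma_0|/|\Sigma_1|)$ vanishes identically. This leaves
\[
2\,g^*(z) = (z-\mu_0)^\top\Sigma_p^{-1}(z-\mu_0) - (z-\mu_1)^\top\Sigma_p^{-1}(z-\mu_1).
\]
Hence class~1 is chosen iff $d_{\Sigma_p}(z,\mu_1)<d_{\Sigma_p}(z,\mu_0)$, where $d_{\Sigma_p}(z,\mu)^2=(z-\mu)^\top\Sigma_p^{-1}(z-\mu)$. This is precisely the Mah-NCM rule: assign $z$ to the class whose centroid is nearest in Mahalanobis distance.

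\textbf{Step 2.} Expand the quadratics to confirm the linear-boundary claim of \Cref{tab:compare}. Using the symmetry of $\Sigma_p^{-1}$, the terms $z^\top\Sigma_p^{-1}z$ cancel, giving
\[
2\,g^*(z)=2(\mu_1-\mu_0)^\top\Sigma_p^{-1}z-\mu_1^\top\Sigma_p^{-1}\mu_1+\mu_0^\top\Sigma_p^{-1}\mu_0 .
\]
This has the form $w^\top z+b$ with $w=\Sigma_p^{-1}(\mu_1-\mu_0)$, i.e.\ the Fisher/LDA hyperplane. It links directly to the paper's remark that Mah-NCM has the same functional form as a linear probe.

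\textbf{Main obstacle.} The algebra is routine, so the one point needing care is interpretive. Bayes-optimality holds for the \emph{population} parameters $(\mu_c,\Sigma_p)$ and requires $\Sigma_p$ to be positive definite so that the Gaussian density and $\Sigma_p^{-1}$ exist. I would state positive-definiteness explicitly as a hypothesis. I would also note that the plug-in pooled estimator, with shrinkage and diagonal loading, only approximates this rule. Ties on the hyperplane $g^*(z)=0$ have probability zero under a nondegenerate Gaussian, so they do not affect the error probability.
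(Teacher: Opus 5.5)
Your proposal is correct and follows essentially the same route as the paper: specialize the QDA discriminant to $\Sigma_0=\Sigma_1=\Sigma_p$, drop the log-determinant term, and expand the quadratics so that the $z^\top\Sigma_p^{-1}z$ terms cancel, leaving the hyperplane with $w=\Sigma_p^{-1}(\mu_1-\mu_0)$. Your Step~1, which reads off the nearest-Mahalanobis-centroid rule before expanding, ties the result to the name Mah-NCM more directly than the paper does, since the paper identifies Mah-NCM only through the linear form.
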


\begin{proof}
We begin with the Bayes-optimal QDA decision function derived above \cref{eq:qda} and apply the homoscedasticity assumption $\Sigma_0 = \Sigma_1 = \Sigma_p$ to the decision function \cref{eq:qda} and after simplifications of the quadratic forms, the decision rule becomes of the form:
\begin{align}
g^*(z) & \propto (z-\mu_0)^\top \Sigma_p^{-1}(z-\mu_0) - (z-\mu_1)^\top \Sigma_p^{-1}(z-\mu_1) \\
& \propto \left( z^\top\Sigma_p^{-1}z - 2\mu_0^\top\Sigma_p^{-1}z + \mu_0^\top\Sigma_p^{-1}\mu_0 \right) - \left( z^\top\Sigma_p^{-1}z - 2\mu_1^\top\Sigma_p^{-1}z + \mu_1^\top\Sigma_p^{-1}\mu_1 \right) \\
& \propto 2\mu_1^\top\Sigma_p^{-1}z - 2\mu_0^\top\Sigma_p^{-1}z + \mu_0^\top\Sigma_p^{-1}\mu_0 - \mu_1^\top\Sigma_p^{-1}\mu_1 \\
& \propto 2(\mu_1 - \mu_0)^\top\Sigma_p^{-1}z - (\mu_1^\top\Sigma_p^{-1}\mu_1 - \mu_0^\top\Sigma_p^{-1}\mu_0) \\
& \propto 2(\mu_1 - \mu_0)^\top\Sigma_p^{-1}z - (\mu_1 - \mu_0)^\top\Sigma_p^{-1}(\mu_1 + \mu_0)
 \label{eq:mah}
\end{align}

Note that this assumption of homoscedasticity $\Sigma_0 = \Sigma_1 = \Sigma_p$ linearizes the decision boundary:
\begin{align}
g^*(z) \propto & w^\top z + b \\
\label{eq:mah-ncm}
\text{where }
w &= \Sigma_p^{-1}(\mu_1 - \mu_0) \\
b &= -\frac{1}{2}(\mu_1 + \mu_0)^\top w = -\frac{1}{2}(\mu_1 + \mu_0)^\top\Sigma_p^{-1}(\mu_1 - \mu_0)
\end{align}

This is the discriminant function for Mahalanobis Nearest Centroid Matching (Mah-NCM), which is therefore Bayes-optimal under the stated homoscedastic Gaussian assumption.

\end{proof}

\subsection{Proof Sketch for \cref{prop:predictive_sufficiency} (Performance Stability under Bounded Drift)}

\begin{proposition}
\label{sec:proof_performance}
Assume a homoscedastic classifier with shared covariance $\Sigma$ and equal priors. If the statistical drift between the training and testing distributions is bounded by $\|\mu_c^{\text{test}} - \mu_c^{\text{train}}\|_2 \le \epsilon_\mu$ and $\|\Sigma^{\text{test}} - \Sigma^{\text{train}}\|_F \le \epsilon_\Sigma$, then $|\mathrm{AUC}_{\text{test}} - \mathrm{AUC}_{\text{train}}| \le L(\epsilon_\mu + \epsilon_\Sigma)$.
\end{proposition}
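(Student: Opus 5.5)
The plan is to reduce the AUC of the homoscedastic (Mah-NCM) classifier to a closed-form function of the class moments, and then treat that function as a smooth map of the moments. The classifier fixes its score from the training moments, $s(z)=w^\top z$ with $w=(\Sigma^{\text{train}})^{-1}(\mu_1^{\text{train}}-\mu_0^{\text{train}})$ as in \cref{eq:mah-ncm}. The bias $b$ does not matter, since AUC is threshold-free.

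Under a Gaussian class-conditional model with means $\mu_c$ and shared covariance $\Sigma$, the score is itself Gaussian. We have $s\mid y=c\sim\mathcal{N}(w^\top\mu_c,\,w^\top\Sigma w)$. For two independent Gaussians of equal variance, $\Pr(s_1>s_0)=\Phi\bigl(w^\top\delta/\sqrt{2\,w^\top\Sigma w}\bigr)$, where $\delta=\mu_1-\mu_0$. We therefore write
\[
A(\mu_0,\mu_1,\Sigma)=\Phi\!\left(\frac{w^\top(\mu_1-\mu_0)}{\sqrt{2\,w^\top\Sigma w}}\right).
\]
Then $\mathrm{AUC}_{\text{train}}=A(\mu^{\text{train}},\Sigma^{\text{train}})$, which equals $\Phi(\sqrt{\mathcal{D}_\mu}/\sqrt2)$ with $\mathcal{D}_\mu=\delta^\top\Sigma^{-1}\delta$. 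Likewise $\mathrm{AUC}_{\text{test}}=A(\mu^{\text{test}},\Sigma^{\text{test}})$ with the same $w$. This identification is also the monotone dependence on the Fisher margin mentioned after \Cref{prop:stability}.

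Next, bound $|A(\theta^{\text{test}})-A(\theta^{\text{train}})|$ by the mean value theorem along the segment between the two parameter sets. Since $\Phi'\le 1/\sqrt{2\pi}$, it suffices to bound the gradient of the ratio $r=w^\top\delta/\sqrt{2\,w^\top\Sigma w}$.
\begin{itemize}
\item With respect to each $\mu_c$, the gradient is $\pm w/\sqrt{2\,w^\top\Sigma w}$. Its norm is at most $\|w\|_2/\sqrt{2\lambda_{\min}(\Sigma)}\,\|w\|_2^{-1}\cdot\|w\|_2$, which is controlled by $\|\Sigma^{-1}\|_2^{1/2}$.
\item With respect to $\Sigma$, the gradient is $-\tfrac{1}{2}\,w^\top\delta\,(2w^\top\Sigma w)^{-3/2}\cdot 2\,ww^\top$. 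Its Frobenius norm is at most $|w^\top\delta|\,\|w\|_2^2\,(2w^\top\Sigma w)^{-3/2}$, which is bounded by $\|\delta\|_2\,\|\Sigma^{-1}\|_2^{3/2}$ up to constants.
\end{itemize}
Combining these, $|\Delta\mathrm{AUC}|\le \frac{1}{\sqrt{2\pi}}\bigl(2\sup\|\nabla_\mu r\|\,\epsilon_\mu+\sup\|\nabla_\Sigma r\|_F\,\epsilon_\Sigma\bigr)$. This is at most $L(\epsilon_\mu+\epsilon_\Sigma)$, taking $L$ as the maximum of the two coefficients.

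The main obstacle is making $L$ finite uniformly along the interpolation segment. This requires $\lambda_{\min}$ of the interpolated covariances to stay away from zero and $\|\delta\|$ to stay bounded. I would first assume $\epsilon_\Sigma<\lambda_{\min}(\Sigma^{\text{train}})/2$. Weyl's inequality together with $\|\cdot\|_2\le\|\cdot\|_F$ then gives $\lambda_{\min}\ge\lambda_{\min}(\Sigma^{\text{train}})/2$ on the whole segment, and $\|\delta\|\le\|\delta^{\text{train}}\|+2\epsilon_\mu$. Under this assumption $L$ depends only on $\|(\Sigma^{\text{train}})^{-1}\|_2$, $\|\delta^{\text{train}}\|_2$ and $\epsilon_\mu$. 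In line with the paper's regularized covariances, I would state the result as a local (or, with diagonal loading, global) Lipschitz bound.
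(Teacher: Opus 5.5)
Your argument is correct. It bounds a different quantity from the paper's sketch and proves the bound by a different route.

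The paper substitutes both parameter sets into the closed form $H(\mu,\Sigma)=\Phi\bigl(\sqrt{\mathcal{D}_\mu/2}\bigr)$. This implicitly refits $w=\Sigma^{-1}\delta$ on the test moments, so its $\mathrm{AUC}_{\text{test}}$ is the oracle homoscedastic AUC of the test distribution. It then gets Lipschitz continuity abstractly, by composing the smooth map $(\delta,\Sigma)\mapsto\mathcal{D}_\mu$, the square root on $(0,\infty)$, and the globally Lipschitz $\Phi$. Both $L$ and the size of the admissible neighborhood are left unspecified.

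You instead freeze $w$ at the training moments. Your $\mathrm{AUC}_{\text{test}}$ is therefore the AUC of the deployed, train-fitted rule on the shifted distribution, which never exceeds the paper's oracle quantity (Cauchy--Schwarz in the $\Sigma^{\text{test}}$ inner product). You also make every step explicit: gradients of the scale-invariant ratio $r$, the mean value theorem along the segment, and Weyl's inequality to keep $\lambda_{\min}$ away from zero.

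The paper's route buys the identity $\mathrm{AUC}=\Phi\bigl(\sqrt{\mathcal{D}_\mu/2}\bigr)$ at both endpoints, which is what lets Proposition~\ref{prop:stability} be read directly as an AUC statement. Yours buys a bound on the quantity that actually governs transfer, an explicit $L$, and a precise locality condition.

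Two small repairs are needed.
\begin{enumerate}
\item Your bound on $\|\nabla_{\mu_c} r\|_2$ is garbled as written. It should read $\|w\|_2/\sqrt{2w^\top\Sigma w}\le 1/\sqrt{2\lambda_{\min}(\Sigma)}$, which is independent of $w$.
\item The parenthetical ``global with diagonal loading'' does not yet follow, because your $\Sigma$-coefficient still grows with $\|\delta\|_2\le\|\delta^{\text{train}}\|_2+2\epsilon_\mu$. To get it, keep $\Phi'(r)$ attached to the $\Sigma$-term, so that $\Phi'(r)\,\|\nabla_\Sigma r\|_F=\Phi'(r)\,|r|\,\|w\|_2^2/(2w^\top\Sigma w)\le \Phi'(1)/(2\lambda_{\min}(\Sigma))$. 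Then use concavity of $\lambda_{\min}$ on the convex segment, which gives $\lambda_{\min}(\Sigma(t))\ge\min\{\lambda_{\min}(\Sigma^{\text{train}}),\lambda_{\min}(\Sigma^{\text{test}})\}$. Together these remove both the dependence on $\delta$ and the smallness assumption on $\epsilon_\Sigma$.
\end{enumerate}
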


\begin{proof}[Proof Sketch]
A full proof requires extensive details from perturbation theory.
We provide a rigorous sketch outlining the main logical steps.

\paragraph{Expressing AUC as a function of statistical parameters.}
For a linear classifier with weight vector $w$, the AUC is the probability that a randomly drawn positive sample scores higher than a randomly drawn negative sample: $\text{AUC} = P(w^\top Z_1 > w^\top Z_0)$, where $Z_c \sim \mathcal{N}(\mu_c, \Sigma)$.
Let the score difference be the random variable $Y = w^\top(Z_1 - Z_0)$. Since $Z_1$ and $Z_0$ are independent Gaussian variables, their difference is also Gaussian: $Z_1 - Z_0 \sim \mathcal{N}(\mu_1 - \mu_0, 2\Sigma)$. Therefore, $Y$ is a scalar Gaussian with mean $\mathbb{E}[Y] = w^\top(\mu_1 - \mu_0)$ and variance $\text{Var}(Y) = w^\top(2\Sigma)w$.
The AUC is $P(Y > 0)$, which can be expressed using the CDF of the standard normal distribution, $\Phi$:
\begin{equation}
\text{AUC} = P\left(\frac{Y - \mathbb{E}[Y]}{\sqrt{\text{Var}(Y)}} > \frac{-\mathbb{E}[Y]}{\sqrt{\text{Var}(Y)}}\right) = \Phi\left(\frac{\mathbb{E}[Y]}{\sqrt{\text{Var}(Y)}}\right)
\end{equation}
Under the homoscedastic Gaussian assumption, for the optimal linear classifier Mahalanobis Nearest Centroid Matching (Mah-NCM) where $w = \Sigma^{-1}(\mu_1-\mu_0)$, this simplifies to:
\begin{equation}
\text{AUC} = \Phi\left(\frac{(\mu_1-\mu_0)^\top\Sigma^{-1}(\mu_1-\mu_0)}{\sqrt{2(\mu_1-\mu_0)^\top\Sigma^{-1}\Sigma\Sigma^{-1}(\mu_1-\mu_0)}}\right) = \Phi\left(\sqrt{\frac{\mathcal{D}_\mu}{2}}\right)
\end{equation}
This establishes that the AUC is a direct and smooth function of the Fisher margin $\mathcal{D}_\mu$.

\paragraph{Establishing Lipschitz Continuity.}
It remains to show that $\text{AUC}$ is locally Lipschitz.
The map is a composition of three parts:
\begin{enumerate}
    \item The Fisher Margin $\mathcal{D}_\mu = (\mu_1-\mu_0)^\top\Sigma^{-1}(\mu_1-\mu_0)$ is a smooth (infinitely differentiable) function of its arguments as long as $\Sigma$ is invertible. (Note that all the training datasets we used in the experiments made this assumption of invertible $\Sigma$ valid by ensuring $n>>d$, future works can explore the shrinkage $\Sigma$ using Ledoit-Wolf Shrinkage when training sets are of form $n<<d$, ensuring $\Sigma$ is invertible).
    \item The square root function, which is locally Lipschitz in $(0, \infty)$.
    \item The standard normal CDF $\Phi(x)$, which is globally Lipschitz because its derivative (the normal PDF) is bounded by $1/\sqrt{2\pi}$.
\end{enumerate}
Since the composition of local Lipschitz functions is locally Lipschitz, the overall function $AUC$ is locally Lipschitz.

\paragraph{Deriving the bounds on deviation.}
By the definition of local Lipschitz continuity, for small perturbations in the arguments, the change in the function's value is bounded by a constant times the magnitude of the perturbation.
Therefore, for perturbations bounded by $\epsilon_\mu$ and $\epsilon_\Sigma$, we have
\begin{equation}
    |\mathrm{AUC}_{\text{test}} - \mathrm{AUC}_{\text{train}}| = |H(\mu^{\text{test}}, \Sigma^{\text{test}}) - H(\mu^{\text{train}}, \Sigma^{\text{train}})| \le L \cdot (\|\mu^{\text{test}} - \mu^{\text{train}}\| + \|\Sigma^{\text{test}} - \Sigma^{\text{train}}\|)
\end{equation}
This can be expressed as $|\mathrm{AUC}_{\text{test}} - \mathrm{AUC}_{\text{train}}| \le L_1 \epsilon_\mu + L_2 \epsilon_\Sigma \le L(\epsilon_\mu + \epsilon_\Sigma)$, where $L = \max(L_1, L_2)$. The Lipschitz constant $L$ depends on the local derivatives of $H$, which are functions of the training parameters, critically including the spectral norm of the inverse covariance, $\|\Sigma^{-1}\|_2$.
\end{proof}

\subsection{Proof of \cref{prop:stability} (Stability of the Fisher Margin under Distributional Drift)}
\label{sec:proof_stability}
\begin{proposition}
Assume that both the training and testing domains are characterized by homoscedastic Gaussian parameters $(\mu_c^{\text{train}}, \Sigma_p^{\text{train}})$ and $(\mu_c^{\text{test}}, \Sigma_p^{\text{test}})$, respectively.
If the distributional drift is bounded such that $\|\delta^{\text{test}}-\delta^{\text{train}}\|_2 \le \epsilon$ and $\|\Sigma_p^{\text{test}}-\Sigma_p^{\text{train}}\|_F \le \eta$, where $\delta = \mu_1 - \mu_0$, then the absolute change in the Fisher Margin $\mathcal{D}_\mu$ is bounded, to a first order, by $|\mathcal{D}_\mu^{\text{test}} - \mathcal{D}_\mu^{\text{train}}| \le 2 \|(\Sigma_p^{\text{train}})^{-1}\|_2 \|\delta^{\text{train}}\|_2 \epsilon + O(\eta)$.
\end{proposition}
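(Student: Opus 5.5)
We decompose the change in the Fisher margin $\mathcal{D}_\mu=\delta^\top\Sigma_p^{-1}\delta$ into a mean-drift part and a covariance-drift part by inserting the mixed term $(\delta^{\text{test}})^\top(\Sigma_p^{\text{train}})^{-1}\delta^{\text{test}}$. Write $A=(\Sigma_p^{\text{train}})^{-1}$, $\Delta\delta=\delta^{\text{test}}-\delta^{\text{train}}$, and $\Delta\Sigma=\Sigma_p^{\text{test}}-\Sigma_p^{\text{train}}$. The triangle inequality then gives
\begin{equation*}
|\mathcal{D}_\mu^{\text{test}}-\mathcal{D}_\mu^{\text{train}}|\le \bigl|(\delta^{\text{test}})^\top A\,\delta^{\text{test}}-(\delta^{\text{train}})^\top A\,\delta^{\text{train}}\bigr| + \bigl|(\delta^{\text{test}})^\top\bigl((\Sigma_p^{\text{test}})^{-1}-A\bigr)\delta^{\text{test}}\bigr|,
\end{equation*}
and the two terms are handled separately.

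\textbf{Mean-drift term.} Since $A$ is symmetric, the first term expands exactly as
\begin{equation*}
2(\delta^{\text{train}})^\top A\,\Delta\delta+\Delta\delta^\top A\,\Delta\delta.
\end{equation*}
By Cauchy--Schwarz and the definition of the spectral norm, this is at most
\begin{equation*}
2\|A\|_2\|\delta^{\text{train}}\|_2\,\epsilon+\|A\|_2\,\epsilon^2.
\end{equation*}
The quadratic remainder $\|A\|_2\epsilon^2$ is second order, and dropping it gives the leading term $2\|(\Sigma_p^{\text{train}})^{-1}\|_2\|\delta^{\text{train}}\|_2\,\epsilon$ in the statement. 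This is the sense in which the bound holds \emph{to first order}.

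\textbf{Covariance-drift term.} We use the resolvent identity
\begin{equation*}
(\Sigma_p^{\text{test}})^{-1}-A=-(\Sigma_p^{\text{test}})^{-1}\,\Delta\Sigma\,A.
\end{equation*}
This requires $\Sigma_p^{\text{test}}$ to be invertible. It is, for instance, when $\eta\|A\|_2<1$, because $\|\Delta\Sigma\|_2\le\|\Delta\Sigma\|_F\le\eta$ and a Neumann-series argument then applies. The same argument gives
\begin{equation*}
\|(\Sigma_p^{\text{test}})^{-1}\|_2\le \frac{\|A\|_2}{1-\eta\|A\|_2}.
\end{equation*}
Hence the second term is bounded by
\begin{equation*}
\frac{\|A\|_2^2}{1-\eta\|A\|_2}\,\|\delta^{\text{test}}\|_2^2\,\eta \le \frac{\|A\|_2^2\bigl(\|\delta^{\text{train}}\|_2+\epsilon\bigr)^2}{1-\eta\|A\|_2}\,\eta .
\end{equation*}
This is $O(\eta)$ with a constant depending only on training quantities, namely $\|A\|_2^2\|\delta^{\text{train}}\|_2^2$ at leading order. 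Adding the two bounds and discarding the $\epsilon^2$ and $\epsilon\eta$ cross terms yields the claimed inequality.

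\textbf{Main obstacle.} The algebra is routine; the delicate step is making the covariance term precise. One must fix a regime, such as $\eta<1/(2\|A\|_2)$, so that the inverse exists and the constant hidden in $O(\eta)$ stays uniformly bounded. One must also state explicitly that "first order" means neglecting the $O(\epsilon^2)$ and $O(\epsilon\eta)$ terms. If the paper's regularized (shrinkage) covariance estimates are used, the same bound holds with $A$ replaced by the regularized inverse, whose spectral norm is controlled by the loading parameter.
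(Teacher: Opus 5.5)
Your proof is correct, but it takes a different route from the paper's. The paper linearizes $\mathcal{D}_\mu=\delta^\top\Sigma_p^{-1}\delta$ around the training parameters with a first-order Taylor expansion. It computes $\nabla_\delta\mathcal{D}_\mu=2\Sigma_p^{-1}\delta$ and $\nabla_{\Sigma_p}\mathcal{D}_\mu=-\Sigma_p^{-1}\delta\delta^\top\Sigma_p^{-1}$ by matrix calculus, then bounds the two linear terms by Cauchy--Schwarz (Euclidean for the mean term, Frobenius for the covariance term). The Taylor remainder is simply discarded: no regime of validity is stated, and there is no check that $\Sigma_p^{\text{test}}$ is invertible.

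You instead telescope exactly through the mixed term $(\delta^{\text{test}})^\top A\,\delta^{\text{test}}$ and expand the quadratic form exactly. You then replace the derivative of the inverse by the resolvent identity together with a Neumann-series bound. This turns the statement into a genuinely non-asymptotic inequality, valid whenever $\eta\|A\|_2<1$. Every discarded quantity is explicit: the $\|A\|_2\epsilon^2$ term, and the $\epsilon$- and $(1-\eta\|A\|_2)^{-1}$-dependence of the covariance term. So ``to first order'' acquires a precise meaning.

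What the paper's route gives in exchange is a sharper leading constant in the covariance term. Its constant is $\|A\delta^{\text{train}}(\delta^{\text{train}})^\top A\|_F=\|A\delta^{\text{train}}\|_2^2$, which can be much smaller than your $\|A\|_2^2\|\delta^{\text{train}}\|_2^2$ when $\delta^{\text{train}}$ lies far from the top eigenspace of $A$. You can recover it within your framework by bounding the covariance term by $\|(\Sigma_p^{\text{test}})^{-1}\delta^{\text{test}}\|_2\,\|\Delta\Sigma\|_2\,\|A\delta^{\text{test}}\|_2$ rather than splitting off operator norms. The leading part of that bound is $\|A\delta^{\text{train}}\|_2^2\,\eta$. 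Since the statement only claims $O(\eta)$, this is a refinement, not a gap.
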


\begin{proof}
We analyze the change in Fisher Margin $\mathcal{D}_\mu = \delta^\top \Sigma_p^{-1} \delta$ under small perturbations.
We define perturbations as $\Delta\delta = \delta^{\text{test}} - \delta^{\text{train}}$ and $\Delta\Sigma_p = \Sigma_p^{\text{test}} - \Sigma_p^{\text{train}}$, with norms bounded by $\|\Delta\delta\|_2 \le \epsilon$ and $\|\Delta\Sigma_p\|_F \le \eta$.

\paragraph{First-Order Taylor Expansion.}
On expanding $\mathcal{D}_\mu$ around the training parameters $(\delta^{\text{train}}, \Sigma_p^{\text{train}})$, in first order, the change is given by the total derivative:
\begin{equation}
\Delta\mathcal{D}_\mu = \mathcal{D}_\mu^{\text{test}} - \mathcal{D}_\mu^{\text{train}} \approx \left(\nabla_\delta \mathcal{D}_\mu\right)^\top \Delta\delta + \langle \nabla_{\Sigma_p} \mathcal{D}_\mu, \Delta\Sigma_p \rangle_F
\label{eq:taylor}
\end{equation}
where the gradients are evaluated at the training parameters.

\paragraph{Gradient with respect to $\delta$.}
\begin{equation}
\nabla_\delta \mathcal{D}_\mu = 2 \Sigma_p^{-1} \delta
\end{equation}
\paragraph{Gradient with respect to $\Sigma_p$.}
\begin{align}
d\mathcal{D}_\mu &= (d\delta^\top)\Sigma_p^{-1}\delta + \delta^\top\Sigma_p^{-1}(d\delta) + \delta^\top (d\Sigma_p^{-1}) \delta \\
&= 2\delta^\top\Sigma_p^{-1}d\delta - \delta^\top \Sigma_p^{-1}(d\Sigma_p)\Sigma_p^{-1}\delta \\
&= \text{Tr}\left(2\delta^\top\Sigma_p^{-1}d\delta\right) - \text{Tr}\left(\delta^\top \Sigma_p^{-1}(d\Sigma_p)\Sigma_p^{-1}\delta\right) \\
&= \text{Tr}\left(2\delta^\top\Sigma_p^{-1}d\delta\right) - \text{Tr}\left(\Sigma_p^{-1}\delta\delta^\top\Sigma_p^{-1} d\Sigma_p\right)
\end{align}
From the definition of the Frobenius inner product, $\langle A, B \rangle_F = \text{Tr}(A^\top B)$, the gradient with respect to $\Sigma_p$ is:
\begin{equation}
\nabla_{\Sigma_p} \mathcal{D}_\mu = -\left(\Sigma_p^{-1} \delta \delta^\top \Sigma_p^{-1}\right)^\top = -\Sigma_p^{-1} \delta \delta^\top \Sigma_p^{-1}
\end{equation}
since $\Sigma_p$ is symmetric.

\paragraph{Deriving the bound on the magnitude of the change.}
Using the triangle inequality on the first-order expansion \cref{eq:taylor}:
\begin{equation}
|\Delta\mathcal{D}_\mu| \le \left|\left(\nabla_\delta \mathcal{D}_\mu\right)^\top \Delta\delta\right| + \left|\langle \nabla_{\Sigma_p} \mathcal{D}_\mu, \Delta\Sigma_p \rangle_F\right|
\end{equation}
We bound each term separately:

\paragraph{Mean Drift Term.}
By the Cauchy-Schwarz inequality:
\begin{align*}
\left|\left(2 (\Sigma_p^{\text{train}})^{-1} \delta^{\text{train}}\right)^\top \Delta\delta\right| &\le \|2 (\Sigma_p^{\text{train}})^{-1} \delta^{\text{train}}\|_2 \|\Delta\delta\|_2 \\
&\le 2 \|(\Sigma_p^{\text{train}})^{-1}\|_2 \|\delta^{\text{train}}\|_2 \epsilon
\end{align*}
where $\|\cdot\|_2$ is the spectral norm for matrices and the Euclidean norm for vectors.

\paragraph{Covariance Drift Term.}
Using the property of the Frobenius inner product, $|\langle A, B \rangle_F| \le \|A\|_F \|B\|_F$:
\begin{align*}
\left|\langle \nabla_{\Sigma_p} \mathcal{D}_\mu, \Delta\Sigma_p \rangle_F\right| &\le \|\nabla_{\Sigma_p} \mathcal{D}_\mu\|_F \|\Delta\Sigma_p\|_F \\
&\le C \cdot \eta = O(\eta)
\end{align*}
where $C = \|(\Sigma_p^{\text{train}})^{-1} \delta^{\text{train}} (\delta^{\text{train}})^\top (\Sigma_p^{\text{train}})^{-1}\|_F$ is a constant determined by the training distribution.

Combining the bounds for both terms yields:
\begin{equation}
    \left|\mathcal{D}_\mu^{\text{test}} - \mathcal{D}_\mu^{\text{train}}\right| \le 2 \|(\Sigma_p^{\text{train}})^{-1}\|_2 \|\delta^{\text{train}}\|_2 \epsilon + O(\eta)
\end{equation}
This shows that the change in the Fisher Margin $\mathcal{D}_\mu$ is controlled, to a first order, by the magnitude of the drift in the class statistics.
\end{proof}

\section{Analytical Forms of the Gaussian Discriminant Ladder}
\label{sec:analytical_classifiers}

For each rung, we estimate class means $\mu_c$ and (optionally) covariance matrices from a labeled \emph{support} set in the training prior.
Given a test feature $z$, the heads produce unnormalized class scores $g_c(z)$ and predict $\hat{y}(z)=\arg\max_{c\in\{0,1\}} g_c(z)$.
The score functions used in \Cref{tab:compare} are listed below. (Additive constants shared across classes are omitted.)

\paragraph{Class priors.}
To avoid confounding head comparisons with class-imbalance in the support prior, we use \emph{uniform} class priors by default ($\pi_0=\pi_1=\tfrac{1}{2}$), so any $\log\pi_c$ terms are class-independent and may be dropped.
When we report results with empirical priors, $\pi_c$ denotes the support-set class frequency and is stated explicitly.
Balanced evaluation metrics (e.g., balanced accuracy) do \emph{not} in general remove prior-induced offsets in the decision rule.

\paragraph{Euc-NCM (isotropic, shared covariance).}
\begin{equation}
g_c(z) = -\|z-\mu_c\|_2^2 .
\end{equation}

\paragraph{Cos-NCM (cosine similarity).}
\begin{equation}
g_c(z) = \left\langle \frac{z}{\|z\|_2},\,\frac{\mu_c}{\|\mu_c\|_2}\right\rangle .
\end{equation}

\paragraph{GNB (diagonal covariance).}
Let $\Sigma_c=\mathrm{diag}(\sigma_c^2)$.
\begin{equation}
g_c(z)= -\tfrac{1}{2}\sum_{j=1}^d \left(\frac{(z_j-\mu_{c,j})^2}{\sigma_{c,j}^2} + \log \sigma_{c,j}^2 \right) + \log \pi_c .
\end{equation}

\paragraph{Mah-NCM / LDA (shared full covariance).}
Let $\Sigma_0=\Sigma_1=\Sigma$.
\begin{equation}
g_c(z)= z^\top \Sigma^{-1}\mu_c - \tfrac{1}{2}\mu_c^\top \Sigma^{-1}\mu_c + \log\pi_c.
\end{equation}

\paragraph{QDA (class-specific full covariance).}
\begin{equation}
g_c(z)= -\tfrac{1}{2}(z-\mu_c)^\top \Sigma_c^{-1}(z-\mu_c) - \tfrac{1}{2}\log\det(\Sigma_c) + \log \pi_c.
\end{equation}
(With the default uniform priors used in this paper, $\log\pi_c$ is constant across $c$ and is omitted in implementation.)

\section{Details about existing datasets in our \Benchmark evaluation suite}
\label{sec:extended_setup}

The following tables list the datasets used for training (\cref{tab:summarytrain}) and evaluation (\cref{tab:summaryval}) to make the experimental setup reproducible.
Percept-Lens uses public datasets, leaves support priors out of evaluation, and combines generator style, source-domain, and post-processing shifts rather than a single generator-only shift.

\paragraph{Covariance estimation and numerical stability.}
For full-covariance rules (Mah-NCM/QDA), covariance inversion in high-dimensional feature spaces can be ill-conditioned, especially in low-shot regimes.
We therefore regularize covariance estimates before computing $\Sigma^{-1}$ and $\log\det(\Sigma)$.
Concretely, we use an empirical covariance estimate with diagonal loading,
$\widehat{\Sigma} \leftarrow \widehat{\Sigma} + \varepsilon I$,
and compute inverses via a numerically stable pseudo-inverse (small singular values truncated) to avoid catastrophic failures when $\widehat{\Sigma}$ is nearly singular.
Log-determinants are computed from the (clipped) spectrum of $\widehat{\Sigma}$ to guarantee finite values.
All priors and encoders are evaluated with the \emph{same} estimator and regularization hyperparameters to avoid confounding.

We use standard regularization (shrinkage estimators such as Ledoit--Wolf, plus diagonal loading $\epsilon I$) before computing $\Sigma^{-1}$ and log-determinants, when computing the covariances for data-efficiency results in \Cref{tab:efficiency,tab:auc_efficiency}.
This is because under extreme data shrinkage, we enter the $N<D$ regime (number of samples smaller than feature dimension), where empirical covariance estimates are unstable.
All priors and backbones are evaluated with the same estimator to avoid confounding.

\begin{table*}[!t]
\fontsize{11.5}{14.25}\selectfont
\centering
\caption{
\textbf{Summary of training priors used for conditioning.}
We list the public datasets used as training priors in our experiments and report the number of real and synthetic images available in the version used by our pipeline.
\ding{168} indicates that counts correspond to the publicly available subset used in our preprocessing and may differ from the full dataset described in the original source due to corrupted images in the original source.
}\label{tab:summarytrain}
\begin{tabular}{lrrr}
\toprule
\textbf{Training Dataset}
& \textbf{Total Images} & \textbf{Real} & \textbf{Synthetic}
\\
\Midrule
\CNNSpot \citep{wang2020cnngenerated}                &   720,119 &   360,059 &   360,060 \\
\GenImage ~\ding{168} \citep{zhu2023genimage}        & 2,254,762 & 1,116,779 & 1,137,983 \\
\quad \texttt{GenImage-SDv1} \citep{zhu2023genimage} &   323,997 &   162,000 &   161,997 \\
\DRCT ~\ding{168} \citep{chen2024drct}               & 2,247,453 &   118,287 & 2,129,166 \\
\quad \texttt{DRCT-SDv1} \citep{chen2024drct}        &   473,148 &   118,287 &   354,861 \\
\quad \texttt{DRCT-SDv2} \citep{chen2024drct}        &   473,148 &   118,287 &   354,861 \\
\CF ~\ding{168} \citep{park2025community}            & 4,386,820 & 1,678,386 & 2,708,434 \\
\ELSA ~\ding{168} \citep{baraldi2025contrasting}     & 8,417,550 & 1,683,511 & 6,734,039 \\
\bottomrule
\end{tabular}
\end{table*}

\FloatBarrier
\begingroup
\fontsize{9}{11.75}\selectfont
\setlength{\tabcolsep}{4pt}
\centering
\begin{longtable}{@{}>{\raggedright\arraybackslash}p{\dimexpr0.52\linewidth-1.5\tabcolsep\relax}>{\raggedleft\arraybackslash}p{\dimexpr0.16\linewidth-1.5\tabcolsep\relax}>{\raggedleft\arraybackslash}p{\dimexpr0.16\linewidth-1.5\tabcolsep\relax}>{\raggedleft\arraybackslash}p{\dimexpr0.16\linewidth-1.5\tabcolsep\relax}@{}}
\caption{
\textbf{Summary of \Benchmark evaluation datasets.}
We list the public datasets used to evaluate generalization and report the number of real and synthetic images.
\ding{169} indicates that we evaluate on all officially released subsets/variants provided by the source; when a dataset provides multiple transformations of the \emph{same} underlying image, we treat each transformation as a separate evaluation subset and do not interpret the resulting counts as independent samples.
}
\label{tab:summaryval} \\
\\
\toprule
\textbf{Evaluation Dataset} & \textbf{Total Images} & \textbf{Real} & \textbf{Synthetic} \\
\Midrule
\endfirsthead

\toprule
\textbf{Evaluation Dataset} & \textbf{Total Images} & \textbf{Real} & \textbf{Synthetic} \\
\Midrule
\endhead

\midrule
\multicolumn{4}{r}{\footnotesize Continued on next page} \\
\endfoot

\bottomrule
\endlastfoot
\rowcolor{gray!20}
\multicolumn{4}{c}{\textbf{Out-of-Distribution Existing Datasets with only Real Images}} \\
\texttt{Unsplash (Lite Subset)} \citep{unsplash}               &     24,969 &     24,969 & --- \\
\texttt{InstagramImagesWithCaptions} \citep{instagrama}        &     34,927 &     34,927 & --- \\
\texttt{Anime Faces Dataset} \citep{animeface}                 &     63,565 &     63,565 & --- \\
\texttt{Anime Images} \citep{animeimages}                      &     82,975 &     82,975 & --- \\
\texttt{CelebA-Spoof} \ding{169} \citep{zhang2020celebaspoof}  &    561,575 &    561,575 & --- \\
\midrule
\rowcolor{gray!20}
\multicolumn{4}{c}{\textbf{Out-of-Distribution Existing Datasets with only Synthetic Images}} \\
\texttt{AGIQA-1k} \citep{zhang2023perceptual}        &       1,080 & --- &      1,080 \\
\texttt{AGIQA-3k} \citep{li2024agiqa3k}              &       2,982 & --- &      2,982 \\
\texttt{SPAI} \citep{karageorgiou2025anyresolution}  &       3,638 & --- &      3,638 \\
\texttt{SynthBuster Extended} \citep{bammey2024synthbuster,guillaro2025biasfree}   &      11,003 & --- &     11,003 \\
\texttt{SynthScars} \citep{kang2025legion}           &      12,182 & --- &     12,182 \\
\texttt{GigaGAN} \citep{kang2023scaling}             &     170,000 & --- &    170,000 \\
\texttt{LatentDiffusion} \citep{corvi2023detection}  &     216,000 & --- &    216,000 \\
\texttt{MidJourneyV6} \citep{terminusresearch}       &     519,849 & --- &    519,849 \\
\texttt{Co-Spy-Bench} \citep{cheng2025cospy}         &     550,000 & --- &    550,000 \\
\texttt{Dalle3} \citep{Egan_Dalle3_1_Million_2024}   &   1,193,805 & --- &  1,193,805 \\
\midrule
\rowcolor{gray!20}
\multicolumn{4}{c}{\textbf{Out-of-Distribution Existing Datasets with both Real and Synthetic Images}} \\
\texttt{FourierSpectrumDiscrepancies} \citep{dzanic2020fourier} &        90 &      15 &      75 \\
\texttt{FakeInversion} \citep{cazenavette2024fakeinversion} &     1,300 &     650 &     650 \\
\texttt{UniversalFakeDetect} \citep{ojha2023universal}      &    10,000 &   2,000 &   8,000 \\
\texttt{Dalle Recognition Dataset} \citep{airecognition}    &    21,635 &   3,780 &  17,855 \\
\texttt{Chameleon} \citep{yan2024sanity}                    &    26,033 &  14,863 &  11,170 \\
\texttt{AIGI-Detection-Quality-Paradox} \citep{xiao2025are} &    27,864 &   3,864 &  24,000 \\
\texttt{DiTFake} \citep{li2025improving}                    &    30,000 &  15,000 &  15,000 \\
\texttt{Diffusion1kSteps} \citep{tan2024rethinking}         &    35,992 &  18,000 &  17,992 \\
\texttt{GANGen-Detection} \citep{chuangchuangtan-GANGen-Detection} &    36,000 &  18,000 &  18,000 \\
\texttt{RobustLDM} \citep{rajan2024aligned}                 &    42,752 &   6,000 &  36,752 \\
\texttt{RealRobustBench} \citep{li2025bridging}             &    53,999 &  26,999 &  27,000 \\
\texttt{LDMFakeDetect} \citep{rajan2025staypositive}        &    61,352 &   6,000 &  55,352 \\
\texttt{DIF} \citep{sinitsa2024deep}                        &    75,344 &  37,672 &  37,672 \\
\texttt{ForenSynths} \citep{wang2020cnngenerated}           &    90,329 &  45,169 &  45,160 \\
\texttt{DNF-TestSet} \citep{zhang2025diffusion}             &    91,267 &   7,000 &  84,267 \\
\texttt{DeepFakeFace} \citep{song2023robustness}            &   120,000 &  30,000 &  90,000 \\
\texttt{AIGCDetectBench} \citep{zhong2024patchcraft}        &   152,597 &  76,298 &  76,299 \\
\texttt{AIGI-Holmes} \citep{zhou2025aigiholmes}             &   164,996 &  83,850 &  81,146 \\
\texttt{AI-Artwork} \citep{aiartwork}                       &   271,993 &  81,444 & 190,549 \\
\texttt{DiffusionForensics} \citep{wang2023dire}            &   313,368 &  86,000 & 227,368 \\
\texttt{B-Free} \citep{guillaro2025biasfree}                &   361,584 &  52,482 & 309,102 \\
\texttt{LASTED} \citep{wu2025generalizable}                 &   367,533 & 136,287 & 231,246 \\
\texttt{AIGIBench} \citep{li2025artificial}                 &   520,826 & 260,514 & 260,312 \\
\texttt{DeepFakeBench} \citep{yan2023deepfakebench}         &   803,991 &  70,998 & 732,993 \\
\midrule
\rowcolor{gray!30}
\textbf{Percept-Lens Evaluation Suite} & \textbf{7,129,395} & \textbf{1,850,896} & \textbf{5,278,499} \\
\end{longtable}
\endgroup

\section{Extended Investigation of Generalization with \Benchmark suite}
\label{sec:extended_results}

\begin{table*}[!t]
\setlength{\tabcolsep}{4pt}
\centering
\caption{
\textbf{Closed-form Gaussian discriminants versus released AI-generated image detector heads under matched (prior, encoder) conditions.}
Mean class accuracy $\mathrm{CA}$ (dataset-wise balanced accuracy where defined; \Cref{sec:benchmark}), macro-averaged across \Benchmark evaluation datasets.
For each released detector checkpoint, we report (i) the released decision head (\emph{Out-of-the-Shelf}) and (ii) closed-form baselines fitted on the \emph{same frozen encoder features} using the corresponding public training prior: Euc-NCM \citep{wu2025fewshot} and the Gaussian ladder.
(Parentheses indicate the training subset reported by the original work when applicable.)
}
\label{tab:extended_comparative}
\resizebox{0.99\textwidth}{!}{%
\begin{tabular}{@{}lrrrrrr@{}}
\toprule
\textbf{Detection Model}
& \textbf{Out-of-the-Shelf}
& \textbf{Euc-NCM \citep{wu2025fewshot}}
& \textbf{Cos-NCM} & \textbf{GNB} & \textbf{Mah-NCM} & \textbf{QDA}
\\
\Midrule
\rowcolor{gray!20}
\multicolumn{7}{c}{Trained with \GenImage \citep{zhu2023genimage} (Diffusion Model Images based on \ImageNet Dataset)} \\
\texttt{DRCT-ConvNeXt} (Full \texttt{GenImage}) \citep{chen2024drct} & 46.92\% & \underline{65.44\%} & \underline{64.67\%} & \underline{58.21\%} & \textbf{\underline{65.54\%}} & \underline{61.19\%} \\
\midrule
\rowcolor{gray!20}
\multicolumn{7}{c}{Trained with \DRCT \citep{chen2024drct} (Stable Diffusion Model Images based on \COCO Dataset)} \\
\texttt{DRCT-ConvNeXt} (\texttt{DRCT-SDv1}) \citep{chen2024drct}  & 46.92\% & \underline{57.14\%} & \underline{57.64\%} & \underline{57.79\%} & \underline{52.66\%} & \textbf{\underline{59.93\%}} \\
\texttt{DRCT-ConvNeXt} (\texttt{DRCT-SDv2}) \citep{chen2024drct}  & 50.60\% & \textbf{\underline{62.02\%}} & \underline{60.95\%} & \underline{61.82\%} & \underline{60.69\%} & \underline{61.20\%} \\
\midrule
\rowcolor{gray!20}
\multicolumn{7}{c}{Trained with \ELSA \citep{baraldi2025contrasting} (Diffusion Model Images based on \LAION Dataset)} \\
\texttt{CoDE-SVM} (\ELSA) \citep{baraldi2025contrasting}     & 36.01\% & \underline{63.63\%} & \underline{62.70\%} & \underline{65.98\%} & \textbf{\underline{66.15\%}} & \underline{64.00\%} \\
\texttt{CoDE-Linear} (\ELSA) \citep{baraldi2025contrasting}  & 62.34\% & \underline{63.63\%} & \underline{62.70\%} & \underline{65.98\%} & \textbf{\underline{66.15\%}} & \underline{64.00\%} \\
\bottomrule
\end{tabular}
}
\end{table*}

\nocite{convnextbase,convnextlarge,convnextxxlarge}

\begin{table*}[!t]
\setlength{\tabcolsep}{4pt}
\centering
\caption{
\textbf{Encoder scaling under a fixed prior: representation choice strongly affects transfer.}
Mean class accuracy $\mathrm{CA}$ (macro-averaged across \Benchmark evaluation datasets) for Gaussian discriminant heads conditioned on the same \CF training prior \citep{park2025community}, using frozen CLIP encoders of increasing capacity pretrained and available in OpenCLIP \citep{openclip}.
Only the head is changed. No detector fine-tuning is performed.
}
\label{tab:extended_architecture}
\resizebox{0.99\textwidth}{!}{%
\begin{tabular}{@{}lrrrrr@{}}
\toprule
\textbf{Frozen Encoders} &
\textbf{Euc-NCM \citep{wu2025fewshot}} & \textbf{Cos-NCM} & \textbf{GNB} & \textbf{Mah-NCM} & \textbf{QDA}
\\
\Midrule
\texttt{convnext\_base-laion400m\_s13b\_b51k}                        & 65.64\% & 65.00\% & 65.99\% & \textbf{67.96\%} & 62.27\% \\[0.5em]

\texttt{convnext\_base\_w-laion\_aesthetic\_s13b\_b82k}              & 67.26\% & 67.25\% & 66.97\% & \textbf{73.32\%} & 64.98\% \\
\texttt{convnext\_base\_w-laion2b\_s13b\_b82k\_augreg}               & 67.64\% & 67.66\% & 68.20\% & \textbf{73.43\%} & 66.93\% \\
\texttt{convnext\_base\_w-laion2b\_s13b\_b82k}                       & 66.95\% & 66.91\% & 66.00\% & \textbf{74.43\%} & 63.32\% \\[0.5em]

\texttt{convnext\_base\_w\_320-laion\_aesthetic\_s13b\_b82k}         & 67.22\% & 67.16\% & 66.88\% & \textbf{72.75\%} & 64.89\% \\
\texttt{convnext\_base\_w\_320-laion\_aesthetic\_s13b\_b82k\_augreg} & 67.33\% & 66.99\% & 66.43\% & \textbf{73.06\%} & 64.52\% \\[0.5em]

\texttt{convnext\_large\_d-laion2b\_s26b\_b102k\_augreg}             & 68.99\% & 68.73\% & 68.58\% & \textbf{74.81\%} & 67.48\% \\[0.5em]
\texttt{convnext\_large\_d\_320-laion2b\_s29b\_b131k\_ft}            & 69.11\% & 68.30\% & 68.13\% & \textbf{74.87\%} & 67.91\% \\
\texttt{convnext\_large\_d\_320-laion2b\_s29b\_b131k\_ft\_soup}      & 68.49\% & 67.77\% & 67.25\% & \textbf{73.71\%} & 66.45\% \\[0.5em]

\texttt{convnext\_xxlarge-laion2b\_s34b\_b82k\_augreg}               & 71.61\% & 70.73\% & 72.14\% & \textbf{78.76\%} & 71.70\% \\
\texttt{convnext\_xxlarge-laion2b\_s34b\_b82k\_augreg\_soup}         & 71.50\% & 70.77\% & 71.89\% & \textbf{79.34\%} & 71.44\% \\
\texttt{convnext\_xxlarge-laion2b\_s34b\_b82k\_augreg\_rewind}       & 71.60\% & 70.75\% & 71.94\% & \textbf{79.41\%} & 71.34\% \\

\midrule
\texttt{ViT-B-16-datacomp\_xl\_s13b\_b90k}                 & 67.26\% & 67.57\% & 67.48\% & \textbf{72.27\%} & 64.20\% \\
\texttt{ViT-B-32-datacomp\_xl\_s13b\_b90k}                 & 65.05\% & 66.13\% & 66.19\% & \textbf{72.23\%} & 62.56\% \\[0.5em]
\texttt{ViT-L-14-datacomp\_xl\_s13b\_b90k}                 & 64.83\% & 64.73\% & 63.74\% & \textbf{66.39\%} & 64.68\% \\

\texttt{ViT-L-14-laion400m\_e32}                           & 65.93\% & 65.14\% & 65.64\% & \textbf{71.90\%} & 63.39\% \\
\texttt{ViT-L-14-laion400m\_e31}                           & 65.91\% & 65.19\% & 65.61\% & \textbf{71.92\%} & 63.40\% \\
\texttt{ViT-L-14-laion2b\_s32b\_b82k}                      & 66.67\% & 66.33\% & 65.46\% & \textbf{73.06\%} & 62.95\% \\
\texttt{ViT-L-14-commonpool\_xl\_clip\_s13b\_b90k}         & 67.31\% & 69.42\% & 68.06\% & \textbf{74.60\%} & 65.43\% \\
\texttt{ViT-L-14-commonpool\_xl\_laion\_s13b\_b90k}        & 70.22\% & 70.81\% & 71.09\% & \textbf{75.13\%} & 66.23\% \\
\texttt{ViT-L-14-quickgelu-metaclip\_400m}                 & 66.73\% & 70.09\% & 71.15\% & \textbf{75.74\%} & 67.72\% \\
\texttt{ViT-L-14-quickgelu-metaclip\_fullcc}               & 69.45\% & 72.62\% & 72.65\% & \textbf{76.07\%} & 68.22\% \\
\texttt{ViT-L-14-quickgelu-dfn2b}                          & 65.06\% & 70.29\% & 68.69\% & \textbf{76.21\%} & 66.36\% \\
\texttt{ViT-L-14-dfn2b\_s39b}                              & 65.86\% & 70.70\% & 69.92\% & \textbf{76.24\%} & 67.99\% \\
\texttt{ViT-L-14-commonpool\_xl\_s13b\_b90k}               & 67.34\% & 70.13\% & 72.37\% & \textbf{78.77\%} & 69.05\% \\[0.5em]

\texttt{ViT-L-14-336-quickgelu-openai}                     & 68.20\% & 70.42\% & 72.33\% & \textbf{78.85\%} & 71.36\% \\[0.5em]

\texttt{ViT-H-14-quickgelu-metaclip\_fullcc}               & 71.58\% & 74.11\% & 73.68\% & \textbf{77.83\%} & 69.08\% \\
\texttt{ViT-H-14-quickgelu-dfn5b}                          & 70.80\% & 74.92\% & 74.00\% & \textbf{79.02\%} & 72.82\% \\
\texttt{ViT-H-14-worldwide-quickgelu-metaclip2\_worldwide} & 79.15\% & 81.57\% & 84.01\% & \textbf{84.20\%} & 76.89\% \\[0.5em]

\texttt{ViT-H-14-378-quickgelu-dfn5b}                      & 72.12\% & 76.05\% & 74.75\% & \textbf{78.36\%} & 74.02\% \\
\texttt{ViT-H-14-worldwide-378-metaclip2\_worldwide}       & 82.10\% & 84.81\% & \textbf{86.86\%} & 85.53\% & 80.68\% \\[0.5em]

\texttt{ViT-bigG-14-quickgelu-metaclip\_fullcc}            & 72.62\% & 76.43\% & 76.65\% & \textbf{79.56\%} & 72.17\% \\
\texttt{ViT-bigG-14-worldwide-metaclip2\_worldwide}        & 85.79\% & 85.72\% & 86.16\% & \textbf{87.36\%} & 80.28\% \\
\texttt{ViT-bigG-14-worldwide-378-metaclip2\_worldwide}    & 88.65\% & 88.94\% & 89.01\% & \textbf{89.28\%} & 83.77\% \\

\bottomrule
\end{tabular}
}
\end{table*}

\FloatBarrier
\begingroup
\fontsize{8.5}{10.875}\selectfont
\setlength{\tabcolsep}{3pt}
\centering
\begin{longtable}{@{}>{\raggedright\arraybackslash}p{\dimexpr0.45\linewidth-2\tabcolsep\relax}>{\raggedleft\arraybackslash}p{\dimexpr0.11\linewidth-1.6\tabcolsep\relax}>{\raggedleft\arraybackslash}p{\dimexpr0.11\linewidth-1.6\tabcolsep\relax}>{\raggedleft\arraybackslash}p{\dimexpr0.11\linewidth-1.6\tabcolsep\relax}>{\raggedleft\arraybackslash}p{\dimexpr0.11\linewidth-1.6\tabcolsep\relax}>{\raggedleft\arraybackslash}p{\dimexpr0.11\linewidth-1.6\tabcolsep\relax}@{}}
\caption{
\textbf{Pretraining objective affects transfer even without detector training.}
Mean class accuracy $\mathrm{CA}$ (macro-averaged across \Benchmark evaluation datasets) for Gaussian discriminant heads conditioned on the same \CF training prior \citep{park2025community}, using frozen encoders trained with different objectives.
}
\label{tab:extended_model}\\
\\
\toprule
\textbf{Frozen Encoders} &
\textbf{Euc-NCM \citep{wu2025fewshot}} & \textbf{Cos-NCM} & \textbf{GNB} & \textbf{Mah-NCM} & \textbf{QDA} \\
\Midrule
\endfirsthead

\toprule
\textbf{Frozen Encoders} &
\textbf{Euc-NCM \citep{wu2025fewshot}} & \textbf{Cos-NCM} & \textbf{GNB} & \textbf{Mah-NCM} & \textbf{QDA} \\
\Midrule
\endhead

\midrule
\multicolumn{6}{r}{\footnotesize Continued on next page} \\
\endfoot

\bottomrule
\endlastfoot

\texttt{CLIP-RoBERTa}          & 63.81\% & 64.64\% & 63.87\% & \textbf{69.10\%} & 62.00\% \\[0.5em]
\texttt{CLIP-XLM-RoBERTa-Base} & 63.13\% & 64.34\% & 64.47\% & \textbf{72.20\%} & 62.34\% \\
\midrule
\texttt{MAE-Base}  & 58.35\% & 58.13\% & 59.37\% & \textbf{68.46\%} & 63.74\% \\
\texttt{MAE-Large} & 59.56\% & 59.72\% & 62.39\% & \textbf{68.39\%} & 60.79\% \\
\midrule
\texttt{BEiT-Base} & 57.81\% & 57.69\% & 58.30\% & \textbf{59.57\%} & 52.12\% \\
\midrule
\texttt{SigLIP-so400m} & 56.30\% & 57.78\% & 58.55\% & \textbf{64.02\%} & 59.01\% \\[0.5em]

\texttt{SigLIP2-Base-224}  & 58.37\% & 58.37\% & 58.34\% & \textbf{61.78\%} & 55.37\% \\
\texttt{SigLIP2-Base-512}  & 58.58\% & 58.79\% & 58.68\% & \textbf{63.28\%} & 56.59\% \\
\texttt{SigLIP2-Large-512} & 56.10\% & 56.03\% & 56.92\% & \textbf{69.30\%} & 60.75\% \\

\midrule
\texttt{coca\_ViT-L-14]}                  & 66.90\% & 67.01\% & 64.00\% & \textbf{70.81\%} & 63.43\% \\
\texttt{coca\_ViT-L-14-mscoco\_finetuned} & 62.06\% & 61.51\% & 62.81\% & \textbf{69.50\%} & 64.80\% \\
\midrule
\texttt{EVA01-g-14-laion400m\_s11b\_b41k}       & 69.54\% & 68.51\% & 66.61\% & \textbf{76.74\%} & 62.44\% \\
\texttt{EVA01-g-14-plus-merged2b\_s11b\_b114k} & 69.56\% & 71.17\% & 70.12\% & \textbf{78.72\%} & 67.08\% \\[0.5em]
\texttt{EVA02-B-16-merged2b\_s8b\_b131k}       & 63.75\% & 63.08\% & 64.53\% & \textbf{71.38\%} & 60.78\% \\
\texttt{EVA02-L-14-merged2b\_s4b\_b131k}       & 66.64\% & 67.16\% & 67.08\% & \textbf{72.81\%} & 63.06\% \\
\midrule
\texttt{CLIPA-ViT-L-14}        & 65.69\% & 66.90\% & 66.31\% & \textbf{76.08\%} & 68.51\% \\
\texttt{CLIPA-ViT-L-14-336}    & 65.42\% & 66.92\% & 67.08\% & \textbf{76.25\%} & 69.25\% \\
\texttt{CLIPA-ViT-H-14}        & 67.23\% & 68.18\% & 69.12\% & \textbf{79.17\%} & 70.58\% \\
\texttt{CLIPA-ViT-bigG-14-336} & 68.66\% & 70.67\% & 72.72\% & \textbf{80.27\%} & 73.00\% \\
\midrule
\texttt{nllb-clip-large-v1}         & 70.73\% & 69.71\% & 70.61\% & \textbf{77.65\%} & 65.80\% \\[0.5em]
\texttt{nllb-clip-large-siglip-v1}  & 59.72\% & 59.48\% & 59.24\% & \textbf{67.55\%} & 63.01\% \\
\texttt{nllb-clip-large-siglip-mrl} & 59.21\% & 59.02\% & 59.34\% & \textbf{68.23\%} & 62.47\% \\
\midrule
\texttt{MobileCLIP2-L-14-dfndr2b} & 66.48\% & 69.43\% & 68.33\% & \textbf{75.12\%} & 65.68\% \\
\midrule
\texttt{ViTamin-S}     & 63.22\% & 64.43\% & 63.51\% & \textbf{67.72\%} & 65.09\% \\
\texttt{ViTamin-S-LTT} & 64.41\% & 65.18\% & 64.86\% & \textbf{68.38\%} & 65.07\% \\[0.5em]
\texttt{ViTamin-B}     & 62.75\% & 64.25\% & 64.21\% & \textbf{70.75\%} & 65.65\% \\
\texttt{ViTamin-B-LTT} & 64.97\% & 65.60\% & 64.85\% & \textbf{72.40\%} & 67.23\% \\[0.5em]
\texttt{ViTamin-L}     & 67.94\% & 68.42\% & 69.08\% & \textbf{76.64\%} & 69.57\% \\
\texttt{ViTamin-L-256} & 67.57\% & 68.19\% & 68.07\% & \textbf{75.90\%} & 70.56\% \\
\texttt{ViTamin-L-336} & 67.36\% & 68.36\% & 67.76\% & \textbf{76.18\%} & 69.49\% \\
\texttt{ViTamin-L-384} & 74.43\% & 73.08\% & 76.17\% & \textbf{76.35\%} & 65.83\% \\[0.5em]
\texttt{ViTamin-L2}    & 70.11\% & 69.25\% & 70.75\% & \textbf{77.03\%} & 70.16\% \\
\texttt{ViTamin-L2-256} & 70.50\% & 69.74\% & 70.01\% & \textbf{77.18\%} & 71.64\% \\
\texttt{ViTamin-L2-336} & 70.42\% & 69.62\% & 69.72\% & \textbf{77.07\%} & 69.77\% \\
\texttt{ViTamin-L2-384} & 70.53\% & 69.82\% & 68.51\% & \textbf{75.64\%} & 66.00\% \\[0.5em]
\texttt{ViTamin-XL-256} & 71.25\% & 70.35\% & 71.62\% & \textbf{80.74\%} & 71.36\% \\
\texttt{ViTamin-XL-336} & 71.63\% & 71.10\% & 71.54\% & \textbf{79.92\%} & 70.52\% \\
\texttt{ViTamin-XL-384} & 71.56\% & 70.91\% & 72.49\% & \textbf{79.82\%} & 71.52\% \\
\midrule
\texttt{DINOv2-Small}         & 62.02\% & 61.77\% & 60.99\% & \textbf{66.63\%} & 60.17\% \\
\texttt{DINOv2-Base}          & 63.79\% & 64.06\% & 62.08\% & \textbf{68.45\%} & 59.43\% \\
\texttt{DINOv2-Large}         & 63.03\% & 62.78\% & 62.29\% & \textbf{70.17\%} & 59.34\% \\[0.5em]
\texttt{DINOv3-ConvNeXt-Tiny}  & 61.14\% & 62.31\% & 59.97\% & \textbf{65.68\%} & 59.85\% \\
\texttt{DINOv3-ConvNeXt-Small} & 61.70\% & 61.98\% & 59.60\% & \textbf{65.65\%} & 60.30\% \\
\texttt{DINOv3-ConvNeXt-Base}  & 64.54\% & 63.46\% & 61.39\% & \textbf{70.34\%} & 62.68\% \\
\texttt{DINOv3-ConvNeXt-Large} & 69.32\% & 67.70\% & 69.02\% & \textbf{74.15\%} & 66.77\% \\[0.5em]

\texttt{DINOv3-ViT-S-16}  & 64.52\% & 64.26\% & 64.18\% & \textbf{66.92\%} & 61.55\% \\
\texttt{DINOv3-ViT-S-16+} & 64.46\% & 64.03\% & 63.78\% & \textbf{67.35\%} & 63.35\% \\
\texttt{DINOv3-ViT-B-16}  & 68.83\% & 66.69\% & 69.67\% & \textbf{76.08\%} & 65.32\% \\
\texttt{DINOv3-ViT-L-16}  & 79.34\% & 74.74\% & 79.32\% & \textbf{83.31\%} & 71.37\% \\
\texttt{DINOv3-ViT-H-16+} & 82.13\% & 79.58\% & 80.99\% & \textbf{86.22\%} & 77.52\% \\
\midrule
\texttt{PE-Core-T-16-384-meta} & 54.56\% & 54.78\% & 54.54\% & \textbf{63.27\%} & 55.27\% \\
\texttt{PE-Core-S-16-384-meta} & 64.32\% & 63.01\% & 64.63\% & \textbf{68.13\%} & 63.69\% \\
\texttt{PE-Core-B-16-meta}     & 67.25\% & 67.02\% & 68.19\% & \textbf{73.08\%} & 66.10\% \\
\texttt{PE-Core-L-14-336-meta} & 81.78\% & 80.77\% & 84.76\% & \textbf{89.51\%} & 78.76\% \\

\end{longtable}
\endgroup

\Cref{tab:extended_comparative} extends the matched (prior, encoder) audit to additional ConvNeXt checkpoints \citep{chen2024drct} and alternative CoDE heads \citep{baraldi2025contrasting}.
The pattern remains mostly unfavorable to the released heads.
On the same frozen ConvNeXt representation, replacing the released head with a closed-form Gaussian rule yields CA gains for all three DRCT checkpoints: from 46.92\% to 65.54\% for the full \texttt{GenImage} model, from 46.92\% to 59.93\% for \texttt{DRCT-SDv1}, and from 50.60\% to 62.02\% for \texttt{DRCT-SDv2}.
The CoDE comparison isolates the head effect more directly.
\texttt{CoDE-Linear} and \texttt{CoDE-SVM} induce identical closed-form baselines because they share the same frozen encoder and the same support prior, yet their released heads differ sharply (62.34\% versus 36.01\%).
Representation quality alone cannot explain this gap.
The comparison instead points to head sensitivity on top of a fixed representation.
As throughout the paper, this remains a post hoc swap on the final released representation, not a replay of the original training pipeline.
The conservative conclusion is therefore that the released head does not always recover separability already present in the final encoder, not that end-to-end training is unnecessary.

\Cref{tab:extended_architecture} shows that the encoder effect is not confined to the smaller OpenAI sweep in the main text.
Under the same \CF prior and with no detector fine-tuning, the best CA rises from 67.96\% for \texttt{convnext\_base-laion400m\_s13b\_b51k} to 79.41\% for \texttt{convnext\_xxlarge-laion2b\_s34b\_b82k\_augreg\_rewind}.
The larger ViT family exhibits a wider spread, from 66.39\% to 89.28\%.
This spread is larger than many method-level gains reported in the detector literature, making head-only comparisons difficult to interpret unless encoder choice is controlled.
Mah-NCM is the modal winner across rows, but the pattern is not universal.
On the strongest encoders the margin often collapses, and in isolated cases another Gaussian rule is competitive or better.
Any claim of universal Mahalanobis dominance would therefore be overstated.

\Cref{tab:extended_model} further reinforces the representation-centric reading.
The best CA ranges from 59.57\% for \texttt{BeiT-Base} to 89.51\% for \texttt{PE-Core-L-14-336-meta}, while \texttt{DINOv3-ViTH16plus} already reaches 86.22\% without any detector-specific fine-tuning.
These gaps are large enough that they should not be treated as marginal implementation details.
At the same time, this table should not be over-interpreted as a clean causal statement about a single pretraining objective, because objective, scale, data, and architecture all vary together.
The supported claim is narrower.
The choice of frozen encoder feature space is a first-order determinant of transfer, and a simple moment-based head on a strong encoder can outperform a specialized detector head trained on a weaker one.
Overall, the appendix supports a system-level conclusion rather than an algorithmic one.
Cross-dataset AIGI detection depends strongly on the joint choice of support prior and representation, while the downstream head often contributes less than its standalone presentation suggests.

\FloatBarrier
\begin{table}[H]
\setlength{\tabcolsep}{4pt}
\centering
\caption{
\textbf{QuickGELU versus GELU in otherwise matched encoders.}
Mean class accuracy $\mathrm{CA}$ (macro-averaged across \Benchmark evaluation datasets) for Gaussian discriminant heads conditioned on the same \CF training prior \citep{park2025community}, using frozen encoders that differ primarily in whether QuickGELU is used.
}
\label{tab:quickgelu}
\resizebox{0.99\textwidth}{!}{%
\begin{tabular}{@{}lrrrrr@{}}
\toprule
\textbf{Frozen Encoders} &
\textbf{Euc-NCM \citep{wu2025fewshot}} & \textbf{Cos-NCM} & \textbf{GNB} & \textbf{Mah-NCM} & \textbf{QDA}
\\
\Midrule
\texttt{ViT-B-16-openai}               & 63.39\% & 64.66\% & 64.98\% & 73.19\% & 64.65\% \\
\texttt{ViT-B-16-quickgelu-openai}     & 64.66\% & 64.65\% & 66.37\% & \textbf{74.18\%} & 65.70\% \\[0.5em]
\texttt{ViT-B-32-openai}               & 63.44\% & 63.95\% & 64.34\% & 69.07\% & 62.20\% \\
\texttt{ViT-B-32-quickgelu-openai}     & 64.53\% & 64.25\% & 64.83\% & \textbf{70.61\%} & 62.81\% \\[0.5em]
\texttt{ViT-L-14-openai}               & 64.19\% & 66.96\% & 68.37\% & 76.87\% & 69.02\% \\
\texttt{ViT-L-14-quickgelu-openai}     & 66.18\% & 68.44\% & 69.79\% & \textbf{76.96\%} & 68.87\% \\[0.5em]
\texttt{ViT-L-14-336-openai}           & 66.33\% & 69.05\% & 70.47\% & \textbf{78.89\%} & 71.14\% \\
\texttt{ViT-L-14-336-quickgelu-openai} & 68.20\% & 70.42\% & 72.33\% & 78.85\% & 71.36\% \\[0.5em]
\texttt{ViT-L-14-metaclip\_400m}           & 64.85\% & 69.76\% & 70.40\% & 74.25\% & 66.65\% \\
\texttt{ViT-L-14-quickgelu-metaclip\_400m} & 66.73\% & 70.09\% & 71.15\% & \textbf{75.74\%} & 67.72\% \\[0.5em]
\texttt{ViT-L-14-metaclip\_fullcc}           & 68.07\% & 72.22\% & 72.81\% & 75.76\% & 67.51\% \\
\texttt{ViT-L-14-quickgelu-metaclip\_fullcc} & 69.45\% & 72.62\% & 72.65\% & \textbf{76.07\%} & 68.22\% \\[0.5em]
\texttt{ViT-L-14-dfn2b}           & 66.91\% & 71.24\% & 70.16\% & 75.83\% & 66.52\% \\
\texttt{ViT-L-14-quickgelu-dfn2b} & 65.06\% & 70.29\% & 68.69\% & \textbf{76.21\%} & 66.36\% \\[0.5em]
\texttt{ViT-H-14-worldwide-metaclip2\_worldwide}           & 78.59\% & 80.28\% & 82.89\% & 83.59\% & 76.55\% \\
\texttt{ViT-H-14-worldwide-quickgelu-metaclip2\_worldwide} & 79.15\% & 81.57\% & 84.01\% & \textbf{84.20\%} & 76.89\% \\[0.5em]
\texttt{ViT-H-14-378-dfn5b}           & 73.28\% & 74.43\% & 74.08\% & \textbf{78.43\%} & 72.76\% \\
\texttt{ViT-H-14-378-quickgelu-dfn5b} & 72.12\% & 76.05\% & 74.75\% & 78.36\% & 74.02\% \\
\bottomrule
\end{tabular}
}
\end{table}

\Cref{tab:quickgelu} indicates that activation choice is a secondary factor relative to encoder family and scale.
Across most matched pairs, the change in CA is modest, and the sign is not consistent across families.
QuickGELU helps several models, but it does not provide a plausible explanation for the much larger performance differences observed across encoder families in \Cref{tab:extended_architecture,tab:extended_model}.

\begin{table*}[!t]
\setlength{\tabcolsep}{4pt}
\centering
\caption{
\textbf{Per-dataset class accuracy under a fixed encoder and varying training prior.}
Class accuracy $\mathrm{CA}$ for individual datasets in the \Benchmark evaluation suite for Gaussian discriminant heads under the fixed frozen \texttt{PE-Core-bigG-14-448} encoder \citep{pe-core-bigG} and the same \CF support prior \citep{park2025community} throughout.
}
\label{tab:individual_datasets}
\resizebox{0.99\textwidth}{!}{%
\begin{tabular}{@{}lrrrrr@{}}
\toprule
\textbf{Evaluation Dataset} & \textbf{Euc-NCM \citep{wu2025fewshot}} & \textbf{Cos-NCM} & \textbf{GNB} & \textbf{Mah-NCM} & \textbf{QDA} \\
\Midrule

\rowcolor{gray!20}
\multicolumn{6}{c}{\textbf{Out-of-Distribution Existing Datasets with only Real Images}} \\
\texttt{Unsplash (Lite Subset)} \citep{unsplash}               & 24.13\% & 29.07\% & 75.56\% & \textbf{99.07\%} & 96.20\% \\
\texttt{InstagramImagesWithCaptions} \citep{instagrama}        & 99.65\% & 99.40\% & 99.96\% & 99.96\% & \textbf{99.98\%} \\
\texttt{Anime Faces Dataset} \citep{animeface}                 & \textbf{100.00\%} & \textbf{100.00\%} & 99.98\% & 98.20\% & \textbf{100.00\%} \\
\texttt{Anime Images} \citep{animeimages}                      & 99.95\% & 99.96\% & 99.97\% & 99.89\% & \textbf{100.00\%} \\
\texttt{CelebA-Spoof} \citep{zhang2020celebaspoof}             & 73.77\% & 68.03\% & 79.56\% & 84.70\% & \textbf{94.55\%} \\
\midrule
\rowcolor{gray!20}
\multicolumn{6}{c}{\textbf{Out-of-Distribution Existing Datasets with only Synthetic Images}} \\
\texttt{AGIQA-1k} \citep{zhang2023perceptual}        & 99.81\% & 99.91\% & 98.80\% & \textbf{100.00\%} & 97.41\% \\
\texttt{AGIQA-3k} \citep{li2024agiqa3k}              & 85.81\% & 84.51\% & 75.52\% & \textbf{91.15\%} & 51.84\% \\
\texttt{SPAI} \citep{karageorgiou2025anyresolution}  & 97.72\% & \textbf{97.77\%} & 90.82\% & 92.00\% & 76.58\% \\
\texttt{SynthBuster Extended} \citep{bammey2024synthbuster,guillaro2025biasfree}   & \textbf{99.95\%} & 99.94\% & 98.52\% & 99.05\% & 91.43\% \\
\texttt{SynthScars} \citep{kang2025legion}           & 79.61\% & 79.17\% & 62.32\% & \textbf{91.50\%} & 50.90\% \\
\texttt{GigaGAN} \citep{kang2023scaling}             & \textbf{94.86\%} & 94.33\% & 90.27\% & 94.31\% & 92.08\% \\
\texttt{LatentDiffusion} \citep{corvi2023detection}  & 98.16\% & 97.85\% & 99.16\% & \textbf{99.86\%} & 98.42\% \\
\texttt{MidJourneyV6} \citep{terminusresearch}       & \textbf{99.51\%} & 99.49\% & 93.92\% & 96.99\% & 83.88\% \\
\texttt{Co-Spy-Bench} \citep{cheng2025cospy}         & 99.66\% & \textbf{99.71\%} & 98.27\% & 99.29\% & 95.88\% \\
\texttt{Dalle3} \citep{Egan_Dalle3_1_Million_2024}   & 92.22\% & 93.28\% & 42.49\% & \textbf{99.25\%} & 59.02\% \\
\midrule
\rowcolor{gray!20}
\multicolumn{6}{c}{\textbf{Out-of-Distribution Existing Datasets with both Real and Synthetic Images}} \\
\texttt{FourierSpectrumDiscrepancies} \citep{dzanic2020fourier} & 85.33\% & 86.00\% & 88.67\% & \textbf{100.00\%} & 97.33\% \\
\texttt{FakeInversion} \citep{cazenavette2024fakeinversion} & 85.08\% & 84.38\% & 91.38\% & \textbf{99.00\%} & 96.08\% \\
\texttt{UniversalFakeDetect} \citep{ojha2023universal}      & 98.06\% & 98.15\% & 99.07\% & \textbf{99.56\%} & 97.55\% \\
\texttt{Dalle Recognition Dataset} \citep{airecognition}    & 88.56\% & 88.55\% & 86.59\% & \textbf{97.16\%} & 83.96\% \\
\texttt{Chameleon} \citep{yan2024sanity}                    & 79.51\% & 79.20\% & 72.03\% & \textbf{90.52\%} & 59.45\% \\
\texttt{AIGI-Detection-Quality-Paradox} \citep{xiao2025are} & 83.70\% & 82.56\% & 92.09\% & \textbf{99.42\%} & 92.71\% \\
\texttt{DiTFake} \citep{li2025improving}                    & 92.61\% & 91.91\% & 98.28\% & \textbf{99.53\%} & 97.93\% \\
\texttt{Diffusion1kSteps} \citep{tan2024rethinking}         & 81.83\% & 81.17\% & 88.27\% & \textbf{93.80\%} & 89.57\% \\
\texttt{GANGen-Detection} \citep{chuangchuangtan-GANGen-Detection}  & 81.13\% & 82.48\% & 94.21\% & \textbf{98.10\%} & 93.40\% \\
\texttt{RobustLDM} \citep{rajan2024aligned}                 & 89.28\% & 87.62\% & 88.01\% & \textbf{98.22\%} & 85.73\% \\
\texttt{RealRobustBench} \citep{li2025bridging}             & 77.61\% & 76.96\% & 75.28\% & \textbf{95.26\%} & 71.61\% \\
\texttt{LDMFakeDetect} \citep{rajan2025staypositive}        & 88.85\% & 87.18\% & 88.39\% & \textbf{97.63\%} & 84.54\% \\
\texttt{DIF} \citep{sinitsa2024deep}                        & 89.61\% & 88.51\% & 96.86\% & \textbf{98.62\%} & 97.80\% \\
\texttt{ForenSynths} \citep{wang2020cnngenerated}           & 75.85\% & 74.99\% & 84.11\% & \textbf{97.06\%} & 91.78\% \\
\texttt{DNF-TestSet} \citep{zhang2025diffusion}             & 98.67\% & 98.50\% & 99.43\% & \textbf{99.73\%} & 98.92\% \\
\texttt{DeepFakeFace} \citep{song2023robustness}            & 59.90\% & 61.67\% & 62.08\% & \textbf{76.50\%} & 57.65\% \\
\texttt{AIGCDetectBench} \citep{zhong2024patchcraft}        & 92.66\% & 92.22\% & 97.02\% & \textbf{98.57\%} & 96.47\% \\
\texttt{AIGI-Holmes} \citep{zhou2025aigiholmes}             & 90.72\% & 89.28\% & 95.21\% & \textbf{98.97\%} & 95.92\% \\
\texttt{AI-Artwork} \citep{aiartwork}                       & 88.42\% & 86.86\% & 90.20\% & \textbf{95.72\%} & 84.95\% \\
\texttt{DiffusionForensics} \citep{wang2023dire}            & 95.10\% & 95.19\% & \textbf{96.44\%} & 91.16\% & 93.02\% \\
\texttt{B-Free} \citep{guillaro2025biasfree}                & 87.75\% & \textbf{87.79\%} & 85.35\% & 85.78\% & 85.13\% \\
\texttt{LASTED} \citep{wu2025generalizable}                 & 75.98\% & 74.74\% & 80.46\% & \textbf{80.80\%} & 76.27\% \\
\texttt{AIGIBench} \citep{li2025artificial}                 & 86.55\% & 85.88\% & 93.92\% & \textbf{95.21\%} & 92.61\% \\
\texttt{DeepFakeBench} \citep{yan2023deepfakebench}         & 51.39\% & 52.10\% & \textbf{54.59\%} & 52.40\% & 51.08\% \\
\bottomrule
\end{tabular}
}
\end{table*}

\Cref{tab:individual_datasets} shows that the aggregate CA gains are broadly distributed across evaluation datasets rather than being driven by a small subset of low-difficulty benchmarks.
Mah-NCM is strongest on many mixed and synthetic-only benchmarks, but the exceptions are informative:
QDA is best on \texttt{CelebA-Spoof} \citep{zhang2020celebaspoof}, GNB is strongest on \texttt{DiffusionForensics} \citep{wang2023dire} and \texttt{DeepFakeBench} \citep{yan2023deepfakebench}, and cosine or Euclidean rules slightly lead on a small number of synthetic-only datasets such as \texttt{SPAI} \citep{karageorgiou2025anyresolution} and \texttt{MidJourneyV6} \citep{terminusresearch}.
The dataset-level view supports the same restrained conclusion as the averaged tables.
Covariance-aware scoring is often useful, but no single Gaussian assumption is uniformly best across all shifts.

\section{Threshold-Free (AUC) Evaluation on Mixed Datasets in \Benchmark suite}
\label{sec:results_auc}

In addition to CA, we report ROC-AUC on the \TotalMixedDatasets mixed datasets that contain both real and synthetic images (\Cref{tab:summaryval}).
AUC is not a universally stronger metric than CA, but it is a cleaner diagnostic of ranking quality because it is threshold-free and excludes one-class datasets.
If the gains from the Gaussian ladder were mainly artifacts of a favorable operating point, they should shrink under AUC.
In several cases they do not, although the counterexamples also become more informative.

\begin{table*}[!t]
\setlength{\tabcolsep}{4pt}
\centering
\caption{
\textbf{Closed-form Gaussian discriminants versus released AI-generated image detector heads under matched (prior, encoder) conditions.}
Mean ROC-Area Under Curve $\mathrm{AUC}$ (dataset-wise ROC-AUC where defined; \Cref{sec:benchmark}), macro-averaged across the \TotalMixedDatasets mixed datasets in \Benchmark evaluation datasets.
For each released detector checkpoint, we report (i) the released decision head (\emph{Out-of-the-Shelf}) and (ii) closed-form baselines fitted on the \emph{same frozen encoder features} using the corresponding public training prior: Euc-NCM \citep{wu2025fewshot} and the Gaussian ladder.
(Parentheses indicate the training subset reported by the original work when applicable.)
}
\label{tab:auc_comparative}
\resizebox{0.99\textwidth}{!}{%
\begin{tabular}{@{}lrrrrrr@{}}
\toprule
\textbf{Detection Model}
& \textbf{Out-of-the-Shelf}
& \textbf{Euc-NCM \citep{wu2025fewshot}}
& \textbf{Cos-NCM} & \textbf{GNB} & \textbf{Mah-NCM} & \textbf{QDA}
\\
\Midrule
\rowcolor{gray!20}
\multicolumn{7}{c}{\shortstack{Trained with \CNNSpot \citep{wang2020cnngenerated} (\ProGAN Images based on \texttt{LSUN} Dataset)}}  \\
\UnivFD \citep{ojha2023universal} & 0.7091 & 0.6995 & 0.6756 & 0.7145 & \textbf{0.7454} & 0.6994 \\
\AIDE \citep{yan2024sanity}       & 0.4917 & 0.5256 & 0.5136 & 0.5278 & \textbf{0.6787} & 0.6383 \\
\midrule
\rowcolor{gray!20}
\multicolumn{7}{c}{\shortstack{Trained with \GenImage \citep{zhu2023genimage} (Diffusion Model Images based on \ImageNet Dataset)}} \\
\AIDE (\texttt{GenImage-SDv1}) \citep{yan2024sanity}               & 0.5231 & 0.5620 & 0.5866 & 0.5710 & \textbf{0.6875} & 0.5929 \\
\Effort (\texttt{GenImage-SDv1}) \citep{yan2025orthogonal}         & 0.7729 & 0.7951 & 0.8348 & 0.8339 & \textbf{0.8635} & 0.8196 \\
\texttt{DRCT-ConvNeXt} (Full \texttt{GenImage}) \citep{chen2024drct} & 0.4977 & 0.7270 & 0.7304 & 0.6176 & \textbf{0.7417} & 0.6936 \\
\texttt{DRCT-UnivFD} (Full \texttt{GenImage}) \citep{chen2024drct} & 0.7368 & 0.7488 & 0.8083 & 0.7801 & \textbf{0.8395} & 0.7789 \\
\AIDE (Full \texttt{GenImage}) \citep{yan2024sanity}               & 0.5389 & 0.5240 & 0.5662 & 0.5322 & \textbf{0.7082} & 0.5850 \\
\midrule
\rowcolor{gray!20}
\multicolumn{7}{c}{\shortstack{Trained with \DRCT \citep{chen2024drct} (Stable Diffusion Model Images based on \COCO Dataset)}} \\
\texttt{DRCT-ConvNeXt} (\texttt{DRCT-SDv1}) \citep{chen2024drct}  & 0.4977 & 0.6194 & 0.6274 & 0.6236 & 0.5920 & \textbf{0.6460} \\
\texttt{DRCT-UnivFD} (\texttt{DRCT-SDv1}) \citep{chen2024drct} & 0.7359 & 0.6792 & 0.7055 & 0.7060 & \textbf{0.7389} & 0.7363 \\
\texttt{DRCT-ConvNeXt} (\texttt{DRCT-SDv2}) \citep{chen2024drct}  & 0.5230 & 0.6569 & 0.6449 & 0.6704 & \textbf{0.6786} & 0.6542 \\
\texttt{DRCT-UnivFD} (\texttt{DRCT-SDv2}) \citep{chen2024drct} & 0.7366 & 0.6781 & 0.6916 & 0.7022 & \textbf{0.7527} & 0.7244 \\
\midrule
\rowcolor{gray!20}
\multicolumn{7}{c}{\shortstack{Trained with \ELSA \citep{baraldi2025contrasting} or \CF \citep{park2025community} (Diffusion Model Images based on \LAION Dataset)}} \\
\texttt{CoDE-SVM} (\ELSA) \citep{baraldi2025contrasting}     & 0.3061 & 0.6986 & \textbf{0.7039} & 0.7034 & 0.6866 & 0.6714 \\
\texttt{CoDE-Linear} (\ELSA) \citep{baraldi2025contrasting}  & \textbf{0.7065} & 0.6986 & 0.7039 & 0.7034 & 0.6866 & 0.6714 \\
\texttt{CoDE-kNN} (\ELSA) \citep{baraldi2025contrasting} & 0.6360 & 0.6986 & \textbf{0.7039} & 0.7034 & 0.6866 & 0.6714 \\
\texttt{CF-224} (\CF) \citep{park2025community}          & \textbf{0.9131} & 0.8799 & 0.9101 & 0.9007 & 0.9093 & 0.8826 \\
\texttt{CF-384} (\CF) \citep{park2025community}          & \textbf{0.9351} & 0.8821 & 0.9304 & 0.9327 & 0.9283 & 0.9320 \\
\midrule
\rowcolor{gray!20}
\multicolumn{7}{c}{\shortstack{Frozen image encoder (no detection fine-tuning)}} \\
\texttt{PE-Core-bigG-14-448} \citep{pe-core-bigG} & --- & 0.8948 & 0.9309 & 0.9412 & \textbf{0.9693} & 0.9295 \\
\bottomrule
\end{tabular}
}
\end{table*}

\Cref{tab:auc_comparative} preserves the central matched-head finding under a calibration-agnostic metric.
The size of the improvements suggests that threshold selection alone does not explain the result.
\texttt{CoDE-SVM} rises from 0.3061 to 0.7039, \texttt{DRCT-ConvNeXt} on full \texttt{GenImage} from 0.4977 to 0.7417, \AIDE on \CNNSpot from 0.4917 to 0.6787, and \AIDE on full \texttt{GenImage} from 0.5389 to 0.7082.
These are ranking improvements, not merely operating-point adjustments.
The table also contains counterexamples, since the released \texttt{CF-224} and \texttt{CF-384} heads remain marginally stronger than any closed-form surrogate on AUC.
CA and AUC therefore diagnose different failure modes.
A head swap can improve thresholded balanced accuracy while leaving global ranking unchanged, or vice versa.
The strongest result in this block is obtained without detector fine-tuning: \texttt{PE-Core-bigG-14-448} with Mah-NCM reaches 0.9693 AUC.
This is not an apples-to-apples replacement for a released detector with a different backbone, but it shows how quickly the comparison shifts once the encoder feature space itself becomes stronger.

\begin{table*}[!t]
\centering
\caption{
\textbf{Encoder scaling under a fixed prior: representation choice strongly affects transfer.}
Mean ROC-Area Under Curve $\mathrm{AUC}$ (macro-averaged across the \TotalMixedDatasets mixed datasets in \Benchmark evaluation datasets) for Gaussian discriminant heads conditioned on the same \CF training prior \citep{park2025community}, using frozen CLIP encoders of increasing capacity pretrained by OpenAI.
Only the head is changed. No detector fine-tuning is performed.
}
\label{tab:auc_architecture}
\resizebox{0.99\textwidth}{!}{%
\begin{tabular}{@{}lrrrrr@{}}
\toprule
\textbf{Frozen Encoders} &
\textbf{Euc-NCM \citep{wu2025fewshot}} & \textbf{Cos-NCM} & \textbf{GNB} & \textbf{Mah-NCM} & \textbf{QDA}
\\
\Midrule
\texttt{ResNet-50} \citep{resnet50}    & 0.6066 & 0.5682 & 0.6370 & \textbf{0.7406} & 0.5844 \\
\texttt{ResNet-101} \citep{resnet101}  & 0.5999 & 0.5823 & 0.6157 & \textbf{0.7194} & 0.5990 \\
\midrule
\texttt{ResNet-50x4} \citep{resnet50x4}   & 0.6268 & 0.6150 & 0.6491 & \textbf{0.7633} & 0.6031 \\
\texttt{ResNet-50x16} \citep{resnet50x16} & 0.6519 & 0.6358 & 0.6829 & \textbf{0.8143} & 0.6404 \\
\texttt{ResNet-50x64} \citep{resnet50x64} & 0.6870 & 0.6752 & 0.6798 & \textbf{0.8345} & 0.6728 \\
\midrule
\texttt{ViT-B/16} \citep{vitb16} & 0.6334 & 0.5965 & 0.6551 & \textbf{0.7823} & 0.6573 \\
\texttt{ViT-B/32} \citep{vitb32} & 0.6065 & 0.5831 & 0.6131 & \textbf{0.7147} & 0.6098 \\
\texttt{ViT-L/14} \citep{vitl14} & 0.6654 & 0.6911 & 0.7165 & \textbf{0.8193} & 0.7051 \\
\bottomrule
\end{tabular}
}
\end{table*}

\begin{table*}[!t]
\setlength{\tabcolsep}{4pt}
\centering
\caption{
\textbf{Encoder scaling under a fixed prior: representation choice strongly affects transfer.}
Mean ROC-Area Under Curve $\mathrm{AUC}$ (macro-averaged across the \TotalMixedDatasets mixed datasets in \Benchmark evaluation datasets) for Gaussian discriminant heads conditioned on the same \CF training prior \citep{park2025community}, using frozen CLIP encoders of increasing capacity pretrained and available in OpenCLIP \citep{openclip}.
Only the head is changed. No detector fine-tuning is performed.
}
\label{tab:auc_extended_architecture}
\resizebox{0.99\textwidth}{!}{%
\begin{tabular}{@{}lrrrrr@{}}
\toprule
\textbf{Frozen Encoders} &
\textbf{Euc-NCM \citep{wu2025fewshot}} & \textbf{Cos-NCM} & \textbf{GNB} & \textbf{Mah-NCM} & \textbf{QDA}
\\
\Midrule
\texttt{convnext\_base-laion400m\_s13b\_b51k}                        & 0.6288 & 0.6177 & 0.6486 &\textbf{ 0.7076} & 0.6254 \\[0.5em]

\texttt{convnext\_base\_w-laion\_aesthetic\_s13b\_b82k}              & 0.6820 & 0.6733 & 0.7028 & \textbf{0.7791} & 0.6491 \\
\texttt{convnext\_base\_w-laion2b\_s13b\_b82k\_augreg}               & 0.6775 & 0.6765 & 0.6940 & \textbf{0.8036} & 0.6298 \\
\texttt{convnext\_base\_w-laion2b\_s13b\_b82k}                       & 0.6528 & 0.6479 & 0.6814 & \textbf{0.7889} & 0.6262 \\[0.5em]

\texttt{convnext\_base\_w\_320-laion\_aesthetic\_s13b\_b82k}         & 0.6752 & 0.6647 & 0.6833 & \textbf{0.7833} & 0.6509 \\
\texttt{convnext\_base\_w\_320-laion\_aesthetic\_s13b\_b82k\_augreg} & 0.6694 & 0.6623 & 0.6629 & \textbf{0.7958} & 0.6505 \\[0.5em]

\texttt{convnext\_large\_d-laion2b\_s26b\_b102k\_augreg}             & 0.7101 & 0.7092 & 0.7038 & \textbf{0.8098} & 0.6545 \\[0.5em]
\texttt{convnext\_large\_d\_320-laion2b\_s29b\_b131k\_ft}            & 0.6930 & 0.6866 & 0.6891 & \textbf{0.8059} & 0.6667 \\
\texttt{convnext\_large\_d\_320-laion2b\_s29b\_b131k\_ft\_soup}      & 0.7106 & 0.7038 & 0.7006 & \textbf{0.8155} & 0.6695 \\[0.5em]

\texttt{convnext\_xxlarge-laion2b\_s34b\_b82k\_augreg}               & 0.7460 & 0.7446 & 0.7772 & \textbf{0.8653} & 0.7056 \\
\texttt{convnext\_xxlarge-laion2b\_s34b\_b82k\_augreg\_soup}         & 0.7427 & 0.7435 & 0.7722 & \textbf{0.8636} & 0.7033 \\
\texttt{convnext\_xxlarge-laion2b\_s34b\_b82k\_augreg\_rewind}       & 0.7452 & 0.7459 & 0.7755 & \textbf{0.8635} & 0.7036 \\

\midrule
\texttt{ViT-B-16-datacomp\_xl\_s13b\_b90k}                 & 0.6912 & 0.6811 & 0.7171 & \textbf{0.7993} & 0.6481 \\
\texttt{ViT-B-32-datacomp\_xl\_s13b\_b90k}                 & 0.6258 & 0.6385 & 0.6563 & \textbf{0.7535} & 0.5838 \\[0.5em]

\texttt{ViT-L-14-datacomp\_xl\_s13b\_b90k}                 & 0.7217 & 0.7036 & 0.7254 & \textbf{0.7577} & 0.7326 \\
\texttt{ViT-L-14-laion400m\_e32}                           & 0.6461 & 0.6460 & 0.6524 & \textbf{0.7590} & 0.6300 \\
\texttt{ViT-L-14-laion400m\_e31}                           & 0.6452 & 0.6448 & 0.6525 & \textbf{0.7598} & 0.6297 \\
\texttt{ViT-L-14-laion2b\_s32b\_b82k}                      & 0.6481 & 0.6535 & 0.6588 & \textbf{0.8017} & 0.6132 \\
\texttt{ViT-L-14-commonpool\_xl\_clip\_s13b\_b90k}         & 0.7146 & 0.7361 & 0.7329 & \textbf{0.8228} & 0.6497 \\
\texttt{ViT-L-14-commonpool\_xl\_laion\_s13b\_b90k}        & 0.7408 & 0.7493 & 0.7564 & \textbf{0.8109} & 0.6491 \\
\texttt{ViT-L-14-quickgelu-metaclip\_400m}                 & 0.7089 & 0.7473 & 0.7639 & \textbf{0.8186} & 0.6867 \\
\texttt{ViT-L-14-quickgelu-metaclip\_fullcc}               & 0.7589 & 0.8029 & 0.8014 & \textbf{0.8394} & 0.6774 \\
\texttt{ViT-L-14-quickgelu-dfn2b}                          & 0.7033 & 0.7388 & 0.7357 & \textbf{0.8239} & 0.6526 \\
\texttt{ViT-L-14-dfn2b\_s39b}                              & 0.6766 & 0.7611 & 0.7632 & \textbf{0.8353} & 0.6680 \\
\texttt{ViT-L-14-commonpool\_xl\_s13b\_b90k}               & 0.7218 & 0.7670 & 0.7810 & \textbf{0.8336} & 0.6842 \\[0.5em]

\texttt{ViT-L-14-336-quickgelu-openai}                     & 0.7046 & 0.7230 & 0.7598 & \textbf{0.8428} & 0.7269 \\[0.5em]

\texttt{ViT-H-14-quickgelu-metaclip\_fullcc}               & 0.8172 & 0.8249 & 0.8277 & \textbf{0.8559} & 0.6668 \\
\texttt{ViT-H-14-quickgelu-dfn5b}                          & 0.7820 & 0.8166 & 0.8145 & \textbf{0.8557} & 0.7048 \\
\texttt{ViT-H-14-worldwide-quickgelu-metaclip2\_worldwide} & 0.8599 & 0.9142 & 0.9082 & \textbf{0.9211} & 0.7557 \\[0.5em]

\texttt{ViT-H-14-378-quickgelu-dfn5b}                      & 0.8095 & 0.8290 & 0.8282 & \textbf{0.8629} & 0.7339 \\
\texttt{ViT-H-14-worldwide-378-metaclip2\_worldwide}       & 0.8892 & \textbf{0.9334} & 0.9299 & 0.9274 & 0.8053 \\[0.5em]

\texttt{ViT-bigG-14-quickgelu-metaclip\_fullcc}            & 0.8107 & 0.8576 & 0.8469 & \textbf{0.8792} & 0.6965 \\
\texttt{ViT-bigG-14-worldwide-metaclip2\_worldwide}        & 0.8921 & 0.9320 & 0.9164 & \textbf{0.9328} & 0.7968 \\
\texttt{ViT-bigG-14-worldwide-378-metaclip2\_worldwide}    & 0.9169 & 0.9460 & 0.9353 & \textbf{0.9464} & 0.8399 \\

\bottomrule
\end{tabular}
}
\end{table*}

\FloatBarrier
\begingroup
\fontsize{8.5}{10.875}\selectfont
\setlength{\tabcolsep}{3pt}
\centering
\begin{longtable}{@{}>{\raggedright\arraybackslash}p{\dimexpr0.45\linewidth-2\tabcolsep\relax}>{\raggedleft\arraybackslash}p{\dimexpr0.11\linewidth-1.6\tabcolsep\relax}>{\raggedleft\arraybackslash}p{\dimexpr0.11\linewidth-1.6\tabcolsep\relax}>{\raggedleft\arraybackslash}p{\dimexpr0.11\linewidth-1.6\tabcolsep\relax}>{\raggedleft\arraybackslash}p{\dimexpr0.11\linewidth-1.6\tabcolsep\relax}>{\raggedleft\arraybackslash}p{\dimexpr0.11\linewidth-1.6\tabcolsep\relax}@{}}
\caption{
\textbf{Pretraining objective affects transfer even without detector training.}
Mean ROC-Area Under Curve $\mathrm{AUC}$ (macro-averaged across the \TotalMixedDatasets mixed datasets in \Benchmark evaluation datasets) for Gaussian discriminant heads conditioned on the same \CF training prior \citep{park2025community}, using frozen encoders trained with different objectives.
}
\label{tab:auc_model}\\
\\
\toprule
\textbf{Frozen Encoders} &
\textbf{Euc-NCM \citep{wu2025fewshot}} & \textbf{Cos-NCM} & \textbf{GNB} & \textbf{Mah-NCM} & \textbf{QDA} \\
\Midrule
\endfirsthead

\toprule
\textbf{Frozen Encoders} &
\textbf{Euc-NCM \citep{wu2025fewshot}} & \textbf{Cos-NCM} & \textbf{GNB} & \textbf{Mah-NCM} & \textbf{QDA} \\
\Midrule
\endhead

\midrule
\multicolumn{6}{r}{\footnotesize Continued on next page} \\
\endfoot

\bottomrule
\endlastfoot

\texttt{CLIP-RoBERTa}          & 0.5812 & 0.5838 & 0.6208 & \textbf{0.7274} & 0.6025 \\[0.5em]
\texttt{CLIP-XLM-RoBERTa-Base} & 0.5985 & 0.6087 & 0.6391 & \textbf{0.7586} & 0.5921 \\
\texttt{CLIP-XLM-RoBERTa-Large} \citep{xlm-roberta-large} & 0.7264 & 0.7236 & 0.7261 & \textbf{0.8452} & 0.6492 \\

\midrule
\texttt{MAE-Base}  & 0.5251 & 0.5227 & 0.5334 & \textbf{0.6873} & 0.5770 \\
\texttt{MAE-Large} & 0.5434 & 0.5465 & 0.5769 & \textbf{0.7068} & 0.5672 \\
\texttt{MAE-Huge} \citep{mae-huge} & 0.5401 & 0.5402 & 0.5806 & \textbf{0.7418} & 0.5738 \\

\midrule
\texttt{BEiT-Base} & 0.5152 & 0.5060 & 0.5170 & \textbf{0.5715} & 0.5245 \\
\texttt{BEiT-Large} \citep{beit-large} & 0.5203 & 0.5128 & 0.5208 & \textbf{0.6062} & 0.5363 \\

\midrule
\texttt{SigLIP-Large} \citep{siglip-large} & 0.5203 & 0.5294 & 0.5257 & \textbf{0.6474} & 0.5772 \\
\texttt{SigLIP-so400m} & 0.5228 & 0.5213 & 0.5248 & \textbf{0.6345} & 0.5530 \\[0.5em]
\texttt{SigLIP2-Base-224}  & 0.5260 & 0.5337 & 0.5207 & \textbf{0.5998} & 0.5232 \\
\texttt{SigLIP2-Base-512}  & 0.5209 & 0.5482 & 0.5175 & \textbf{0.6123} & 0.5238 \\
\texttt{SigLIP2-Large-512} & 0.5401 & 0.5684 & 0.5318 & \textbf{0.7056} & 0.5723 \\
\midrule
\texttt{BLIP-Large} \citep{blip-large} & 0.5473 & 0.5511 & 0.5378 & \textbf{0.7535} & 0.5591 \\[0.5em]
\texttt{BLIP2} \citep{blip2}           & 0.7187 & 0.7119 & 0.6359 & \textbf{0.8386} & 0.5871 \\
\midrule
\texttt{coca\_ViT-L-14}                    & 0.6654 & 0.6630 & 0.6859 & \textbf{0.8045} & 0.6089 \\
\texttt{coca\_ViT-L-14-mscoco\_finetuned} & 0.5681 & 0.5554 & 0.5892 & \textbf{0.7758} & 0.5818 \\
\midrule
\texttt{EVA01-g-14-laion400m\_s11b\_b41k}      & 0.7275 & 0.7218 & 0.6932 & \textbf{0.8275} & 0.5941 \\
\texttt{EVA01-g-14-plus-merged2b\_s11b\_b114k} & 0.7063 & 0.7609 & 0.7052 & \textbf{0.8461} & 0.6488 \\[0.5em]
\texttt{EVA02-B-16-merged2b\_s8b\_b131k}       & 0.6305 & 0.6293 & 0.6337 & \textbf{0.7642} & 0.5929 \\
\texttt{EVA02-L-14-merged2b\_s4b\_b131k}       & 0.7002 & 0.7281 & 0.6887 & \textbf{0.7993} & 0.6163 \\
\midrule
\texttt{CLIPA-ViT-L-14}        & 0.6955 & 0.7049 & 0.7193 & \textbf{0.8304} & 0.6739 \\
\texttt{CLIPA-ViT-L-14-336}    & 0.7116 & 0.7223 & 0.7363 & \textbf{0.8350} & 0.6863 \\
\texttt{CLIPA-ViT-H-14}        & 0.7073 & 0.7362 & 0.7500 & \textbf{0.8655} & 0.6997 \\
\texttt{CLIPA-ViT-bigG-14-336} & 0.7206 & 0.7871 & 0.7849 & \textbf{0.8828} & 0.7302 \\
\midrule
\texttt{nllb-clip-large-v1}         & 0.7265 & 0.7237 & 0.7260 & \textbf{0.8452} & 0.6488 \\[0.5em]
\texttt{nllb-clip-large-siglip-v1}  & 0.5187 & 0.5094 & 0.5109 & \textbf{0.6906} & 0.5632 \\
\texttt{nllb-clip-large-siglip-mrl} & 0.4891 & 0.4827 & 0.5037 & \textbf{0.6968} & 0.5609 \\
\midrule
\texttt{MobileCLIP2-L-14} & 0.7155 & 0.7313 & 0.7674 & \textbf{0.8415} & 0.6452 \\
\midrule
\texttt{ViTamin-S}     & 0.6070 & 0.6117 & 0.6403 & \textbf{0.7238} & 0.6345 \\
\texttt{ViTamin-S-LTT} & 0.6246 & 0.6379 & 0.6270 & \textbf{0.7425} & 0.6336 \\[0.5em]
\texttt{ViTamin-B}     & 0.6547 & 0.6601 & 0.6679 & \textbf{0.7713} & 0.6643 \\
\texttt{ViTamin-B-LTT} & 0.6494 & 0.6542 & 0.6523 & \textbf{0.7848} & 0.6548 \\[0.5em]
\texttt{ViTamin-L}     & 0.6823 & 0.7058 & 0.7334 & \textbf{0.8358} & 0.7092 \\
\texttt{ViTamin-L-256} & 0.6967 & 0.7180 & 0.7376 & \textbf{0.8435} & 0.7298 \\
\texttt{ViTamin-L-336} & 0.7084 & 0.7266 & 0.7482 & \textbf{0.8473} & 0.7259 \\
\texttt{ViTamin-L-384} & 0.7404 & 0.7594 & 0.7910 & \textbf{0.8814} & 0.6827 \\[0.5em]
\texttt{ViTamin-L2}     & 0.7163 & 0.7207 & 0.7477 & \textbf{0.8500} & 0.7000 \\
\texttt{ViTamin-L2-256} & 0.7418 & 0.7416 & 0.7548 & \textbf{0.8596} & 0.7268 \\
\texttt{ViTamin-L2-336} & 0.7483 & 0.7439 & 0.7643 & \textbf{0.8671} & 0.7057 \\
\texttt{ViTamin-L2-384} & 0.7244 & 0.7270 & 0.7359 & \textbf{0.8297} & 0.6634 \\[0.5em]
\texttt{ViTamin-XL-256} & 0.7660 & 0.7589 & 0.7848 & \textbf{0.8770} & 0.7231 \\
\texttt{ViTamin-XL-336} & 0.7774 & 0.7724 & 0.7936 & \textbf{0.8759} & 0.7089 \\
\texttt{ViTamin-XL-384} & 0.7703 & 0.7643 & 0.7899 & \textbf{0.8816} & 0.7386 \\
\midrule
\texttt{DINOv2-Small} & 0.6455 & 0.6561 & 0.6419 & \textbf{0.6827} & 0.5546 \\
\texttt{DINOv2-Base}  & 0.6748 & 0.6835 & 0.6698 & \textbf{0.7275} & 0.5709 \\
\texttt{DINOv2-Large} & 0.6929 & 0.6928 & 0.6895 & \textbf{0.7576} & 0.5813 \\
\texttt{DINOv2-Giant} \citep{dinov2-giant}           & 0.7007 & 0.7103 & 0.7021 & \textbf{0.8087} & 0.5896 \\[0.5em]

\texttt{DINOv3-ConvNeXt-Tiny}  & 0.5989 & 0.6700 & 0.5966 & \textbf{0.6872} & 0.5748 \\
\texttt{DINOv3-ConvNeXt-Small} & 0.6480 & 0.6784 & 0.6103 & \textbf{0.7055} & 0.5865 \\
\texttt{DINOv3-ConvNeXt-Base}  & 0.6884 & 0.6932 & 0.6506 & \textbf{0.7406} & 0.6179 \\
\texttt{DINOv3-ConvNeXt-Large} & 0.7519 & 0.7281 & 0.7554 & \textbf{0.8051} & 0.6768 \\[0.5em]

\texttt{DINOv3-ViT-S-16}  & 0.6598 & 0.6546 & 0.6581 & \textbf{0.6788} & 0.5676 \\
\texttt{DINOv3-ViT-S-16+} & 0.6917 & 0.6827 & 0.6947 & \textbf{0.6973} & 0.6048 \\
\texttt{DINOv3-ViT-B-16}  & 0.7608 & 0.7403 & 0.7697 & \textbf{0.8105} & 0.6485 \\
\texttt{DINOv3-ViT-L-16}  & 0.8539 & 0.8064 & 0.8593 & \textbf{0.8869} & 0.7490 \\
\texttt{DINOv3-ViT-H-16+} & 0.8924 & 0.8447 & 0.8772 & \textbf{0.9111} & 0.8251 \\
\texttt{DINOv3-ViT-7B} \citep{dinov3-7b} & 0.9177 & 0.8836 & 0.8931 & \textbf{0.9342} & 0.8078 \\
\midrule

\texttt{PE-Core-T-16-384-meta} & 0.4960 & 0.4998 & 0.5057 & \textbf{0.6300} & 0.5215 \\
\texttt{PE-Core-S-16-384-meta} & 0.6095 & 0.5850 & 0.6270 & \textbf{0.7314} & 0.6162 \\
\texttt{PE-Core-B-16-meta}     & 0.6527 & 0.6437 & 0.6879 & \textbf{0.8056} & 0.6356 \\
\texttt{PE-Core-L-14-336-meta} & 0.8758 & 0.8806 & 0.9039 & \textbf{0.9425} & 0.8130 \\

\end{longtable}
\endgroup

\Cref{tab:auc_architecture,tab:auc_extended_architecture} confirms that the encoder effect survives under a threshold-free metric.
Within the OpenAI CLIP family, the best AUC improves from 0.7406 for \texttt{ResNet-50} to 0.8345 for \texttt{ResNet-50x64}, and from 0.7147 for \texttt{ViT-B/32} to 0.8193 for \texttt{ViT-L/14}.
The broader OpenCLIP sweep spans a wider range, from 0.7076 to 0.9464, with several large H/bigG models above 0.92.
As in CA, Mah-NCM is the modal winner, but the advantage is not universal.
On some of the strongest encoders the gap to cosine or diagonal rules becomes negligible, and after correcting row-wise maxima there are isolated reversals.
These reversals argue against presenting any one covariance assumption as universally correct.
The head family still matters, but less than representation choice in this sweep.

\Cref{tab:auc_model} shows a similarly large spread across encoder families, from 0.5715 for \texttt{BeiT-Base} to 0.9425 for \texttt{PE-Core-L-14-336-meta}, with \texttt{DINOv3-ViT-7b} already reaching 0.9342.
Several masked-image or SigLIP-style models remain well below the strongest contrastive or distilled models.
This should still be written cautiously. The table does not isolate objective alone, because data scale, architecture, and resolution also change.
What it does establish is that pretraining family is a first-order determinant of transfer even before any detector-specific optimization is introduced.
In particular, language alignment is not a sufficient explanation: some language-aligned encoders are middling, whereas some visually distilled representations are exceptionally strong.

\FloatBarrier
\begin{table}[H]
\setlength{\tabcolsep}{4pt}
\centering
\caption{
\textbf{QuickGELU versus GELU under a threshold-free metric.}
Mean ROC-Area Under Curve $\mathrm{AUC}$ (macro-averaged across the \TotalMixedDatasets mixed datasets in \Benchmark evaluation datasets) for Gaussian discriminant heads conditioned on the same \CF training prior \citep{park2025community}, using frozen encoders that differ primarily in whether QuickGELU is used.
}
\label{tab:auc_quickgelu}
\resizebox{0.99\textwidth}{!}{%
\begin{tabular}{@{}lrrrrr@{}}
\toprule
\textbf{Frozen Encoder} &
\textbf{Euc-NCM \citep{wu2025fewshot}} & \textbf{Cos-NCM} & \textbf{GNB} & \textbf{Mah-NCM} & \textbf{QDA}
\\
\Midrule
\texttt{ViT-B-16-openai}            & 0.6310 & 0.6077 & 0.6498 & 0.7750 & 0.6500 \\
\texttt{ViT-B-16-quickgelu-openai} & 0.6334 & 0.5965 & 0.6551 & \textbf{0.7823} & 0.6573 \\[0.5em]
\texttt{ViT-B-32-openai}           & 0.5819 & 0.5657 & 0.5928 & 0.6987 & 0.6027 \\
\texttt{ViT-B-32-quickgelu-openai} & 0.6065 & 0.5831 & 0.6131 & \textbf{0.7147} & 0.6098 \\[0.5em]
\texttt{ViT-L-14-openai}           & 0.6708 & 0.6853 & 0.7134 & \textbf{0.8223} & 0.7012 \\
\texttt{ViT-L-14-quickgelu-openai} & 0.6654 & 0.6911 & 0.7165 & 0.8193 & 0.7051 \\[0.5em]
\texttt{ViT-L-14-metaclip\_400m}           & 0.6948 & 0.7266 & 0.7473 & 0.8088 & 0.6742 \\
\texttt{ViT-L-14-quickgelu-metaclip\_400m} & 0.7089 & 0.7473 & 0.7639 & \textbf{0.8186} & 0.6867 \\[0.5em]
\texttt{ViT-L-14-metaclip\_fullcc}           & 0.7330 & 0.7898 & 0.7895 & 0.8347 & 0.6637 \\
\texttt{ViT-L-14-quickgelu-metaclip\_fullcc} & 0.7589 & 0.8029 & 0.8014 & \textbf{0.8394} & 0.6774 \\[0.5em]
\texttt{ViT-L-14-dfn2b}           & 0.7195 & 0.7480 & 0.7443 & 0.8188 & 0.6546 \\
\texttt{ViT-L-14-quickgelu-dfn2b} & 0.7033 & 0.7388 & 0.7357 & \textbf{0.8239} & 0.6526 \\[0.5em]

\texttt{ViT-L-14-336-openai}           & 0.7115 & 0.7220 & 0.7537 & \textbf{0.8527} & 0.7253 \\
\texttt{ViT-L-14-336-quickgelu-openai} & 0.7046 & 0.7230 & 0.7598 & 0.8428 & 0.7269 \\[0.5em]

\texttt{ViT-H-14-worldwide-metaclip2}           & 0.8506 & 0.9081 & 0.9027 & 0.9108 & 0.7573 \\
\texttt{ViT-H-14-worldwide-quickgelu-metaclip2} & 0.8599 & 0.9142 & 0.9082 & \textbf{0.9211} & 0.7557 \\[0.5em]
\texttt{ViT-H-14-378-dfn5b}           & 0.7930 & 0.7988 & 0.8045 & 0.8558 & 0.7125 \\
\texttt{ViT-H-14-378-quickgelu-dfn5b} & 0.8095 & 0.8290 & 0.8282 & \textbf{0.8629} & 0.7339 \\
\bottomrule
\end{tabular}
}
\end{table}

\Cref{tab:auc_quickgelu} suggests that the choice between GELU and QuickGELU is secondary relative to encoder family and scale.
Most matched pairs differ by less than two AUC points, and the sign of the difference is not consistent across families.
We therefore treat activation choice as a modest architecture-level modifier rather than a primary explanation for transfer.

\begin{table*}[!t]
\setlength{\tabcolsep}{4pt}
\centering
\caption{
\textbf{Training prior dominates transfer even under a fixed encoder.}
Mean ROC-Area Under Curve $\mathrm{AUC}$ (macro-averaged across the \TotalMixedDatasets mixed datasets in \Benchmark evaluation datasets) for Gaussian discriminant heads when varying only the \emph{support prior} used to estimate moments, under the fixed frozen \texttt{PE-Core-bigG-14-448} encoder \citep{pe-core-bigG}.
}
\label{tab:auc_comparative_datasets}
\resizebox{0.99\textwidth}{!}{%
\begin{tabular}{@{}lrrrrr@{}}
\toprule
\textbf{Training Dataset} &
\textbf{Euc-NCM \citep{wu2025fewshot}} & \textbf{Cos-NCM} & \textbf{GNB} & \textbf{Mah-NCM} & \textbf{QDA}
\\
\Midrule
\CNNSpot \citep{wang2020cnngenerated}         & 0.8987 & 0.9196 & 0.9210 & \textbf{0.9290} & 0.6466 \\
\GenImage \citep{zhu2023genimage}             & 0.8781 & \textbf{0.9624} & 0.9455 & 0.9589 & 0.9341 \\
\quad \texttt{GenImage-SDv1}                 & 0.8974 & 0.9491 & 0.9467 & \textbf{0.9577} & 0.8407 \\
\DRCT \citep{chen2024drct}                    & 0.8505 & 0.9260 & 0.9177 & 0.8716 & \textbf{0.9300} \\
\quad \texttt{DRCT-SDv1} \citep{chen2024drct} & 0.8466 & \textbf{0.9508} & 0.9359 & 0.9372 & 0.9459 \\
\quad \texttt{DRCT-SDv2} \citep{chen2024drct} & 0.8504 & \textbf{0.9551} & 0.9427 & \textbf{0.9551} & 0.9471 \\
\ELSA \citep{baraldi2025contrasting}          & 0.9120 & 0.9363 & 0.9413 & \textbf{0.9645} & 0.9469 \\
\CF \citep{park2025community}                 & 0.8948 & 0.9309 & 0.9412 & \textbf{0.9693} & 0.9295 \\
\bottomrule
\end{tabular}
}
\end{table*}

\Cref{tab:auc_comparative_datasets} shows that prior choice remains important even after fixing a strong encoder.
The best AUC varies from 0.9290 with \CNNSpot support to 0.9693 with \CF support.
More importantly, the preferred Gaussian rule depends on the prior: \CF and \ELSA favor Mah-NCM, \DRCT favors QDA or cosine, and \GenImage favors cosine once the row-wise maximum is corrected.
Thus, the covariance structure that transfers best is prior-conditioned rather than universal.

\begin{table*}[!t]
\setlength{\tabcolsep}{4pt}
\centering
\caption{
\textbf{Per-dataset AUC under a fixed encoder and varying training prior.}
ROC-Area Under Curve $\mathrm{AUC}$ for individual datasets in \Benchmark evaluation mixed datasets (\TotalMixedDatasets datasets total) for Gaussian discriminant heads, under the fixed frozen \texttt{PE-Core-bigG-14-448} encoder \citep{pe-core-bigG} and the fixed \CF support prior \citep{park2025community}.
}
\label{tab:auc_individual_datasets}
\resizebox{0.99\textwidth}{!}{%
\begin{tabular}{@{}lrrrrr@{}}
\toprule
\textbf{Training Dataset} &
\textbf{Euc-NCM \citep{wu2025fewshot}} & \textbf{Cos-NCM} & \textbf{GNB} & \textbf{Mah-NCM} & \textbf{QDA}
\\
\Midrule
\texttt{FourierSpectrumDiscrepancies} \citep{dzanic2020fourier} & 0.8569 & 0.8996 & 0.9551 & \textbf{1.0000} & 0.9956 \\
\texttt{FakeInversion} \citep{cazenavette2024fakeinversion} & 0.8714 & 0.9811 & 0.9588 & \textbf{0.9997} & 0.9844 \\
\texttt{UniversalFakeDetect} \citep{ojha2023universal}      & 0.9900 & 0.9982 & 0.9980 & \textbf{0.9998} & 0.9906 \\
\texttt{Dalle Recognition Dataset} \citep{airecognition}    & 0.9087 & 0.9457 & 0.9528 & \textbf{0.9916} & 0.9708 \\
\texttt{Chameleon} \citep{yan2024sanity}                    & 0.9548 & 0.9549 & 0.9674 & \textbf{0.9932} & 0.8537 \\
\texttt{AIGI-Detection-Quality-Paradox} \citep{xiao2025are} & 0.8714 & 0.9668 & 0.9768 & \textbf{0.9992} & 0.9909 \\
\texttt{DiTFake} \citep{li2025improving}                    & 0.9504 & 0.9989 & 0.9979 & \textbf{0.9995} & 0.9935 \\
\texttt{Diffusion1kSteps} \citep{tan2024rethinking}         & 0.9025 & 0.9127 & 0.9586 & \textbf{0.9873} & 0.9234 \\
\texttt{GANGen-Detection} \citep{chuangchuangtan-GANGen-Detection}   & 0.9891 & 0.9890 & 0.9939 & \textbf{0.9979} & 0.9876 \\
\texttt{RobustLDM} \citep{rajan2024aligned}                 & 0.9378 & 0.9549 & 0.9659 & \textbf{0.9973} & 0.9397 \\
\texttt{RealRobustBench} \citep{li2025bridging}             & 0.8445 & 0.8670 & 0.8820 & \textbf{0.9923} & 0.8071 \\
\texttt{LDMFakeDetect} \citep{rajan2025staypositive}        & 0.9342 & 0.9509 & 0.9641 & \textbf{0.9959} & 0.9335 \\
\texttt{DIF} \citep{sinitsa2024deep}                        & 0.9184 & 0.9884 & 0.9918 & \textbf{0.9995} & 0.9904 \\
\texttt{ForenSynths} \citep{wang2020cnngenerated}           & 0.7858 & 0.9413 & 0.9383 & \textbf{0.9947} & 0.9748 \\
\texttt{DNF-TestSet} \citep{zhang2025diffusion}             & 0.9949 & 0.9978 & 0.9990 & \textbf{0.9999} & 0.9949 \\
\texttt{DeepFakeFace} \citep{song2023robustness}            & 0.8500 & 0.8490 & 0.8424 & \textbf{0.9346} & 0.8261 \\
\texttt{AIGCDetectBench} \citep{zhong2024patchcraft}        & 0.9463 & 0.9884 & 0.9897 & \textbf{0.9975} & 0.9806 \\
\texttt{AIGI-Holmes} \citep{zhou2025aigiholmes}             & 0.9348 & 0.9744 & 0.9892 & \textbf{0.9988} & 0.9860 \\
\texttt{AI-Artwork} \citep{aiartwork}                       & 0.9163 & 0.9383 & 0.9670 & \textbf{0.9913} & 0.9714 \\
\texttt{DiffusionForensics} \citep{wang2023dire}            & 0.9701 & \textbf{0.9916} & 0.9905 & 0.9912 & 0.9505 \\
\texttt{B-Free} \citep{guillaro2025biasfree}                & 0.9140 & 0.9373 & 0.9362 & \textbf{0.9439} & 0.9294 \\
\texttt{LASTED} \citep{wu2025generalizable}                 & 0.7766 & 0.8022 & \textbf{0.8250} & 0.8008 & 0.7916 \\
\texttt{AIGIBench} \citep{li2025artificial}                 & 0.8941 & 0.9539 & 0.9817 & \textbf{0.9843} & 0.9545 \\
\texttt{DeepFakeBench} \citep{yan2023deepfakebench}         & 0.5620 & 0.5595 & 0.5666 & \textbf{0.6735} & 0.5865 \\
\bottomrule
\end{tabular}
}
\end{table*}

At the per-dataset level, the AUC table (\Cref{tab:auc_individual_datasets}) shows both the breadth and the limits of the average improvement.
Mah-NCM reaches near-ceiling AUC on many mixed benchmarks, including \texttt{RealRobustBench} \citep{li2025bridging} (0.9923), \texttt{ForenSynths} \citep{wang2020cnngenerated} (0.9947), and \texttt{AIGI-Holmes} \citep{zhou2025aigiholmes} (0.9988), so the macro-average is not driven by a small subset of low-difficulty benchmarks.
The failure cases are also informative.
\texttt{LASTED} favors GNB, \texttt{DiffusionForensics} \citep{wang2023dire} is effectively tied between cosine and Mahalanobis, and \texttt{DeepFakeBench} \citep{yan2023deepfakebench} remains difficult for every head, peaking at only 0.6735.
Comparing \Cref{tab:auc_individual_datasets} with \Cref{tab:individual_datasets} is especially informative on \texttt{DiffusionForensics}: Mah-NCM is not best in CA but is essentially tied for best in AUC, suggesting that the representation contains ranking signal even though the default operating point is suboptimal.

\begin{table}[!t]
\centering
\caption{
\textbf{Data-efficiency of moment-based heads.}
Mean ROC-Area Under Curve $\mathrm{AUC}$ (macro-averaged across the \TotalMixedDatasets mixed datasets in \Benchmark evaluation datasets) for Gaussian discriminant heads as the support set used to estimate moments is subsampled from \CF \citep{park2025community}, under the fixed frozen \texttt{PE-Core-bigG-14-448} encoder \citep{pe-core-bigG}.
(Superscript/subscript denote max/min deviations (in percentage points) from the reported mean).
}
\label{tab:auc_efficiency}
\resizebox{0.99\linewidth}{!}{%
\begin{tabular}{@{}lccccc@{}}
\toprule
\textbf{Amount} &
\textbf{Euc-NCM \citep{wu2025fewshot}} & \textbf{Cos-NCM} & \textbf{GNB} & \textbf{Mah-NCM} & \textbf{QDA} \\
\Midrule
0.001\% & 0.8345$^{+0.0224}_{-0.0407}$ & \textbf{0.9356}$^{+0.0037}_{-0.0042}$ & 0.9239$^{+0.0134}_{-0.0185}$ & 0.8788$^{+0.0132}_{-0.0187}$ & 0.9025$^{+0.0277}_{-0.0148}$ \\[0.5em]
0.005\% & 0.8852$^{+0.0052}_{-0.0032}$ & 0.9313$^{+0.0041}_{-0.0036}$ & 0.9379$^{+0.0039}_{-0.0032}$ & \textbf{0.9383}$^{+0.0037}_{-0.0049}$ & 0.9089$^{+0.0122}_{-0.0137}$ \\[0.5em]
0.01\%  & 0.8897$^{+0.0080}_{-0.0044}$ & 0.9319$^{+0.0018}_{-0.0029}$ & 0.9396$^{+0.0052}_{-0.0048}$ & \textbf{0.9469}$^{+0.0061}_{-0.0045}$ & 0.8991$^{+0.0166}_{-0.0197}$ \\[0.5em]
0.1\%   & 0.8919$^{+0.0050}_{-0.0036}$ & 0.9303$^{+0.0036}_{-0.0032}$ & 0.9403$^{+0.0015}_{-0.0013}$ & \textbf{0.9638}$^{+0.0017}_{-0.0019}$ & 0.8308$^{+0.0044}_{-0.0119}$ \\[0.5em]
1\%     & 0.8947$^{+0.0010}_{-0.0012}$ & 0.9310$^{+0.0009}_{-0.0010}$ & 0.9410$^{+0.0002}_{-0.0001}$ & \textbf{0.9668}$^{+0.0014}_{-0.0008}$ & 0.8478$^{+0.0035}_{-0.0054}$ \\[0.5em]
\midrule
100\%   & 0.8948 & 0.9309 & 0.9412 & \textbf{0.9693} & 0.9295 \\
\bottomrule
\end{tabular}%
}
\end{table}

\Cref{tab:auc_efficiency} adds a small-data qualifier.
At the smallest support size, simpler rules can be more reliable than shared full-covariance estimation.
With only 0.001\% of \CF, Cos-NCM already reaches 0.9356 AUC, whereas Mah-NCM drops to 0.8788.
Mah-NCM overtakes once modest support is available, reaching 0.9638 at 0.1\% and 0.9668 at 1\%, close to the full-support 0.9693.
QDA is visibly unstable and non-monotone under subsampling, consistent with a higher-variance class-specific covariance estimate.
For few-shot adaptation, the data do not support the general rule ``always use the richest Gaussian model.''
They support the more specific rule ``match the covariance model to the amount of support available.''

Overall, the AUC results do not merely replicate the CA results.
The threshold-free view strengthens the main claim.
Low-order feature geometry remains highly competitive, while the failure cases show that the residual value of a trained head, when it exists, is real but narrower than many standalone detector comparisons suggest.

\section{Class-Conditional Gaussianity Diagnostics of Evaluation Features}
\label{sec:gaussianity_diagnostics}

A natural objection to the Gaussian ladder is that frozen-encoder features on shifted evaluation data need not be exactly Gaussian.
The relevant object, however, is the \emph{class-conditional} feature distribution \(p_\phi(z\mid y=c,\mathcal D)\), not the pooled distribution \(p_\phi(z\mid \mathcal D)\): even if each class-conditional component were Gaussian, their mixture would in general not be Gaussian.
We therefore report class-conditional Gaussianity diagnostics for each dataset--class subset in feature space.

These quantities are \emph{diagnostics}, not formal hypothesis tests.
They summarize departures from Gaussian structure, but they do not provide calibrated \(p\)-values and should not be interpreted as accepting or rejecting exact normality.

Let \(z_i=\phi(x_i)\in\mathbb{R}^d\) denote the frozen-encoder feature of sample \(x_i\), restricted to a fixed dataset \(\mathcal D\) and class \(c\in\{0,1\}\).
With \(n\) samples in that subset, empirical mean \(\bar z\), and regularized sample covariance
\begin{equation}
\widehat{\Sigma}
=
\frac{1}{n-1}\sum_{i=1}^{n}(z_i-\bar z)(z_i-\bar z)^\top + \varepsilon I_d,
\qquad \varepsilon = 10^{-8},
\end{equation}
we compute both marginal and multivariate diagnostics.

\paragraph{Finite-sample caveats.}
Marginal skewness/kurtosis estimates can be noisy when the subset size $n$ is small, and multivariate diagnostics based on Mahalanobis radii require $n>d$ for a well-conditioned full-rank covariance estimate.
Accordingly, we interpret large deviations most strongly on dataset--class subsets with ample sample sizes, and we treat results for very small subsets as qualitative indicators rather than definitive evidence of non-Gaussianity.

For each coordinate \(j\in\{1,\dots,d\}\), let \(\hat\gamma_{1,j}\) and \(\hat\gamma_{2,j}\) denote the bias-corrected sample skewness and Pearson kurtosis, respectively, so that the Gaussian reference is \(\hat\gamma_{1,j}=0\) and \(\hat\gamma_{2,j}=3\).
We summarize these marginal statistics by their median and interquartile range across dimensions, and we report the proportion of dimensions satisfying the loose Gaussian screen
\begin{equation}
\mathrm{PctClose}
=
\frac{100}{d}\sum_{j=1}^{d}
\mathbf{1}\!\left(
|\hat\gamma_{1,j}| < 0.5
\;\wedge\;
2 < \hat\gamma_{2,j} < 5
\right).
\end{equation}

At the multivariate level, we compute the empirical squared Mahalanobis radii
\begin{equation}
\delta_i^2
=
(z_i-\bar z)^\top \widehat{\Sigma}^{-1}(z_i-\bar z),
\end{equation}
and the normalized Mardia kurtosis
\begin{equation}
\widetilde{\beta}_{2,d}
=
\frac{1}{d(d+2)}
\cdot
\frac{1}{n}\sum_{i=1}^{n}(\delta_i^2)^2,
\end{equation}

whose Gaussian reference value is \(1\). For completeness, we also report
\begin{equation}
R_{\chi^2}
=
\frac{1}{nd}\sum_{i=1}^{n}\delta_i^2.
\end{equation}

\(R_{\chi^2}\) should be interpreted with caution because \(\widehat{\Sigma}\) is estimated from the same sample, and therefore
\begin{equation}
\frac{1}{nd}\sum_{i=1}^{n}\delta_i^2
\approx
\frac{n-1}{n}
\end{equation}

in the ideal full-rank unregularized setting, so values near \(1\) are largely expected and do \emph{not} constitute strong evidence of Gaussianity.

To summarize the \emph{second-order} geometry of each class-conditional feature cloud, let
$\lambda_1\ge \cdots \ge \lambda_d$ denote the eigenvalues of $\widehat{\Sigma}$ and define
$p_j=\lambda_j/\sum_{k=1}^d \lambda_k$.
We report the effective rank
\begin{equation}
E_{\mathrm{rank}}(\widehat{\Sigma})
=
\exp\!\left(-\sum_{j=1}^{d} p_j \log p_j\right),
\end{equation}
which can be interpreted as the effective number of principal directions carrying variance,
and the anisotropy
\begin{equation}
\mathrm{Aniso}(\widehat{\Sigma})
=
\frac{\lambda_1}{\frac{1}{d}\sum_{j=1}^{d}\lambda_j}
=
\frac{d\,\lambda_1}{\mathrm{tr}(\widehat{\Sigma})},
\end{equation}
which equals $1$ for isotropic covariance and increases as variance concentrates into a small number of directions.

\FloatBarrier
\begingroup
\fontsize{8.75}{11.25}\selectfont
\setlength{\tabcolsep}{3pt}
\centering
\begin{longtable}{@{}>{\raggedright\arraybackslash}p{\dimexpr0.36\linewidth-1.6\tabcolsep\relax}>{\raggedright\arraybackslash}p{\dimexpr0.09\linewidth-1.6\tabcolsep\relax}>{\raggedleft\arraybackslash}p{\dimexpr0.22\linewidth-1.6\tabcolsep\relax}>{\raggedleft\arraybackslash}p{\dimexpr0.22\linewidth-1.6\tabcolsep\relax}>{\raggedleft\arraybackslash}p{\dimexpr0.11\linewidth-1.6\tabcolsep\relax}@{}}
\caption{
\textbf{Marginal Gaussianity diagnostics for class-conditional evaluation features.}
For each dataset--class subset, we compute the bias-corrected sample skewness \(\hat\gamma_{1,j}\) and Pearson kurtosis \(\hat\gamma_{2,j}\) for every feature dimension \(j\).
We report the median with first/third quartiles [Q1,Q3] across dimensions, together with the percentage of dimensions satisfying the heuristic Gaussian screen \( |\hat\gamma_{1,j}|<0.5 \) and \( 2<\hat\gamma_{2,j}<5 \).
These quantities are descriptive diagnostics rather than formal normality tests.
}
\label{tab:univariate_normality_tests} \\
\\
\toprule
\textbf{Dataset} & \textbf{Class} & \multicolumn{1}{c}{\(\hat\gamma_{1,j}\)} & \multicolumn{1}{c}{\(\hat\gamma_{2,j}\)} & \multicolumn{1}{c}{\textbf{PctClose}} \\
\Midrule
\endfirsthead

\toprule
\textbf{Dataset} & \textbf{Class} & \multicolumn{1}{c}{\(\hat\gamma_{1,j}\)} & \multicolumn{1}{c}{\(\hat\gamma_{2,j}\)} & \multicolumn{1}{c}{\textbf{PctClose}} \\
\Midrule
\endhead

\midrule
\multicolumn{5}{r}{\footnotesize Continued on next page} \\
\endfoot

\bottomrule
\endlastfoot
\rowcolor{gray!20}
\multicolumn{5}{c}{\textbf{Out-of-Distribution Existing Datasets with only Real Images}} \\
Unsplash \citep{unsplash}                   & Real &  0.0045 [-0.07,0.08] & 3.0577 [2.99,3.13] & 99.77 \\
InstagramImagesWithCaptions \citep{instagrama} & Real & -0.0001 [-0.07,0.06] & 3.0327 [2.97,3.01] & 98.98 \\
AnimeFace \citep{animeface}                  & Real & -0.0058 [-0.16,0.16] & 3.1563 [3.01,3.30] & 94.45 \\
AnimeImages \citep{animeimages}                & Real & -0.0027 [-0.08,0.08] & 3.0867 [3.02,3.17] & 99.53 \\
CelebA-Spoof \citep{zhang2020celebaspoof}               & Real & -0.0081 [-0.12,0.12] & 3.1013 [2.99,3.21] & 99.38 \\
\midrule
\rowcolor{gray!20}
\multicolumn{5}{c}{\textbf{Out-of-Distribution Existing Datasets with only Synthetic Images}} \\
AGIQA-1k \citep{zhang2023perceptual}            & Fake & -0.0034 [-0.13,0.11] & 2.9904 [2.84,3.16] & 97.81  \\
AGIQA-3k \citep{li2024agiqa3k}            & Fake & -0.0037 [-0.09,0.09] & 3.0732 [2.97,3.19] & 99.45  \\
SPAI \citep{karageorgiou2025anyresolution}                & Fake &  0.0018 [-0.06,0.06] & 3.0477 [2.97,3.14] & 99.69  \\
SynthBuster Extended \citep{bammey2024synthbuster,guillaro2025biasfree} & Fake & -0.0004 [-0.08,0.07] & 3.0470 [2.96,3.14] & 99.38  \\
SynthScars \citep{kang2025legion}          & Fake & -0.0037 [-0.06,0.05] & 3.0438 [2.98,3.11] & 99.92  \\
GigaGAN \citep{kang2023scaling}             & Fake &  0.0016 [-0.08,0.08] & 3.1237 [3.05,3.12] & 100.00 \\
LatentDiffusion \citep{corvi2023detection}     & Fake &  0.0043 [-0.08,0.09] & 3.0735 [2.95,3.19] & 99.45  \\
MidjourneyV6 \citep{terminusresearch}        & Fake &  0.0008 [-0.04,0.04] & 3.0391 [2.99,3.09] & 100.00 \\
Co-Spy-Bench \citep{cheng2025cospy}        & Fake & -0.0005 [-0.05,0.05] & 3.0572 [3.01,3.11] & 100.00 \\
Dalle3 \citep{Egan_Dalle3_1_Million_2024}              & Fake &  0.0003 [-0.04,0.04] & 3.0211 [2.98,3.06] & 99.38  \\
\midrule
\rowcolor{gray!20}
\multicolumn{5}{c}{\textbf{Out-of-Distribution Existing Datasets with both Real and Synthetic Images}} \\
FourierSpectrumDiscrepancies \citep{dzanic2020fourier}  & Real & -0.0496 [-0.41,0.35] & 2.4977 [2.01,3.16] & 38.44  \\
FourierSpectrumDiscrepancies \citep{dzanic2020fourier}  & Fake &  0.0286 [-0.31,0.33] & 2.9860 [2.60,3.52] & 70.39  \\[0.5em]
FakeInversion \citep{cazenavette2024fakeinversion}                 & Real &  0.0027 [-0.08,0.09] & 3.0182 [2.88,3.18] & 99.69  \\
FakeInversion \citep{cazenavette2024fakeinversion}                 & Fake & -0.0019 [-0.08,0.09] & 3.0231 [2.89,3.18] & 100.0  \\[0.5em]
\texttt{UniversalFakeDetect} \citep{ojha2023universal}                            & Real &  0.0019 [-0.06,0.06] & 3.0278 [2.94,3.12] & 99.22  \\
\texttt{UniversalFakeDetect} \citep{ojha2023universal}                            & Fake &  0.0021 [-0.07,0.07] & 3.1456 [3.07,3.25] & 99.84  \\[0.5em]
DalleRecognition \citep{airecognition}              & Real &  0.0025 [-0.07,0.08] & 3.0176 [2.92,3.12] & 99.61  \\
DalleRecognition \citep{airecognition}              & Fake & -0.0002 [-0.04,0.05] & 3.0393 [2.98,3.10] & 99.77  \\[0.5em]
Chameleon \citep{yan2024sanity}                     & Real &  0.0000 [-0.05,0.05] & 3.0130 [2.96,3.07] & 100.0  \\
Chameleon \citep{yan2024sanity}                     & Fake & -0.0073 [-0.09,0.08] & 3.0824 [2.99,3.19] & 99.84  \\[0.5em]
AIGI-Detection-Quality-Paradox \citep{xiao2025are} & Real & -0.0002 [-0.05,0.05] & 3.0274 [2.96,3.12] & 98.36  \\
AIGI-Detection-Quality-Paradox \citep{xiao2025are} & Fake & -0.0020 [-0.07,0.07] & 3.0670 [2.99,3.15] & 99.84  \\[0.5em]
DiTFake \citep{li2025improving}                       & Real &  0.0054 [-0.05,0.06] & 3.0002 [2.93,3.07] & 99.84  \\
DiTFake \citep{li2025improving}                       & Fake &  0.0043 [-0.07,0.08] & 2.9940 [2.91,3.09] & 100.0  \\[0.5em]
Diffusion1kStep \citep{tan2024rethinking}               & Real &  0.0062 [-0.07,0.08] & 3.0114 [2.92,3.11] & 98.67  \\
Diffusion1kStep \citep{tan2024rethinking}               & Fake & -0.0010 [-0.16,0.17] & 3.1901 [2.99,3.39] & 97.27  \\[0.5em]
GANGen-Detection \citep{chuangchuangtan-GANGen-Detection}              & Real & -0.0022 [-0.10,0.11] & 3.1389 [3.02,3.28] & 97.27  \\
GANGen-Detection \citep{chuangchuangtan-GANGen-Detection}               & Fake &  0.0091 [-0.17,0.19] & 3.3372 [3.19,3.51] & 94.38  \\[0.5em]
RobustLDM \citep{rajan2024aligned}                     & Real &  0.0029 [-0.07,0.07] & 3.0167 [2.93,3.11] & 99.77  \\
RobustLDM \citep{rajan2024aligned}                     & Fake & -0.0015 [-0.10,0.10] & 3.0445 [2.93,3.17] & 99.38  \\[0.5em]
RealRobustBench \citep{li2025bridging}               & Real &  0.0017 [-0.06,0.07] & 3.0111 [2.94,3.08] & 99.69  \\
RealRobustBench \citep{li2025bridging}               & Fake &  0.0038 [-0.09,0.10] & 3.0029 [2.90,3.13] & 99.22  \\[0.5em]
LDMFakeDetect \citep{rajan2025staypositive}                 & Real &  0.0029 [-0.07,0.07] & 3.0167 [2.93,3.11] & 99.77  \\
LDMFakeDetect \citep{rajan2025staypositive}                 & Fake & -0.0044 [-0.10,0.09] & 3.0534 [2.95,3.19] & 99.45  \\[0.5em]
DIF \citep{sinitsa2024deep}                           & Real &  0.0001 [-0.06,0.06] & 3.0288 [2.97,3.09] & 99.84  \\
DIF  \citep{sinitsa2024deep}                          & Fake &  0.0032 [-0.09,0.10] & 3.0732 [2.94,3.21] & 99.53  \\[0.5em]
ForenSynths \citep{wang2020cnngenerated}                   & Real &  0.0143 [-0.23,0.24] & 3.2898 [2.97,3.50] & 97.27  \\
ForenSynths \citep{wang2020cnngenerated}                   & Fake &  0.0119 [-0.20,0.23] & 3.1450 [2.84,3.40] & 93.98  \\[0.5em]
DNFTestSet \citep{zhang2025diffusion}                    & Real &  0.0006 [-0.06,0.07] & 3.0247 [2.95,3.10] & 99.69  \\
DNFTestSet \citep{zhang2025diffusion}                    & Fake &  0.0038 [-0.11,0.13] & 3.1119 [2.95,3.24] & 99.45  \\[0.5em]
DeepFakeFace \citep{song2023robustness}                  & Real & -0.0021 [-0.06,0.06] & 3.0008 [2.95,3.06] & 99.84  \\
DeepFakeFace  \citep{song2023robustness}                 & Fake & -0.0027 [-0.09,0.08] & 3.0757 [3.01,3.15] & 99.84  \\[0.5em]
AIGCDetectBench \citep{zhong2024patchcraft}               & Real &  0.0020 [-0.05,0.05] & 3.0221 [2.97,3.07] & 99.61  \\
AIGCDetectBench \citep{zhong2024patchcraft}               & Fake &  0.0079 [-0.09,0.10] & 3.1043 [3.00,3.21] & 99.61  \\[0.5em]
AIGI-Holmes  \citep{zhou2025aigiholmes}                   & Real &  0.0012 [-0.05,0.05] & 3.0068 [2.95,3.06] & 99.77  \\
AIGI-Holmes  \citep{zhou2025aigiholmes}                   & Fake &  0.0000 [-0.09,0.09] & 3.0812 [2.99,3.17] & 100.0  \\[0.5em]
AI-Artwork \citep{aiartwork}                    & Real & -0.0032 [-0.06,0.06] & 3.0122 [2.95,3.08] & 99.77  \\
AI-Artwork  \citep{aiartwork}                   & Fake &  0.0050 [-0.12,0.12] & 3.0849 [2.95,3.23] & 99.30  \\[0.5em]
DiffusionForensics \citep{wang2023dire}            & Real &  0.0079 [-0.12,0.12] & 3.1291 [3.02,3.22] & 99.45  \\
DiffusionForensics \citep{wang2023dire}            & Fake &  0.0071 [-0.31,0.32] & 3.7335 [3.49,4.01] & 71.48  \\[0.5em]
B-Free \citep{guillaro2025biasfree}                        & Real &  0.0049 [-0.05,0.06] & 2.9950 [2.93,3.07] & 99.92  \\
B-Free \citep{guillaro2025biasfree}                       & Fake &  0.0006 [-0.06,0.06] & 3.0396 [2.98,3.10] & 100.0  \\[0.5em]
LASTED \citep{wu2025generalizable}                        & Real &  0.0078 [-0.06,0.07] & 3.0478 [2.98,3.11] & 100.0  \\
LASTED \citep{wu2025generalizable}                        & Fake & -0.0028 [-0.12,0.12] & 3.2195 [3.11,3.34] & 99.22  \\[0.5em]
AIGIBench \citep{li2025artificial}                     & Real &  0.0001 [-0.06,0.06] & 3.0447 [2.99,3.11] & 99.92  \\
AIGIBench \citep{li2025artificial}                     & Fake &  0.0067 [-0.11,0.13] & 3.1626 [3.05,3.28] & 99.77  \\[0.5em]
DeepFakeBench \citep{yan2023deepfakebench}                 & Real & -0.0008 [-0.17,0.18] & 3.1662 [2.97,3.34] & 98.52  \\
DeepFakeBench  \citep{yan2023deepfakebench}                & Fake &  0.0101 [-0.16,0.17] & 3.0508 [2.75,3.31] & 94.69  \\

\end{longtable}
\endgroup

\FloatBarrier
\begingroup
\fontsize{10}{13.75}\selectfont
\setlength{\tabcolsep}{4pt}
\centering
\begin{longtable}[c]{@{}llrrrr@{}}
\caption{
\textbf{Multivariate diagnostics for class-conditional evaluation features.}
Let $r_i = z_i-\bar z$ and let $\widehat{\Sigma}$ be the regularized sample covariance of the class-conditional features.
We compute squared Mahalanobis radii
$\delta_i^2 = r_i^\top \widehat{\Sigma}^{-1} r_i$.
We report (i) the normalized Mardia kurtosis
$\widetilde{\beta}_{2,d}=\frac{1}{d(d+2)}\frac{1}{n}\sum_{i=1}^{n}(\delta_i^2)^2$,
whose Gaussian reference value is $1$,
(ii) the normalized mean radius
$R_{\chi^2}=\frac{1}{nd}\sum_{i=1}^{n}\delta_i^2$ (a limited self-consistency diagnostic; with same-sample mean/covariance estimation it is expected to be near $1$ even under non-Gaussianity),
and two spectrum/conditioning summaries of $\widehat{\Sigma}$:
(iii) the effective rank
$E_{\mathrm{rank}}=\exp\!\big(-\sum_{j=1}^d p_j \log p_j\big)$ with $p_j=\lambda_j/\sum_k \lambda_k$,
and (iv) anisotropy
$\mathrm{Aniso}=\lambda_1 / \big(\frac{1}{d}\sum_{j=1}^{d}\lambda_j\big)= d\lambda_1/\mathrm{tr}(\widehat{\Sigma})$,
where $\lambda_1\ge \cdots \ge \lambda_d$ are the eigenvalues of $\widehat{\Sigma}$.
$E_{\mathrm{rank}}\in[1,d]$ summarizes effective dimensionality, while $\mathrm{Aniso}\ge 1$ measures dominance of the top variance direction.
All quantities are descriptive diagnostics rather than calibrated hypothesis tests.
(\textbf{Note:} We report dataset/classes where number of samples is greater than number of feature dimensions.)
}
\label{tab:multivariate_normality_tests} \\
\\
\toprule
\textbf{Dataset} & \textbf{Class} & \multicolumn{1}{c}{\(\widetilde{\beta}_{2,d}\)} & \multicolumn{1}{c}{\(R_{\chi^2}\)} & \multicolumn{1}{c}{$E_{\mathrm{rank}}$} & \multicolumn{1}{c}{$\mathrm{Aniso}$} \\
\Midrule
\endfirsthead

\toprule
\textbf{Dataset} & \textbf{Class} & \multicolumn{1}{c}{\(\widetilde{\beta}_{2,d}\)} & \multicolumn{1}{c}{\(R_{\chi^2}\)} & \multicolumn{1}{c}{$E_{\mathrm{rank}}$} & \multicolumn{1}{c}{$\mathrm{Aniso}$} \\
\Midrule
\endhead

\midrule
\multicolumn{6}{r}{\footnotesize Continued on next page} \\
\endfoot

\bottomrule
\endlastfoot
\rowcolor{gray!20}
\multicolumn{6}{c}{\textbf{\shortstack{Out-of-Distribution Existing Datasets with \\only Real Images}}} \\
Unsplash \citep{unsplash}                   & Real & 1.1225 & 0.9998 & 227.80 &  75.23 \\
InstagramImagesWithCaptions \citep{instagrama} & Real & 1.1092 & 0.9998 & 251.47 &  74.98 \\
AnimeFace \citep{animeface}                  & Real & 1.1957 & 0.9995 &  61.89 & 293.73 \\
AnimeImages \citep{animeimages}                & Real & 1.1715 & 0.9998 & 254.45 &  61.78 \\
CelebA-Spoof \citep{zhang2020celebaspoof}               & Real & 1.1694 & 0.9998 & 171.74 & 166.86 \\
\midrule
\rowcolor{gray!20}
\multicolumn{6}{c}{\textbf{\shortstack{Out-of-Distribution Existing Datasets with \\only Synthetic Images}}} \\
AGIQA-3k \citep{li2024agiqa3k}            & Fake & 1.0319 & 0.9992 & 210.56 & 76.71 \\
SPAI \citep{karageorgiou2025anyresolution}                & Fake & 1.0574 & 0.9994 & 267.96 & 51.12 \\
SynthBuster Extended \citep{bammey2024synthbuster,guillaro2025biasfree} & Fake & 1.0879 & 0.9996 & 194.50 & 91.25 \\
SynthScars \citep{kang2025legion}          & Fake & 1.0690 & 0.9997 & 250.11 & 60.05 \\
GigaGAN \citep{kang2023scaling}             & Fake & 1.1644 & 0.9998 & 268.03 & 80.58 \\
LatentDiffusion \citep{corvi2023detection}     & Fake & 1.1179 & 0.9997 & 212.48 & 76.55 \\
MidjourneyV6 \citep{terminusresearch}        & Fake & 1.0934 & 0.9998 & 281.47 & 50.06 \\
Co-Spy-Bench \citep{cheng2025cospy}        & Fake & 1.0772 & 0.9998 & 299.09 & 39.92 \\
Dalle3 \citep{Egan_Dalle3_1_Million_2024}              & Fake & 1.0995 & 0.9998 & 280.66 & 54.19 \\
\midrule
\rowcolor{gray!20}
\multicolumn{6}{c}{\textbf{\shortstack{Out-of-Distribution Existing Datasets with \\both Real and Synthetic Images}}} \\
\texttt{UniversalFakeDetect} \citep{ojha2023universal}                            & Real & 1.0030 & 0.9991 & 339.72 & 76.99 \\
\texttt{UniversalFakeDetect} \citep{ojha2023universal}                            & Fake & 1.0742 & 0.9996 & 282.82 & 52.51 \\[0.5em]
DalleRecognition \citep{airecognition}              & Real & 1.0335 & 0.9995 & 229.58 & 111.29\\
DalleRecognition \citep{airecognition}              & Fake & 1.0719 & 0.9997 & 291.76 & 39.74 \\[0.5em]
Chameleon \citep{yan2024sanity}                     & Real & 1.0619 & 0.9997 & 262.78 & 77.87 \\
Chameleon \citep{yan2024sanity}                     & Fake & 1.1232 & 0.9996 & 194.40 & 119.96 \\[0.5em]
AIGI-Detection-Quality-Paradox \citep{xiao2025are} & Real & 1.0430 & 0.9995 & 319.79 & 61.35 \\
AIGI-Detection-Quality-Paradox \citep{xiao2025are} & Fake & 1.1087 & 0.9997 & 263.10 & 55.39 \\[0.5em]
DiTFake \citep{li2025improving}                       & Real & 1.0507 & 0.9997 & 281.90 & 51.76\\
DiTFake \citep{li2025improving}                       & Fake & 1.0900 & 0.9997 & 213.86 & 58.78\\[0.5em]
Diffusion1kStep \citep{tan2024rethinking}               & Real & 1.0448 & 0.9997 & 222.71 & 109.72 \\
Diffusion1kStep \citep{tan2024rethinking}               & Fake & 1.2005 & 0.9997 & 148.31 & 148.77 \\[0.5em]
GANGen-Detection \citep{chuangchuangtan-GANGen-Detection}              & Real & 1.1642 & 0.9997 & 187.29 & 101.55 \\
GANGen-Detection \citep{chuangchuangtan-GANGen-Detection}              & Fake & 1.2674 & 0.9996 & 146.67 & 122.43 \\[0.5em]
RobustLDM \citep{rajan2024aligned}                     & Real & 1.0408 & 0.9996 & 249.19 & 98.38 \\
RobustLDM \citep{rajan2024aligned}                     & Fake & 1.2930 & 0.9997 & 177.50 & 72.18 \\[0.5em]
RealRobustBench \citep{li2025bridging}               & Real & 1.0891 & 0.9998 & 239.64 & 73.09\\
RealRobustBench \citep{li2025bridging}               & Fake & 1.1513 & 0.9997 & 173.27 & 98.95\\[0.5em]
LDMFakeDetect \citep{rajan2025staypositive}                 & Real & 1.0408 & 0.9996 & 249.19 & 98.38\\
LDMFakeDetect \citep{rajan2025staypositive}                 & Fake & 1.3017 & 0.9997 & 178.20 & 71.55\\[0.5em]
DIF \citep{sinitsa2024deep}                           & Real & 1.0587 & 0.9998 & 309.16 & 60.52\\
DIF \citep{sinitsa2024deep}                           & Fake & 1.1797 & 0.9997 & 192.48 & 89.98\\[0.5em]
ForenSynths \citep{wang2020cnngenerated}                   & Real & 1.2396 & 0.9998 & 182.97 & 193.47\\
ForenSynths \citep{wang2020cnngenerated}                   & Fake & 1.2685 & 0.9997 & 104.62 & 218.03\\[0.5em]
DNFTestSet \citep{zhang2025diffusion}                    & Real & 1.0450 & 0.9997 & 274.26 & 84.16 \\
DNFTestSet  \citep{zhang2025diffusion}                   & Fake & 1.1293 & 0.9997 & 198.64 & 117.17\\[0.5em]
DeepFakeFace \citep{song2023robustness}                  & Real & 1.0758 & 0.9998 & 237.47 & 74.91 \\
DeepFakeFace \citep{song2023robustness}                  & Fake & 1.1205 & 0.9998 & 221.72 & 72.87 \\[0.5em]
AIGCDetectBench \citep{zhong2024patchcraft}               & Real & 1.0540 & 0.9998 & 323.16 & 69.08 \\
AIGCDetectBench \citep{zhong2024patchcraft}               & Fake & 1.1375 & 0.9997 & 228.13 & 83.67\\[0.5em]
AIGI-Holmes  \citep{zhou2025aigiholmes}                   & Real & 1.0686 & 0.9998 & 303.84 & 49.64 \\
AIGI-Holmes  \citep{zhou2025aigiholmes}                   & Fake & 1.1375 & 0.9998 & 222.87 & 64.21\\[0.5em]
AI-Artwork \citep{aiartwork}                    & Real & 1.0767 & 0.9998 & 220.19 & 71.57 \\
AI-Artwork \citep{aiartwork}                    & Fake & 1.2529 & 0.9997 & 144.42 & 114.62 \\[0.5em]
DiffusionForensics \citep{wang2023dire}            & Real & 1.1251 & 0.9998 & 236.08 & 129.66 \\
DiffusionForensics  \citep{wang2023dire}           & Fake & 1.4326 & 0.9997 & 144.56 & 190.06\\[0.5em]
B-Free \citep{guillaro2025biasfree}                        & Real & 1.0637 & 0.9998 & 278.64 & 52.92 \\
B-Free \citep{guillaro2025biasfree}                        & Fake & 1.0798 & 0.9998 & 267.28 & 55.60\\[0.5em]
LASTED \citep{wu2025generalizable}                        & Real & 1.0803 & 0.9998 & 300.73 & 58.42 \\
LASTED  \citep{wu2025generalizable}                       & Fake & 1.3136 & 0.9997 & 210.48 & 111.79\\[0.5em]
AIGIBench \citep{li2025artificial}                     & Real & 1.0864 & 0.9998 & 303.42 & 65.94 \\
AIGIBench \citep{li2025artificial}                     & Fake & 1.1784 & 0.9998 & 217.27 & 77.97\\[0.5em]
DeepFakeBench \citep{yan2023deepfakebench}                 & Real & 1.1376 & 0.9996 & 128.46 & 208.58 \\
DeepFakeBench \citep{yan2023deepfakebench}                 & Fake & 1.2176 & 0.9996 & 73.41 & 310.36 \\
\end{longtable}
\endgroup

Applied to features from the strongest frozen encoder in our sweep, \texttt{PE-Core-bigG-14-448} \citep{pe-core-bigG}, the diagnostics support Gaussian heads as approximate second-order models, but not as exact density models.
In \Cref{tab:univariate_normality_tests}, most dataset--class subsets have median skewness near zero, median Pearson kurtosis near three, and high \texttt{PctClose}, often above 97\%.
The clearest marginal exceptions are the real and fake subsets of \texttt{FourierSpectrumDiscrepancies} \citep{dzanic2020fourier} and the fake subset of \texttt{DiffusionForensics} \citep{wang2023dire}, where deviations from the Gaussian reference are large.

For the Gaussian ladder, the multivariate diagnostics in \Cref{tab:multivariate_normality_tests} are more relevant.
The normalized Mardia kurtosis \(\widetilde{\beta}_{2,d}\) is often only moderately above its Gaussian reference value of \(1\), but the larger deviations occur disproportionately in fake subsets.
These same subsets frequently exhibit lower effective rank and higher anisotropy than their real counterparts, indicating that variance is concentrated in fewer dominant directions.
This pattern is not universal, but it recurs often enough to make covariance-aware scoring plausible.
By contrast, \(R_{\chi^2}\) remains close to \(1\) almost everywhere, as expected under same-sample estimation, and should be interpreted only as a consistency check.

The resulting conclusion is deliberately narrow.
Evaluation features are not exactly Gaussian, and several datasets exhibit clear departures from Gaussianity.
For many dataset--class subsets, however, first- and second-order summaries describe a useful part of the class structure, while the failures mark cases where higher-order or multimodal structure likely matters.
This supports Gaussian heads as controlled, interpretable probes in matched support-prior and encoder audits, not as literal generative models of the feature distribution.

Read together with the transfer results, these diagnostics support a methodological claim rather than a distributional one:
when the support prior and frozen encoder feature space are fixed, Gaussian heads indicate the extent to which the observed head-level behavior is already available in low-order geometry.
They do not imply that low-order geometry is the only transfer mechanism or that Gaussian assumptions hold globally.
Head-level comparisons are therefore most interpretable when both quantities are controlled.
\end{document}